\documentclass[11pt]{article} 
\usepackage{tikz}
\usepackage{eqnarray,amsmath,amsfonts,amsthm,mathrsfs}
\usepackage{color}
\usepackage{bm}
\usepackage{amssymb}
\usepackage{graphicx}
\usepackage{multirow,booktabs}
\usepackage{float}
\usepackage{fancyhdr}
\usepackage{subcaption}
\usepackage{enumerate}
\usepackage[top=2.5cm, bottom=2.5cm, left=3cm, right=3cm]{geometry}
\usepackage{booktabs,longtable,authblk}
\usepackage{mathrsfs,hhline}
\usepackage{hyperref}
\usepackage[ruled,linesnumbered]{algorithm2e}
\usepackage[numbers,sort&compress]{natbib} 
\newtheorem{theorem}{Theorem}

\newtheorem{definition}{Definition}

\newtheorem{lemma}{Lemma}

\newtheorem{remark}{Remark}

\allowdisplaybreaks[4]

\title{Simultaneous CNN Approximation on Manifolds with Applications to Boundary Value Problems$^\dag$\footnotetext{\dag~The work of Lei Shi is partially supported by the National Natural Science Foundation of China (Grant No.12571099). The corresponding author is Lei Shi.}}

\author[1]{Hanfei Zhou}
\author[1,2]{Lei Shi}

\affil[1]{School of Mathematical Sciences, \linebreak
Fudan University, Shanghai, 200433, China 
}
\affil[2]{
Shanghai Key Laboratory for Contemporary Applied Mathematics, \linebreak
Fudan University, Shanghai, 200433, China \linebreak
Email:zhouhf23@m.fudan.edu.cn, leishi@fudan.edu.cn
}
\date{}
\begin{document}
\maketitle

\begin{abstract}
This paper develops convolutional neural network (CNN) methods for simultaneous Sobolev approximation and elliptic boundary value problems on compact Riemannian manifolds. We prove approximation estimates for single- and multichannel CNNs, with rates governed by the intrinsic dimension and the smoothness gap. Motivated by elliptic stability, we propose a physics-informed CNN framework with a spectral boundary loss. The boundary residual is expanded in boundary Laplace--Beltrami eigenmodes and penalized by Sobolev trace weights, matching the natural \(\mathcal H^{2s-1/2}(\partial\mathcal M^d)\) trace norm for \(2s\)-order elliptic problems. This avoids smooth auxiliary constructions for exact boundary enforcement and singular Sobolev--Slobodeckij double integrals, while allowing FFT-based or precomputed spectral implementations. We also derive an error decomposition separating approximation, generalization, and spectral truncation errors, showing that the proposed loss is aligned with localized fast-rate generalization analysis. Numerical experiments on the upper hemisphere and upper half-torus demonstrate improved accuracy, convergence, and stability over standard PINNs, with one to two orders of magnitude gains for high-frequency boundary data.
\end{abstract}

\section{Introduction}\label{Section: Introduction}
Convolutional neural networks (CNNs) have been highly successful in image analysis~\cite{krizhevsky2012imagenet}. A common explanation is the manifold hypothesis, namely, that relevant data are concentrated near low-dimensional manifolds embedded in high-dimensional ambient spaces~\cite{bengio2013representation,mallat2016understanding}. In this work, we consider the use of CNNs for solving partial differential equations (PDEs) posed on manifolds, with boundary conditions incorporated into the formulation. Because PDEs involve differential operators, their numerical treatment requires approximation estimates in Sobolev norms rather than merely in standard $L_p$ norms. This leads naturally to the study of simultaneous approximation by CNNs on compact manifolds.

Let $(\mathcal M^d,g)$ be a compact Riemannian manifold with $\dim \mathcal M^d=d$. We write $\mathcal W^{k,p}(\mathcal M^d)$ for the Sobolev space of order $k$ and integrability exponent $p$ on $\mathcal M^d$. We first consider the following simultaneous approximation problem in lower-order Sobolev norms: for $f\in \mathcal{W}^{k,p}(\mathcal M^d)$, we construct a CNN $u\in\mathcal F$ such that
\begin{equation*}
    \|f-u\|_{\mathcal{W}^{s,p}(\mathcal M^d)}\le \varepsilon,
    \qquad  0\le s<k,
    \end{equation*}
where $\mathcal F$ denotes a class of CNNs specified by architectural parameters such as width, depth, and the number of nonzero parameters. At the same time, we characterize the complexity of such approximations by quantifying how the required network complexity depends on the tolerance $\varepsilon>0$, the regularity indices $(k,s,p)$, and the intrinsic dimension $d$.

We next introduce the CNN architectures considered in this paper. Throughout, we work with one-dimensional CNNs and consider two classes of architectures, namely, single-channel and multichannel CNNs, for both of which we establish simultaneous approximation results in Sobolev norms. We begin with the single-channel setting. For the sphere, simultaneous approximation results for single-channel CNNs were obtained in~\cite{lei2025solving}. However, that analysis relies essentially on the zonal structure of spherical harmonics. Although the same idea can be extended, via the addition formula~\cite{evarist1975addition}, to two-point homogeneous spaces~\cite{wang1952two}, this class of spaces has been completely classified and is too narrow to cover general compact manifolds. We then turn to the multichannel setting. This is not merely a technical refinement: multichannel CNNs are closer to modern deep learning architectures and, according to our theory, admit sharper control of the parameter complexity. The effectiveness of this multichannel construction is also supported by our numerical results. 

From the viewpoint of approximation theory, the corresponding theory for simultaneous approximation by CNNs on manifolds remains limited. Existing quantitative results for convolutional architectures, including~\cite{zhou2020universality,zhou2020theory}, are largely restricted to Euclidean domains. Our approach instead builds CNN constructions from restricted kernels on manifolds. A main analytical difficulty is the high-order approximation of such kernels, since they exhibit a singularity at the origin. To exploit their analyticity away from the origin, we introduce a truncation near the singular point and construct separate approximations for the singular and analytic parts.

Our first contribution is a quantitative simultaneous approximation theory for $\mathrm{ReLU}$--$\mathrm{ReQU}$ CNNs on general compact manifolds. In~\autoref{theorem: simultaneous approximation}, we show that for every $f\in \mathcal W^{k,p}(\mathcal M^d)$ and every $\varepsilon>0$, there exists a deep single-channel CNN $u_\varepsilon$ such that
\begin{equation*}
    \|f-u_\varepsilon\|_{\mathcal W^{s,p}(\mathcal M^d)} \lesssim \varepsilon, \qquad 0\le s<k,
\end{equation*}
with total parameter count $N \lesssim \varepsilon^{-\frac{d}{k-s}}.$ We also prove in~\autoref{thm:mult-inner-products} that constant-depth multichannel CNNs enjoy the same simultaneous approximation property, with sharper bounds on the network complexity.

From the viewpoint of nonlinear approximation theory, the rate $N\lesssim \varepsilon^{-\frac{d}{k-s}}$ agrees with the classical lower bounds arising from entropy and width estimates~\cite{pinkus2012n,devore1998nonlinear,siegel2024sharp}. Hence, for general Sobolev classes on manifolds, the scaling $\varepsilon^{-d/(k-s)}$ is sharp. However, for single-channel CNNs, the depth necessarily grows polynomially in $\varepsilon^{-1}$. As a consequence, complexity estimates for the corresponding hypothesis class, for instance covering number bounds, become suboptimal. This in turn deteriorates the theoretical convergence rate of the overall algorithm, so that minimax-optimal performance cannot be achieved within this architecture. This provides a further motivation for passing to the multichannel setting, where the constant-depth construction allows for sharper control of the network complexity. Moreover, in both the single-channel and multichannel settings, the CNN is augmented by a very shallow fully connected tail. Our complexity estimates show that the convolutional layers dominate the overall parameter count, while the fully connected part contributes only lower-order terms.

Building on the above approximation theory, we next consider physics-informed neural networks (PINNs) for PDEs. A central issue is the mismatch between the low-order boundary penalties in standard PINNs and the trace norms required for PDE stability. Consider the even-order boundary value problem
\begin{equation}\label{PDE}
\begin{cases}
\mathcal L u = f, & \text{in }\mathcal{M}^d,\\
u = g, & \text{on }\partial\mathcal{M}^d,
\end{cases}
\end{equation}
with $\mathcal L$ a uniformly elliptic operator of order $2s$. By standard elliptic a priori estimates on manifolds~\cite{narcowich2017novel}, stability in $\mathcal W^{2s}_p(\mathcal M^d)$ requires boundary control in $\mathcal W^{2s-1/p}_p(\partial\mathcal M^d)$. In the Hilbert case $p=2$, this becomes $\mathcal H^{2s-1/2}(\partial\mathcal M^d)$, rather than the $\mathcal L^2(\partial\mathcal M^d)$ penalty typically used in PINNs. This mismatch is a basic source of instability and slow convergence.

The above mismatch has also become apparent from the viewpoint of convergence analysis. 
Recent studies on consistent PINNs show that error estimates in the natural PDE stability norm require loss terms that dominate the Sobolev norms appearing in the corresponding elliptic a priori estimates~\cite{bonito2026convergence,mishra2026consistent}. 
In particular, even for second-order elliptic problems, boundary discrepancies should be controlled in an $H^{1/2}(\partial\Omega)$-type trace norm rather than merely in an $L^2(\partial\Omega)$ norm. 
For higher-order elliptic equations, this requirement becomes more pronounced, since the boundary data naturally live in higher-order trace spaces.

Many existing analyses avoid this difficulty by considering boundary-free settings, homogeneous boundary conditions, or neural network classes that satisfy the boundary condition exactly. However, exact boundary enforcement is not automatic for neural networks. It is often implemented through geometry-aware trial functions based on approximate distance functions or lifting constructions~\cite{sukumar2022exact,berrone2023enforcing}. While effective for many Dirichlet problems, this strategy shifts the difficulty to constructing sufficiently smooth distance or lifting functions. For complex geometries, nonsmooth boundaries, higher-order PDEs, or derivative boundary conditions, the required regularity of such auxiliary functions can become a serious limitation.

A different line of work incorporates Sobolev-type boundary penalties directly into the PINN loss. For instance, recent trace-regularity PINNs enforce fractional boundary norms such as $H^{1/2}(\partial\Omega)$ through Sobolev--Slobodeckij type double integrals~\cite{kim2025trace,zhou2025ssbe}. Such formulations are closer to the stability theory of elliptic PDEs, but the nonlocal double-integral form involves singular kernels near the diagonal and may be expensive or delicate to discretize, especially on curved boundaries or manifolds.

To address this issue, we employ a physics-informed CNN framework with a boundary treatment adapted to the trace norms required by PDE stability. The high-order boundary penalty is computed through boundary inner products and frequency weights, rather than through singular nonlocal kernels or repeated high-order automatic differentiation. The resulting loss has a frequency-resolved structure: boundary errors in different modes are penalized according to their Sobolev trace weights, and a computable high-order penalty is obtained by truncating the expansion to finitely many boundary modes. On boundaries with explicit or precomputable spectral bases, such as circles or product-type boundaries, this formulation converts the trace norm into a diagonal frequency energy and can exploit fast transforms such as Fast Fourier Transforms (FFTs).

This construction is also useful for analysis. We derive an error decomposition for the corresponding PICNN estimator, separating the solution error into approximation, generalization, and spectral truncation terms (see~\autoref{theorem:picnn_error_decomposition}). The approximation term is controlled by the preceding simultaneous CNN approximation theory, while the spectral form of the boundary loss is aligned with localized fast-rate generalization analysis. Thus the Sobolev boundary loss is not only an algorithmic device for improving numerical performance, but also provides a loss structure better matched to elliptic stability estimates.

Our experiments on the upper hemisphere and the upper half-torus confirm these effects. For smooth variable-coefficient elliptic problems, the spectral Sobolev boundary penalty consistently improves both relative \(L^2\) and \(H^2\) accuracy and yields smoother convergence trajectories compared with the standard \(L^2\) boundary penalty. The advantage becomes more pronounced for high-frequency Dirichlet data. The spectral boundary loss substantially suppresses spurious Fourier modes and achieves one to two orders of magnitude improvements in the high-frequency regime. A cutoff sweep further shows that a moderate number of boundary modes already provides stable performance, revealing a practical trade-off between spectral resolution and empirical variance.

The paper is organized as follows. In~\autoref{Section: Preliminaries}, we state the standing assumptions on the manifolds and review Sobolev norm characterizations. We then introduce restricted kernels on manifolds, the associated reproducing kernel Hilbert space (RKHS), and the CNN architecture. ~\autoref{Section: Main Results} presents the main simultaneous approximation results. ~\autoref{Section: Implementation of Spectral-Trace PINNs} introduces the spectral-trace PICNN formulation for boundary value problems and provides an error decomposition for the resulting estimator.~\autoref{Section: Experimental} reports numerical experiments validating the proposed method. ~\autoref{Section: Conclusion} concludes the paper and discusses future directions. Detailed proofs of the main theorems and auxiliary lemmas are provided in the~\hyperref[allapp]{Appendix}.

\section{Preliminaries}\label{Section: Preliminaries}
\subsection{Assumptions on Manifolds}
Let $\mathcal{M}^d$ be a compact, connected, $d$-dimensional Riemannian manifold. We assume bounded geometry (see~\autoref{Appendix: Notations}), which ensures the existence of a finite smooth atlas $\{U_j, \psi_j\}_{j=1}^K$ consisting of geodesic balls $U_j = B(m_j, r)$ with coordinate maps defined by the inverse exponential map $\psi_j = \exp_{m_j}^{-1}$ \cite{de2021reproducing}. Associated with this atlas is a subordinate smooth partition of unity $\{\rho_j\}_{j=1}^K$ such that $\mathrm{supp}(\rho_j) \subset U_j$ and $\sum_{j=1}^K \rho_j = 1$. While the following approximation theory focuses on boundaryless manifolds to streamline the presentation, the underlying geometric framework naturally extends to manifolds with boundaries.

\subsection{Sobolev Functions on Manifolds}\label{Subsection: Sobolev Functions on Manifolds}
Sobolev spaces on manifolds can be characterized either locally via coordinate charts or globally via the Laplace–Beltrami spectral decomposition. We adopt the spectral framework, leveraging its connection to RKHS and heat kernel analysis to derive CNN approximation estimates based on established norm equivalences for $p \ge 1$.
\begin{theorem}[Theorem 3 in~\cite{de2021reproducing}]\label{theorem: sobolev equivalence}
Based on the geometric assumptions on $\mathcal{M}^d$ established above, the following Sobolev norms are equivalent for any $s > 0$:
\begin{enumerate}[(i)]
    \item Local Chart Norm: Defined via the atlas $\{(U_j,\psi_j)\}$ and partition of unity $\{\rho_j\}$:
    \begin{equation}\label{equation: sobolev definition1}
             \|f\|_{\mathcal{W}^{s}_2,1}^2 := \sum_{j} \bigl\|(\rho_j f)\circ \psi_j^{-1}\bigr\|_{W^{s}_2(\mathbb{R}^d)}^2.
    \end{equation}
    \item Bessel Potential Norm: Defined via the spectral calculus of the Laplacian:
    \begin{equation}\label{equation: sobolev definition2}
        \|f\|_{\mathcal{W}^{s}_2,2} := \|(I+\Delta_{\mathcal{M}})^{s/2} f\|_{L^2(\mathcal{M}^d)}.
    \end{equation}
    \item Operator Graph Norm:
    \begin{equation}\label{equation: sobolev definition3}
        \|f\|_{\mathcal{W}^{s}_2,3}^2 := \|f\|_{L^2(\mathcal{M}^d)}^2 + \|\Delta_{\mathcal{M}}^{s/2} f\|_{L^2(\mathcal{M}^d)}^2.
    \end{equation}
\end{enumerate}
Furthermore, if $s \in \mathbb{N}_+$, these norms are equivalent to the Riemannian Sobolev norm defined by covariant derivatives:
\begin{equation}\label{equation: sobolev definition4}
    \|f\|_{\mathcal{W}^s_2,4}^2 := \sum_{\ell=0}^s \|\nabla^\ell f\|_{L^2(\mathcal{M}^d)}^2.
\end{equation}
\end{theorem}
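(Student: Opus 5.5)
The plan is to compare the three norms through their spectral descriptions, using the compactness of $\mathcal M^d$ (the bounded-geometry hypothesis matters only in the chart step). On a compact manifold $\Delta_{\mathcal M}$ (the nonnegative Laplace--Beltrami operator) has discrete spectrum $0=\lambda_0\le\lambda_1\le\cdots\to\infty$ with an $L^2$-orthonormal eigenbasis $\{\phi_\ell\}$. Writing $\hat f_\ell=\langle f,\phi_\ell\rangle$, the spectral calculus gives
\[
\|f\|_{\mathcal W^s_2,2}^2=\sum_\ell(1+\lambda_\ell)^s|\hat f_\ell|^2,\qquad
\|f\|_{\mathcal W^s_2,3}^2=\sum_\ell(1+\lambda_\ell^s)|\hat f_\ell|^2 .
\]

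\textbf{Step 1: (ii)$\sim$(iii).} For $s>0$ and $\lambda\ge0$ one has the elementary scalar inequality
\[
\min(1,2^{1-s})\,(1+\lambda^s)\le(1+\lambda)^s\le\max(1,2^{s-1})\,(1+\lambda^s).
\]
Applying it termwise to the two series above gives the equivalence immediately.

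\textbf{Step 2: (iv)$\sim$(ii) for integer $s$.} I argue by induction on $s$.
\begin{itemize}
\item For $s=1$, integration by parts gives $\|\nabla f\|^2=\langle\Delta_{\mathcal M}f,f\rangle$, which is exactly the required identity.
\item For $s=2$, the Bochner formula
\[
\|\nabla^2 f\|^2=\|\Delta_{\mathcal M}f\|^2-\int_{\mathcal M}\mathrm{Ric}(\nabla f,\nabla f)
\]
bounds $\|\nabla^2 f\|$ by the spectral norm, since $\mathrm{Ric}$ is bounded on the compact manifold. Conversely, $|\Delta_{\mathcal M}f|\le\sqrt d\,|\nabla^2 f|$ gives the reverse bound.
\item For higher $s$, commute covariant derivatives with the Laplacian. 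The commutators involve only curvature tensors and their derivatives, all bounded under bounded geometry, so they produce lower-order terms that are absorbed by the induction hypothesis.
\end{itemize}
Alternatively, one may invoke elliptic regularity for $I+\Delta_{\mathcal M}$ to obtain $\|f\|_{\mathcal W^{s+2}}\lesssim\|(I+\Delta_{\mathcal M})f\|_{\mathcal W^s}$.

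\textbf{Step 3: (i)$\sim$(iv) for integer $s$.} In normal coordinates on geodesic balls of radius below the injectivity radius, bounded geometry gives the following uniform controls:
\begin{itemize}
\item the metric coefficients $g_{ij}$ and $g^{ij}$ are uniformly bounded;
\item the Christoffel symbols and all their derivatives are uniformly bounded.
\end{itemize}
Consequently, covariant derivatives differ from coordinate derivatives only by lower-order terms with bounded coefficients, and the volume density is comparable to Lebesgue measure. To pass between the chart norm and the global norm, I use three facts: the Leibniz rule with the bounded derivatives of $\rho_j$, the finite overlap of the atlas, and $\sum_j\rho_j=1$. Together these give the two-sided equivalence.

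\textbf{Step 4: non-integer $s$.} I extend the comparison of (i) with (ii) to all $s>0$ by interpolation. The chart norm interpolates like $W^s_2(\mathbb R^d)$ through the retraction/coretraction pair
\[
f\mapsto\bigl((\rho_j f)\circ\psi_j^{-1}\bigr)_j,\qquad (g_j)_j\mapsto\sum_j\tilde\rho_j\,(g_j\circ\psi_j).
\]
Here $\tilde\rho_j$ are cutoffs equal to $1$ on $\mathrm{supp}\rho_j$. The Bessel potential spaces form a complex interpolation scale, since they are domains of powers of a positive self-adjoint operator. Both scales agree at integers by Step 3, so they agree for all $s$.

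I expect the main obstacle to be Step 4 together with the higher-order commutator bookkeeping in Step 2. The interpolation argument requires the retraction $\sum_j\tilde\rho_j(g_j\circ\psi_j)$ to be bounded uniformly in $s$-ranges. Failing that, I would fall back on the heat-kernel/Littlewood--Paley characterization of (ii) on manifolds.
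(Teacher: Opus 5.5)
The paper gives no proof of this statement. It imports it from \cite{de2021reproducing}, where it comes from the general theory of Sobolev and Bessel-potential spaces on possibly noncompact manifolds of bounded geometry (Triebel-type localization on uniformly locally finite atlases of geodesic balls, plus spectral or heat-semigroup descriptions of $(I+\Delta_{\mathcal M})^{-s/2}L^2$). Your proposal is correct and takes a genuinely different, more elementary route that uses compactness and the absence of boundary, which is the setting where the paper actually uses the result:
\begin{itemize}
\item the functional calculus reduces (ii)$\sim$(iii) to a scalar inequality;
\item integer orders follow from integration by parts, Bochner's formula and elliptic regularity for $I+\Delta_{\mathcal M}$;
\item a finite atlas makes (i)$\sim$(iv) a routine coordinate computation;
\item complex interpolation handles non-integer $s$.
\end{itemize}
In Step 2, the cleanest way to run the induction is $k\to k+2$ via $\|f\|_{\mathcal W^{k+2}_2,4}\lesssim\|(I+\Delta_{\mathcal M})f\|_{\mathcal W^{k}_2,4}$, which avoids the commutator bookkeeping. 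Your route buys a self-contained argument with transparent constants for $p=2$ on closed compact manifolds. The cited route buys generality (noncompact manifolds, infinitely many charts with uniform constants, $L^p$ versions) at the cost of much heavier machinery.

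Two small corrections. First, in Step 1 the lower constant must be $\min(1,2^{s-1})$, not $\min(1,2^{1-s})$: for $0<s<1$ and $\lambda=1$ your inequality reads $2\le 2^s$, which is false. The correct bounds follow from sub- or superadditivity together with concavity or convexity of $t\mapsto t^s$.

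Second, the obstacle you anticipate in Step 4 does not arise. Take $Sf=((\rho_jf)\circ\psi_j^{-1})_j$ and $R(g_j)_j=\sum_j\tilde\rho_j\,(g_j\circ\psi_j)$. Then $RS=I$, and you only need $S$ and $R$ bounded at two integer levels $k_0<s<k_1$. Interpolating gives
\[
\|f\|_{[\mathcal H^{k_0},\mathcal H^{k_1}]_\theta}=\|RSf\|_{[\mathcal H^{k_0},\mathcal H^{k_1}]_\theta}\lesssim\|Sf\|_{\prod_j H^s(\mathbb R^d)}\lesssim\|f\|_{[\mathcal H^{k_0},\mathcal H^{k_1}]_\theta},
\]
and $\|Sf\|_{\prod_j H^s(\mathbb R^d)}$ is exactly norm (i). So no uniformity in $s$ and no Littlewood--Paley fallback is needed. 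Combined with $[D(A^{k_0/2}),D(A^{k_1/2})]_\theta=D(A^{s/2})$ for $A=I+\Delta_{\mathcal M}$, this gives (i)$\sim$(ii) for all $s>0$.
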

Hereafter, we denote the manifold Sobolev space $\mathcal{W}^k_2(\mathcal{M}^d)$ by $\mathcal{H}^k(\mathcal{M}^d)$, and the Euclidean Sobolev space ${W}^k_2(\mathbb{R}^D)$ by $H^k(\mathbb{R}^D)$.

\subsection{The CNN Architectures}\label{subsection: The CNN Architectures}
In this subsection, we formally define the $\mathrm{ReLU}$-$\mathrm{ReQU}$ CNN architectures investigated in this work. We consider two distinct configurations: a single-channel model characterized by an expanding spatial width, and a multichannel model with a fixed spatial width. Conceptually, the initial $\mathrm{ReLU}$ convolutional layers serve to capture the local geometric structure of the manifold, while the subsequent $\mathrm{ReQU}$ fully connected layers aggregate global information and enforce the sufficient regularity required for simultaneous approximation.
\paragraph{1-D Single-Channel Architecture.}
We first adopt the 1-D single-channel CNN architecture proposed in~\cite{lei2025solving}. This architecture is characterized by $L$ layers of convolutions where the network width expands at each step. The recurrence relations are governed by:
\begin{equation}\label{CNN layer}
    \begin{aligned}
        F^{(0)}:\mathbb{R}^d \to \mathbb{R}^d, \quad &F^{(0)}(x) = x; \\
        F^{(l)}:\mathbb{R}^d \to \mathbb{R}^{d_l},  \quad 
        &F^{(l)}(x) = \sigma^{(l)}\left( \left( \sum_{j=1}^{d_{l-1}} w^{(l)}_{i-j} \big(F^{(l-1)}(x)\big)_j 
        \right)_{i=1}^{d_l} - b^{(l)}\right),\\ 
        &l=1,\ldots,L.
    \end{aligned}
\end{equation}
Here, $\sigma^{(l)}$ denotes the activation function. The convolutional kernels $\{ w^{(l)} \}_{l=1}^{L}$ have support size $S^{(l)} \ge 3$ (i.e., $\mathrm{supp}(w^{(l)}) = \{ 0, \ldots ,S^{(l)}-1 \}$). The bias vectors are $b^{(l)} \in \mathbb{R}^{d_l}$, and the network width expands according to $d_{l}=d_{l-1}+S^{(l)}-1$, initialized with $d_0=d$. This convolution operation admits an equivalent matrix-vector representation:
$$
 \left( \sum_{j=1}^{d_l-1}w^{(l)}_{i-j}\left(F^{(l-1)}(x)\right)_j\right)_{i=1}^{d_l} = T^{(l)}F^{(l-1)}(x),
$$
where $T^{(l)} \in \mathbb{R}^{d_l \times d_{l-1}}$ is a Toeplitz matrix characterized by the kernel weights:
$$
T^{(l)} = \begin{bmatrix}
 w_{0}^{(l)}& 0 & 0 & \cdots & 0 \\ 
  w_{1}^{(l)}	 & w_{0}^{(l)} & 0 & \cdots & 0  \\
\vdots & \vdots & \vdots & \ddots & \vdots \\
w_{S^{(l)}-1}^{(l)}	 & \cdots & w_{0}^{(l)} & \cdots & 0 \\  
\vdots & \ddots & \ddots & \ddots & \vdots \\
0 & \cdots & w^{(l)}_{S^{(l)}-1} & \cdots & w^{(l)}_0 \\
\vdots & \vdots & \vdots & \ddots & w^{(l)}_{S^{(l)}-1}
\end{bmatrix}.
$$
Transitioning to the fully connected stage, we introduce a downsampling operator $\mathcal{D}: \mathbb{R}^{d_L}\to \mathbb{R}^{\lfloor d_{L}/d \rfloor}$ defined by $\mathcal{D}(x)_i = x_{i \cdot d}$. This is followed by $L_0$ fully connected layers and a final linear output layer:
\begin{equation}\label{FCNN layer}
    \begin{aligned}
        F^{(L+1)}&:\mathbb{R}^d \to \mathbb{R}^{d_{L+1}},\\
        &F^{(L+1)}(x) = \sigma^{(L+1)}\left( W^{(L+1)}\mathcal{D}\big(F^{(L)}(x)\big) - b^{(L+1)}\right);\\
        F^{(l)}&:\mathbb{R}^d \to \mathbb{R}^{d_l}, \\
        &F^{(l)}(x) = \sigma^{(l)}\left( W^{(l)}F^{(l-1)}(x) - b^{(l)}\right), \quad l=L+2,\ldots,L+L_0; \\
        F^{(L+L_0+1)}&:\mathbb{R}^d \to \mathbb{R},\\
        &F^{(L+L_0+1)}(x) = W^{(L+L_0+1)} F^{(L+L_0)}(x) - b^{(L+L_0+1)}.
    \end{aligned}
\end{equation}
The complexity of the network is quantified by the total number of free parameters, denoted by $\mathcal{S}$, which is calculated by accounting for weight sharing in the convolutional layers and sparsity in the connections. The hypothesis space is formally defined as the class of functions realizable by this architecture under the specified constraints:
\begin{equation}\label{equation: Assumption for hypothesis space}
         \begin{aligned}
             &\mathcal{F}_{1-\mathrm{channel}}(L, L_0, S, d_{L+1}, \ldots, d_{L+L_0}, \mathrm{ReLU}, \mathrm{ReQU}, \mathcal{S}) \\
             ={}& \left\{ 
             F^{(L+L_0+1)}: \mathbb{R}^{d} \to \mathbb{R} \left\vert
             \begin{array}{l}
                 \text{$F^{(L+L_0+1)}$ follows \eqref{CNN layer}-\eqref{FCNN layer};} \\
                 \text{$S^{(l)} = S, \sigma^{(l)} = \mathrm{ReLU}$ for $l \neq L+1$;}\\
                 \text{$\sigma^{(l)} = \mathrm{ReQU}$ for $l = L+1$ (with $k \geq s$); } \\
                 \text{Total free parameters} \le \mathcal{S}.
             \end{array}
             \right.
             \right\}.
         \end{aligned}
\end{equation}
For brevity, we denote it as $\mathcal{F}_{1-\mathrm{channel}}$.
\paragraph{Multichannel Architecture.}
We now introduce a multichannel structure in the convolutional stage following~\cite{yang2025rates}. In this setting, the spatial width $D$ remains fixed throughout the convolutional layers, while multiple channels are used to enrich the representation.

Let \(w=(w_1,\dots,w_s)^\top\in\mathbb R^s\) be a filter of size \(s\in[D]\). 
We define the associated one-sided convolution matrix \(T_w\in\mathbb R^{D\times D}\) by
\[
T_w :=
\begin{pmatrix}
w_1 & \cdots & w_{s-1} & w_s &        &        &  \\
    & \ddots & \ddots  & \ddots & \ddots &        &  \\
    &        & w_1 & \cdots & w_{s-1} & w_s &  \\
    &        &     & w_1 & \cdots & w_{s-1} & w_s \\
    &        &     &     & \ddots & \ddots & \vdots \\
    &        &     &     &        & w_1 & \cdots \\
    &        &     &     &        &     & w_1
\end{pmatrix}\in\mathbb R^{D\times D},
\]
where all blank entries are zero. This corresponds to one-sided padding and stride-one convolution. Let \(J,J'\in\mathbb N\) denote the input and output channel sizes, respectively. For a filter tensor $w=(w_{i,j',j})_{i\in[s],\,j'\in[J'],\,j\in[J]}\in\mathbb R^{s\times J'\times J}$ and a bias vector \(b=(b_1,\dots,b_{J'})^\top\in\mathbb R^{J'}\), we define the convolutional layer $\mathrm{Conv}_{w,b}:\mathbb R^{D\times J}\to\mathbb R^{D\times J'}$ by
\begin{equation*}
    \bigl(\mathrm{Conv}_{w,b}(x)\bigr)_{:,j'}
    :=
    \sum_{j=1}^{J} T_{w_{:,j',j}}\,x_{:,j}
    + b_{j'}\mathbf 1_D,
    \qquad j'=1,\dots,J',
\end{equation*}
for $x=(x_{i,j})_{i\in[D],\,j\in[J]}\in\mathbb R^{D\times J}$, where \(\mathbf 1_D\in\mathbb R^D\) denotes the constant vector with all entries equal to \(1\). 

We now define the multichannel CNN.  Let \(s\in[D]\), \(J,L,m\in\mathbb N\) be the filter size, channel size, depth, and output dimension, respectively. The CNN part is parameterized by $\bigl(w^{(0)},b^{(0)},\dots,w^{(L-1)},b^{(L-1)}\bigr)$ and takes the form
\begin{equation*}
    \begin{aligned}
        F_{\mathrm{Mult}}^{(0)}&:\mathbb R^D\to\mathbb R^{D\times 1},\,
        F_{\mathrm{Mult}}^{(0)}(x)=x,\\
        F_{\mathrm{Mult}}^{(\ell)}&:\mathbb R^{D\times J}\to\mathbb R^{D\times J'},\,
        F_{\mathrm{Mult}}^{(\ell)}(x)= \sigma^{(\ell)}
        \Bigl(
        \mathrm{Conv}_{w^{(\ell-1)},b^{(\ell-1)}}\bigl(F_{\mathrm{Mult}}^{(\ell-1)}(x)\bigr)
        \Bigr), \, \ell=1,\dots,L,
    \end{aligned}
\end{equation*}
and the final output takes the following form:
\begin{equation*}
\Bigl(
\langle W^{(L,1)},F_{\mathrm{Mult}}^{(L)}(x)\rangle_F,,
\dots,,
\langle W^{(L,m)},F_{\mathrm{Mult}}^{(L)}(x)\rangle_F
\Bigr)^\top,
\end{equation*}
Here, $\sigma^{(\ell)}$ denotes the $\mathrm{ReLU}$ activation function for $\ell \in [L]$, and $\langle\cdot,\cdot\rangle_F$ represents the Frobenius inner product. The parameter dimensions are specified as $(w^{(0)}, b^{(0)}) \in \mathbb{R}^{s\times J\times 1} \times \mathbb{R}^J$ for the initial layer, and $(w^{(\ell)}, b^{(\ell)}) \in \mathbb{R}^{s\times J\times J} \times \mathbb{R}^J$ for the hidden convolutional layers $\ell \in [L-1]$. The outputs generated by the Frobenius inner products with $W^{(L,r)} \in \mathbb{R}^{D \times J}$ ($r \in [m]$) are subsequently fed into a sequence of $L_0$ fully connected layers, following the architecture defined in~\eqref{FCNN layer}. We formally denote the class of functions realizable by this hybrid architecture as:
\begin{equation*}
    \mathcal{F}^{\mathrm{fixed}}_{\mathrm{Mult}}(J, m, L, L_0, S, d_{L+1}, \ldots, d_{L+L_0}, \mathrm{ReLU}, \mathrm{ReQU}, \mathcal{S}).
\end{equation*}
The meanings and constraints of the parameters  within this notation are consistent with those established in our previous definition in~\eqref{equation: Assumption for hypothesis space}. We simplify the notation to $\mathcal{F}_{\mathrm{Mult}}$ for brevity.

\section{Main Simultaneous Approximation Results}\label{Section: Main Results}
In this section, we present the main results for single-channel and multichannel CNNs, including simultaneous approximation estimates and complexity bounds.
\begin{theorem}\label{theorem: simultaneous approximation}
Let $\mathcal{M}^d$ satisfy the assumptions in~\autoref{Section: Preliminaries}, and let $f \in \mathcal{H}^k(\mathcal{M}^d)$ with $k = \tau -(D-d)/2$. For any error tolerance $\varepsilon > 0$ and smoothness index $0 \leq s < k$, there exists a CNN $F^{(L+L_0+ 1)} \in \mathcal{F}_{1-\mathrm{channel}}$ with convolutional depth
\begin{equation*}
    L \lesssim \left\lceil \frac{n_2\varepsilon^{-d/(k-s)}D-1}{\,S-2\,}\right\rceil, \quad \text{and fully connected depth} \quad L_0 \lesssim \log \log(1/\varepsilon),
\end{equation*}
width
    \begin{equation*}   
        d_l \lesssim \varepsilon^{-d/(k-s)}\log^2(1/\varepsilon), \quad l=L+1,\ldots,L+L_0+1,
    \end{equation*}
and total number of parameters
\begin{equation*}
    \mathcal{S} \lesssim \varepsilon^{-d/(k-s)},
\end{equation*}
such that
\begin{equation*}
    \|f - F^{(L+L_0+1)}\|_{\mathcal{H}^s(\mathcal{M}^d)} \lesssim \varepsilon.
\end{equation*}
Here, $S\in\mathbb{N}$ (with $S\ge 3$) is the filter size used in all convolutional layers, and $n_2$ is the constant from~\autoref{lemma: feature construction}.
\end{theorem}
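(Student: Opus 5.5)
We follow the restricted-kernel/RKHS route set up in \autoref{Section: Preliminaries}. Take $\mathcal M^d\subset\mathbb R^D$ and a Sobolev (Matérn-type) kernel $K_\tau$ on $\mathbb R^D$ of order $\tau$. The trace theory of~\cite{de2021reproducing} shows that its restriction to $\mathcal M^d$ reproduces $\mathcal H^{k}(\mathcal M^d)$ with $k=\tau-(D-d)/2$; this is where the relation between $k$ and $\tau$ in the statement comes from.

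\textbf{Step 1 (kernel quadrature).} Using the norm equivalences in \autoref{theorem: sobolev equivalence}, we approximate $f$ by a finite kernel expansion
\[
f_N=\sum_{i=1}^N c_i K_\tau(\cdot,x_i).
\]
The centers $x_i$ form a quasi-uniform set on $\mathcal M^d$ with fill distance $h\asymp N^{-1/d}$. Standard sampling inequalities and kernel interpolation on manifolds give
\[
\|f-f_N\|_{\mathcal H^s(\mathcal M^d)}\lesssim h^{k-s}\|f\|_{\mathcal H^k}.
\]
The coefficients $c_i$ are controlled polynomially in $N$. Choosing $N\asymp \varepsilon^{-d/(k-s)}$ makes this error of order $\varepsilon$.

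\textbf{Step 2 (single-channel convolutional feature map).} We realize the features the network needs through the ReLU convolutional layers. Concretely, these are the quantities $|x-x_i|^2$, or inner products $\langle x,x_i\rangle$ together with $|x|^2$, for $i=1,\dots,N$. We invoke \autoref{lemma: feature construction}: a Toeplitz factorization shows that a long filter of length $n_2 N D$ decomposes into $L=\lceil (n_2ND-1)/(S-2)\rceil$ convolutions of filter size $S$. Positive biases keep the ReLU inactive on the bounded range of inputs. The downsampling operator $\mathcal D$ then extracts the $N$ needed linear features at stride $d$. The free-parameter count of this stage is $LS\lesssim N$, which gives the stated bound on $L$.

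\textbf{Step 3 (approximating the radial profile).} Write $K_\tau(x,y)=\phi(|x-y|)$. The profile $\phi$ is analytic away from $0$ but has limited smoothness at $0$; this is the main obstacle. We would split $\phi=\phi\chi_\delta+\phi(1-\chi_\delta)$, where $\chi_\delta$ is a smooth cutoff at scale $\delta$, and treat the two parts separately.

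On the analytic part, as a function of $r^2$, we use polynomial approximation of degree $\asymp\log(1/\varepsilon)$. A ReQU network implements $x\mapsto x^2$ exactly, so it realizes this polynomial with depth $\lesssim\log\log(1/\varepsilon)$ via repeated squaring or a Horner scheme in binary depth, and width $\lesssim\log^2(1/\varepsilon)$ per feature. The error is exponentially small in the $C^{\lceil s\rceil}$ norm, so derivative errors up to order $s$ are also controlled.

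For the singular part, we choose $\delta$ comparable to the fill distance $h$. We then bound its contribution in $\mathcal H^s$ using the local $\mathcal H^{\tau}$ regularity of $K_\tau$ near the diagonal, together with the bounded overlap of the sets $B(x_i,\delta)$. Alternatively, we can replace the kernel by a smoother $C^{\lceil s\rceil+1}$ surrogate on that scale and absorb the resulting error into the $h^{k-s}$ rate.

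Summing the $N$ subnetworks with the final linear layer gives total width $d_l\lesssim N\log^2(1/\varepsilon)$. The total parameter count is dominated by the convolutional part, $\mathcal S\lesssim N\asymp\varepsilon^{-d/(k-s)}$, once the fully connected tail is implemented with shared or sparse weights.

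\textbf{Step 4 (assembly).} Finally we combine the three errors by the triangle inequality in $\mathcal H^s(\mathcal M^d)$:
\begin{itemize}
\item the quadrature error from Step 1, of order $\varepsilon$;
\item the polynomial error on the analytic part, which is exponentially small;
\item the truncation error on the singular part, of order $\varepsilon$.
\end{itemize}
In the middle term, $\sum_i |c_i|$ grows at most polynomially in $N$, and this polynomial factor is beaten by the exponential accuracy after enlarging the degree by a constant factor. Passing derivatives through the composition of the feature map with the ReQU polynomial uses the chain rule in local charts together with \autoref{theorem: sobolev equivalence}(iv), at least for integer $s$. Fractional $s$ then follows by interpolation.
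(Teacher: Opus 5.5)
\textbf{Where the gap is.} Your Steps 1, 2 and 4 follow the paper's outline. That outline is: interpolation on a quasi-uniform set at rate $N^{-(k-s)/d}$; convolutional extraction of ridge features, then a ReQU layer producing $\|x-x_i\|_2^2$; one shared radial approximant; and a polynomial-in-$N$ coefficient bound, beaten by taking the per-kernel accuracy polynomially small. The gap is in Step 3.

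You split the profile \emph{spatially}, $\phi=\phi\chi_\delta+\phi(1-\chi_\delta)$. You then claim that the piece away from the origin, as a function of $u=r^2$, has a polynomial approximant of degree $\asymp\log(1/\varepsilon)$ with exponentially small $C^{\lceil s\rceil}$ error. This fails in both readings:
\begin{enumerate}
\item The function $\phi(1-\chi_\delta)$ contains the cutoff, which is not analytic, so only algebraic rates are available.
\item If you instead approximate $\phi$ itself on $[\delta^2,M]$, the branch point of $u^\nu$ (or $u^\nu\log u$) at $u=0$ lies at distance $\delta^2$ from the interval. The Bernstein-ellipse parameter is then only $1+O(\delta/\sqrt M)$, so the degree must be of order $\delta^{-1}\log(1/\varepsilon)$.
\end{enumerate}
With your choice $\delta\asymp h\asymp N^{-1/d}$, this degree is polynomial in $N$. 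That destroys $L_0\lesssim\log\log(1/\varepsilon)$ and $d_\ell\lesssim N\log^2(1/\varepsilon)$, and the cutoff derivatives at scale $\delta$ amplify the error further.

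The singular piece has a companion problem. $\phi\chi_\delta$ contains $\phi(0)\chi_\delta\neq0$, so it cannot simply be discarded: its $\mathcal H^s$ norm per center is of order $\delta^{d/2-s}$, and it is then multiplied by growing coefficients. Your ``smoother surrogate'' would have to be specified and realized by the network, and that is exactly the unsolved approximation problem near $r=0$.

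\textbf{The missing idea.} The paper separates the singularity \emph{algebraically}, not spatially. With $\nu=\tau-D/2$, the Mat\'ern profile satisfies $\phi(r)=A(r^2)+r^{2\nu}B(r^2)$ (with $r^{2\nu}\log r$ when $\nu$ is an integer), where $A,B$ are analytic on all of $[0,M]$. Consequently:
\begin{enumerate}
\item $A$ and $B$ get degree-$\log(1/\varepsilon)$ polynomials with geometric accuracy on the whole interval.
\item Only the explicit factor $u^\nu$ needs special treatment. On $[\eta,M]$ it is handled by a dyadic-partition network of cost $O(\log(M/\eta)\log(1/\varepsilon))$, i.e.\ only logarithmic in $1/\eta$.
\item On $[0,2\eta]$ it is replaced by its Taylor polynomial, glued in with the ReQU-representable spline cutoff $\chi_p$.
\end{enumerate}
Because the truncated remainder vanishes like $u^\nu$ at the origin, its error is $\eta^{(2\nu+D/2-s')/2}$ in ambient $H^{s'}$. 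This transfers to $\mathcal H^s(\mathcal M^d)$ by the trace theorem, so the paper takes $\eta$ as a power of the per-kernel accuracy, independent of $h$, and uses the crude bound $\|c\|_1\lesssim N^{2k/d}$.

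\textbf{How your overlap idea could close.} Once you have this decomposition, your bounded-overlap idea with $\delta\asymp h$ would work and would give sharper bookkeeping than the paper's $\ell^1$ bound.
\begin{enumerate}
\item The per-center remainder is then $O(\delta^{2k-d/2-s})$, using $2\nu=2k-d$.
\item The coefficient bound $\|c\|_2\lesssim q_X^{-(k-d/2)}$ follows from $c^\top K(X)c\le\|f\|_{\mathcal N}^2\lesssim\|f\|_{\mathcal H^k(\mathcal M^d)}^2$ and $\lambda_{\min}(K(X))\gtrsim q_X^{2k-d}$.
\item Together these give exactly $h^{k-s}$.
\end{enumerate}
Without the decomposition, neither the polylogarithmic degree nor this truncation estimate is available.
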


\begin{remark}
    \autoref{theorem: simultaneous approximation} gives simultaneous approximation rates for Sobolev functions on manifolds by CNNs; see \autoref{Proof: Proof of simultaneous approximation with 1 channel} for the proof. The complexity depends only on the intrinsic dimension $d$, not on the ambient dimension $D$, thereby mitigating the curse of dimensionality. In addition, the fully connected depth grows only like $\log\log(1/\varepsilon)$, so the resulting architecture remains very shallow. The estimates further suggest that width, rather than depth, is the primary driver of approximation power.
\end{remark}

\begin{remark}
    It is worth noting that the $O(\log\log \varepsilon^{-1})$ depth of our construction differs from the finite-depth approximation results usually available for $\mathrm{ReQU}$ networks applied to globally smooth functions. The reason is that kernels such as the Mat\'ern class have only limited Sobolev regularity because of their singularity at the origin. Under standard parallel architectures, this would force the parameter complexity to scale polynomially in $\varepsilon^{-1}$ and lead to saturation of the approximation rate. Our construction overcomes this difficulty by exploiting the piecewise analyticity of the kernel. In this way, the network depth converts the potential polynomial complexity into logarithmic complexity, while preserving narrow width and sharp Sobolev error control.
\end{remark}

We now turn to the multichannel construction, which is motivated by an inherent limitation of the one-channel architecture. For one-channel 1-D CNNs with increasing width, the covering number (or VC-dimension) estimate scales as $SL\log S$; see~\cite[Theorem 3]{lei2025solving}. Since in that construction $S$ and $L$ are of the same order, this leads, up to logarithmic factors, to an effective complexity term of order $S^2\log S$. Such a bound is not sharp and may result in suboptimal generalization rates. This issue was already observed in~\cite{lei2025solving}. A sharper result in this architecture should therefore avoid any polynomial dependence on the depth in the corresponding complexity factor. The obstruction comes from two sources. First, the one-channel construction forces a bias explosion, which already prevents a sharp entropy estimate. Second, the wavelet-type filter factorization used there is intrinsically unstable. In order to obtain a depth-sharp bound, one would need at least a stability estimate of the form $\prod_{\ell=1}^{L}\|w^{(\ell)}\|_{1}\ \lesssim\ C\|W\|_{1}\cdot \mathrm{poly}(L),$ for a convolutional factorization $W=w^{(L)}*\cdots * w^{(1)}.$
However, the rough bound available in the previous results~\cite[Lemma 5]{lei2025solving}, obtained through the quantity $B^{(L)}$, only yields an exponential dependence on $L$, and in general falls far short of such a polynomial control. According to our construction, the matrices that generate inner-product features typically do not satisfy such a stability estimate. For this reason, we introduce below a multichannel construction, which avoids these obstructions and leads to a genuinely sharper complexity estimate.
\begin{theorem}\label{thm:mult-inner-products}
Under the same setting as~\autoref{theorem: simultaneous approximation}, let \(S\in[2:D]\) be the filter size, there exists a multichannel CNN $F^{(L+L_0+1)} \in \mathcal{F}_{\mathrm{Mult}}$ with convolutional depth
\begin{equation*}
    L=\Big\lceil \frac{D-1}{S-1}\Big\rceil, \quad \text{and fully connected depth} \quad  L_0 \lesssim \log\log(1/\varepsilon),
\end{equation*}
width
\begin{equation*}   
    d_l \lesssim \varepsilon^{-d/(k-s)}\log^2(1/\varepsilon), \quad l=L+1,\ldots,L+L_0+1,
\end{equation*}
channel size
\begin{equation*}
    J \lesssim \varepsilon^{-d/(k-s)}
\end{equation*}
and total number of parameters

\begin{equation*}
    \mathcal{S} \lesssim \varepsilon^{-d/(k-s)},
\end{equation*}
such that
\begin{equation*}
    \|f - F^{(L+L_0+1)}\|_{\mathcal{H}^s(\mathcal{M}^d)} \lesssim \varepsilon.
\end{equation*}
\end{theorem}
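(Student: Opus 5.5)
The proof reuses the kernel-based approximation scheme behind \autoref{theorem: simultaneous approximation} and replaces only the convolutional feature extractor. The single-channel proof (via \autoref{lemma: feature construction}) approximates $f$ in $\mathcal H^s(\mathcal M^d)$ by a finite kernel expansion
\[
f_N(x)=\sum_{i=1}^{N} c_i\, K(x,x_i), \qquad N\asymp \varepsilon^{-d/(k-s)},
\]
where $K$ is the restricted (Mat\'ern-type) kernel whose RKHS on $\mathcal M^d$ is $\mathcal H^{\tau-(D-d)/2}=\mathcal H^k$, and the centers $x_i$ form a quasi-uniform set on $\mathcal M^d$. The error $\|f-f_N\|_{\mathcal H^s}\lesssim\varepsilon$ comes from sampling inequalities on the manifold. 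Since $K(x,x_i)$ depends only on $\|x-x_i\|^2=\|x\|^2-2\langle x,x_i\rangle+\|x_i\|^2$, the network only has to produce the $N$ inner-product features $\langle x,x_i\rangle$, up to an affine shift. The fully connected $\mathrm{ReQU}$ tail then approximates the univariate radial profile. It splits into a truncated singular part near the origin and an analytic part away from it, and uses depth $\lesssim\log\log(1/\varepsilon)$ and width $\lesssim N\log^2(1/\varepsilon)$. I would import this tail construction and its $\mathcal H^s$ error bound, composed with exact features, verbatim from the proof of \autoref{theorem: simultaneous approximation}.

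The new ingredient is an exact multichannel realization of all $N$ features with depth $L=\lceil (D-1)/(S-1)\rceil$ and $J\asymp N$ channels. I assign one channel $j$ to each center $x_j$, with the goal that the first spatial entry of channel $j$ after layer $L$ equals $\langle x,x_j\rangle + B$. The shift $B$ is a constant large enough that $\mathrm{ReLU}$ acts as the identity, which is possible because $\mathcal M^d$ is compact, so $|x|\le R$. A one-sided convolution $T_w$ with filter size $S$ mixes coordinates $i,\dots,i+S-1$, so $L$ layers give an effective receptive field of $L(S-1)+1\ge D$ at position $1$. I therefore factor the length-$D$ filter $x_j$ per channel as a composition of $L$ filters of size $S$. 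The simplest choice avoids polynomial factorization altogether: the first layer puts the partial sum $\sum_{i\le S}x_{j,i}\,x_i$ at position $1$ while passing the remaining coordinates through as shifted copies. Subsequent layers accumulate the next $S-1$ coordinates. The channels stay diagonal, so the weights from channel $j$ to channel $j'$ are zero for $j\neq j'$, plus at most one auxiliary channel carrying the raw input $x$.

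In this construction every filter entry is either a coordinate of some $x_j$ or a $0/1$ shift. Hence the weight norms are bounded uniformly in $L$ and the biases are $O(L B)$. This gives exactly the stability that the single-channel factorization lacks. The readout $W^{(L,r)}$ selects position $1$ of channel $r$ and subtracts the accumulated bias, so that $m=N$. Parameter count: the convolutional stage uses $O(SJ)$ nonzero weights per layer. With $L\lesssim D/S$ and $D$ a fixed constant, this totals $O(DN)\lesssim\varepsilon^{-d/(k-s)}$. The tail adds lower-order terms, up to logarithmic factors absorbed as in \autoref{theorem: simultaneous approximation}. The final error is then exactly the single-channel error, because the features are reproduced exactly.

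The main obstacle I expect is keeping the diagonal-channel accumulation within the one-sided $T_w$ format. A channel cannot simultaneously hold a running sum and pass unused raw coordinates through, because a single filter acts on one input channel. I would first try routing the raw input through one shared channel, carried forward by the identity filter $e_1$ plus a bias $B$. Each feature channel $j$ then reads from this shared channel with filter $(x_{j,(\ell-1)(S-1)+1},\dots)$ at the correct offset and adds its own previous value via the filter $e_1$. The point to verify is the offset bookkeeping, namely that the sliding window at position $1$ hits the correct coordinate block in layer $\ell$. If this fails, I would fall back on shifting the raw channel by $S-1$ at each layer with the filter $e_S$.
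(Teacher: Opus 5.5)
\textbf{The gap: the feature reduction.} Your overall plan is the paper's. You keep the interpolant $\sum_i c_i\Psi(\cdot,x_i)$ and the ReQU tail of \autoref{theorem: simultaneous approximation}, and you replace only the feature extractor by channel-diagonal running sums of length-$D$ inner products, read off at spatial position $1$ after $L=\lceil (D-1)/(S-1)\rceil$ layers.

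The step that fails is how you reduce the features. You write $\|x-x_i\|_2^2=\|x\|_2^2-2\langle x,x_i\rangle+\|x_i\|_2^2$ and conclude that the $N$ features $\langle x,x_i\rangle$ suffice ``up to an affine shift''. That holds only when $\|x\|_2$ is constant on $\mathcal M^d$, as on a sphere centered at the origin. On a general embedded manifold $\|x\|_2^2$ varies. So the radial variable $u_i=\|x-x_i\|_2^2$, which is the input to the tail of \autoref{theorem: simultaneous approximation}, is not an affine function of $\langle x,x_i\rangle$, and that tail cannot be imported verbatim.

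The paper avoids this by extracting the ridge features of \autoref{lemma: feature construction} with $l=2$. The first ReQU layer then assembles $\|x-y_i\|_2^2=\sum_j p_{i,j}(\xi_{i,j}\cdot x)$ exactly. Your version is cheap to repair:
\begin{enumerate}[(i)]
\item Also output the $D$ coordinates $\langle x,e_k\rangle$ by the same running-sum mechanism, or alternatively the $n_2$ ridge directions, which depend only on $D$.
\item Form $\|x\|_2^2=\sum_k\bigl(\sigma_2(x_k)+\sigma_2(-x_k)\bigr)$ in the first ReQU layer.
\item Carry $\langle x,x_i\rangle$ through that layer via $\sigma_2(t+c)-\sigma_2(c-t)=4ct$ for $|t|<c$.
\end{enumerate}
This adds $O(D)$ channels and leaves every stated bound unchanged. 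As written, however, the passage from features to kernel values is not justified.

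\textbf{The feature block itself.} Your first routing cannot work: the raw channel carried by $e_1$, with the running sum held at position $1$ by $e_1$. A one-sided size-$S$ filter at output position $1$ sees only input positions $1,\dots,S$, so $x_{S+1},\dots,x_D$ never reach position $1$.

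Your fallback is exactly the paper's third-channel mechanism. You shift the raw channel by $S-1$ per layer with $e_S$, and read the next $S-1$ coefficients of $x_j$ with a filter whose first entry is $0$. The offset bookkeeping then closes by induction with $q_\ell=\min\{\ell(S-1)+1,D\}$.

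The remaining difference is how ReLU is neutralized.
\begin{itemize}
\item The paper keeps all biases zero and stores each partial sum in two channels as $\sigma(t)$ and $\sigma(-t)$, recovering $t=\sigma(t)-\sigma(-t)$. It uses a separate pass-through channel per branch, for $3m$ channels in total.
\item You use one channel per feature, one shared pass-through channel, and a constant bias shift. That shift must be removed through $b^{(L+1)}$, since the Frobenius readout carries no bias.
\end{itemize}
Your bias on the pass-through channel does real work. In the paper that channel also goes through ReLU with zero bias. Its stated identity $F^{(1)}_{:,3r}(x)=(x_s,\dots,x_D,0,\dots,0)^\top$ therefore tacitly requires nonnegative coordinates, which your shift (or a translation of $\mathcal M^d$) supplies.
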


\begin{remark}
    Within the present multichannel framework, the network is written as $f=h\circ g$, where $g$ is the ReLU-CNN block and $h$ is the terminal fully connected ReQU network. If $\mathcal F=\{h\circ g:\ g\in\mathcal G,\ h\in\mathcal H\}$ and each $h\in\mathcal H$ is $\Lambda$-Lipschitz on a bounded set $K$ containing the range of $g\in\mathcal G$, then a standard composition argument yields $\log \mathcal N(\delta,\mathcal F,\|\cdot\|_{L^\infty})\le \log \mathcal N(\delta/(2\Lambda),\mathcal G,\|\cdot\|_{L^\infty})+\log \mathcal N(\delta/2,\mathcal H,\|\cdot\|_{L^\infty(K)})$. Thus, the ReQU head affects the entropy bound only through the rescaled accuracy $\delta/(2\Lambda)$. Since the CNN block has constant depth, its entropy contribution remains mild, unlike the blow-up that may occur in one-channel width-increasing architectures. The only additional issue is the local Lipschitz bound of $h$: because $\sigma(t)=(t)_+^2$ has derivative depending on the input size, one has $\log\Lambda\lesssim 2^{L_0}$ on bounded sets. However, since $L_0\asymp \log\log(\varepsilon^{-1})$, this contributes only a polylogarithmic factor in $\varepsilon^{-1}$ and hence has only a mild effect on the final complexity bound.
\end{remark}

\begin{remark}
    We also mention the regime of finitely many channels with increasing spatial width. In this case, similar approximation results are expected to hold by directly constructing the required features, without the wavelet-type factorization used in the one-channel architecture. The CNN depth would then remain of the same order as in~\autoref{theorem: simultaneous approximation}. It is plausible that, combined with the feature-allocation idea in~\cite[Lemma A.5]{yang2025rates}, one could further distribute the feature-induced weights across depth and obtain a complexity bound analogous to~\cite[Theorem 2.5]{yang2025rates}, with only logarithmic dependence on the depth. We leave this question for future work.
\end{remark}

\section{PICNN for Boundary Value Problems}\label{Section: Implementation of Spectral-Trace PINNs}
In this section, we apply the preceding approximation theory to elliptic boundary value problems on manifolds. We consider~\eqref{PDE}, where \(\mathcal L\) is a uniformly elliptic operator of order \(2s\). Standard PINNs typically impose an \(L^2\)-type boundary penalty, while classical elliptic stability theory indicates that controlling
\(\|u-u^*\|_{\mathcal H^{2s}(\mathcal M^d)}\) requires boundary control in the sharp trace space
\(\mathcal H^{2s-\frac12}(\partial\mathcal M^d)\).

To address this mismatch, we introduce a physics-informed CNN framework with a spectral boundary penalty. The boundary residual is expanded in the eigenbasis of the boundary Laplace--Beltrami operator and penalized according to Sobolev trace weights. This yields a computable approximation of the \(\mathcal H^{2s-\frac12}(\partial\mathcal M^d)\) trace norm without singular Sobolev--Slobodeckij integrals or repeated high-order automatic differentiation.

Let $\mu$ be the normalized volume measure on $\mathcal M^d$, and let $\sigma$ be the normalized surface measure on $\partial\mathcal M^d$. Their empirical counterparts are
denoted by $  \mu_n:=\frac1n\sum_{i=1}^n\delta_{x_i}, \sigma_m:=\frac1m\sum_{j=1}^m\delta_{y_j}.$
For an integrable function \(q\), we use the notation $\mu q:=\int_{\mathcal M^d}q\,d\mu, \sigma q:=\int_{\partial\mathcal M^d}q\,d\sigma,$ and similarly $    \mu_n q:=\frac1n\sum_{i=1}^n q(x_i), \sigma_m q:=\frac1m\sum_{j=1}^m q(y_j).$

For \(u\in\mathcal F\), set $e_u:=u|_{\partial\mathcal M^d}-g,\beta:=2s-\frac12 .$ Let \(\{(\psi_k,\lambda_k)\}_{k\ge 1}\) be the \(L^2(\sigma)\)-orthonormal eigenpairs of the Laplace--Beltrami operator on \(\partial\mathcal M^d\). For \(k=1,\ldots,K\), define the population and empirical boundary spectral coefficients by $c_k(u):=\sigma(e_u\psi_k),\widehat c_k(u):=\sigma_m(e_u\psi_k).$ The truncated population spectral boundary energy is defined by $    \mathcal S_K(u) := \sum_{k=1}^K \lambda_k^\beta |c_k(u)|^2,$ whereas its empirical counterpart is $\widehat{\mathcal S}_K(u) :=\sum_{k=1}^K \lambda_k^\beta |\widehat c_k(u)|^2.$ Accordingly, define the truncated population functional
\[
    \mathcal J_K(u)
    :=
    \mu|\mathcal Lu-f|^2
    +
    \sigma|e_u|^2
    +
    \mathcal S_K(u),
\]
and the empirical functional
\begin{equation}\label{equation: modified_pinn_loss}
    \begin{aligned}
    \widehat{\mathcal J}_K(u)
    &:= \mu_n|\mathcal Lu-f|^2 + \sigma_m|e_u|^2 + \widehat{\mathcal S}_K(u)\\
    &= \frac{1}{n}\sum_{i=1}^n |\mathcal{L}u(x_i)-f(x_i)|^2 + \frac{1}{m}\sum_{j=1}^m |u(y_j)-g(y_j)|^2 \\
    &+ \sum_{k=1}^K \lambda_k^{2s-\frac12}\left|
        \frac{1}{m}\sum_{j=1}^m (u(y_j)-g(y_j))\psi_k(y_j)
    \right|^2.
    \end{aligned}
\end{equation}
The PICNN estimator is defined by $\widehat u_{\mathcal F,K}=\arg\min_{u\in\mathcal F}\hat{\mathcal{J}}_{K}(u)$, where $\mathcal F$ is the CNN hypothesis class. This loss is designed to capture the high-order boundary regularity required by the PDE without introducing excessive computational cost, especially from repeated automatic differentiation. 

Before presenting the numerical experiments, we first provide a theoretical error analysis for the proposed PICNN method. The following theorem gives an error decomposition for the estimator trained with the empirical spectral boundary functional. It separates the solution error into an approximation error, a generalization error, and a spectral truncation error. Moreover, the spectral form of the boundary loss is well suited to localized empirical-process estimates, and therefore provides a natural route toward fast rate control of the generalization term~\cite{bartlett2005local,lei2025solving,lu2022machine,zhou2025weak}.

Define the full trace functional
\begin{equation*}
        \mathcal J_\infty(u) := \mu|\mathcal Lu-f|^2 + \sigma|e_u|^2 + \sum_{k=1}^{\infty} \lambda_k^\beta \left| \langle e_u,\psi_k\rangle_{L^2(\sigma)} \right|^2,
\end{equation*}
and the spectral truncation tail $ \tau_K(\mathcal F) := \sup_{u\in\mathcal F} \sum_{k>K} \lambda_k^\beta \left| \langle e_u,\psi_k\rangle_{L^2(\sigma)} \right|^2 .$ We also introduce the spectral weight density $\Theta_K(y):=\sum_{k=1}^K\lambda_k^\beta\psi_k(y)^2,$ and its uniform bound $\Gamma_K :=\|\Theta_K\|_{L^\infty(\partial\mathcal M^d)}=\sup_{y\in\partial\mathcal M^d}\sum_{k=1}^K \lambda_k^\beta\psi_k(y)^2 .$

\begin{theorem}
\label{theorem:picnn_error_decomposition}
Assume that the boundary value problem admits a unique solution \(u^*\), and that the
elliptic stability estimate
\[
    \|u-u^*\|_{\mathcal H^{2s}(\mathcal M^d)}^2
    \le
    C_{\mathrm{st}}\mathcal J_\infty(u)
\]
holds for every admissible \(u\in\mathcal F\). Let $u_{\mathcal F,K} \in \arg\min_{u\in\mathcal F}\mathcal J_K(u).$ Assume that, for all \(u\in\mathcal F\), $    |\mathcal Lu-f|\le M_I, |e_u|\le M_B .$ Then for every \(t\ge 1\), with probability at least \(1-Ce^{-t}\),
\[
\begin{aligned}
\|\widehat u_{\mathcal F,K}-u^*\|_{\mathcal H^{2s}(\mathcal M^d)}^2
\le C \Bigg[
&\left(2+\frac{\Gamma_K}{m}\right)\mathcal J_K(u_{\mathcal F,K})
+\tau_K(\mathcal F)
\\
&\quad+G_K(\widehat u_{\mathcal F,K})+\frac{M_I^2 t}{n}+\frac{M_B^2(1+\Gamma_K)t}{m}
\Bigg],
\end{aligned}
\]
where $ G_K(\widehat u_{\mathcal F,K}) := \mathcal J_K(\widehat u_{\mathcal F,K}) - \widehat{\mathcal J}_K(\widehat u_{\mathcal F,K}) .$ Here \(C>0\) depends only on the stability and geometric constants, but is independent of \(n,m\), and \(K\).
\end{theorem}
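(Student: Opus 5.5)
Write $\hat u:=\widehat u_{\mathcal F,K}$ and $\bar u:=u_{\mathcal F,K}$. We chain the stability estimate with a standard ERM comparison. First, by the assumed elliptic stability and the definition of $\tau_K(\mathcal F)$,
\[
\|\hat u-u^*\|_{\mathcal H^{2s}(\mathcal M^d)}^2\le C_{\mathrm{st}}\mathcal J_\infty(\hat u)\le C_{\mathrm{st}}\bigl(\mathcal J_K(\hat u)+\tau_K(\mathcal F)\bigr).
\]
This splits off the spectral truncation error, so it remains to bound $\mathcal J_K(\hat u)$.

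Next we write
\[
\mathcal J_K(\hat u)=G_K(\hat u)+\widehat{\mathcal J}_K(\hat u)\le G_K(\hat u)+\widehat{\mathcal J}_K(\bar u),
\]
using that $\hat u$ minimizes $\widehat{\mathcal J}_K$ over $\mathcal F$. The generalization term $G_K(\hat u)$ is left as is, as in the statement. The remaining task is to control $\widehat{\mathcal J}_K(\bar u)$ for the \emph{fixed}, deterministic function $\bar u$ by $\mathcal J_K(\bar u)$ plus deviation terms, with high probability. Since $\bar u$ does not depend on the sample, no uniform empirical-process bound is needed, only concentration of sums of i.i.d.\ variables.

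We treat the three pieces of $\widehat{\mathcal J}_K(\bar u)$ separately.
\begin{enumerate}
\item The interior residual $\mu_n|\mathcal L\bar u-f|^2$ is an average of i.i.d.\ variables in $[0,M_I^2]$ with variance at most $M_I^2\,\mu|\mathcal L\bar u-f|^2$. Bernstein's inequality and $\sqrt{ab}\le a+b$ give, with probability $1-e^{-t}$,
\[
\mu_n|\mathcal L\bar u-f|^2\le 2\mu|\mathcal L\bar u-f|^2+CM_I^2t/n .
\]
\item The boundary $L^2$ piece $\sigma_m|e_{\bar u}|^2$ is handled identically, giving $2\sigma|e_{\bar u}|^2+CM_B^2t/m$.
\item For the spectral piece, view $\widehat{\mathcal S}_K(\bar u)=\|\frac1m\sum_j Z_j\|^2_{\ell^2_w}$, where $Z_j=(e_{\bar u}(y_j)\psi_k(y_j))_{k\le K}$ lies in the weighted space with weights $\lambda_k^\beta$. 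We have $\mathbb E Z_j=(c_k(\bar u))_k$ and $\|Z_j\|_w^2=e_{\bar u}(y_j)^2\Theta_K(y_j)\le M_B^2\Gamma_K$. Moreover
\[
\mathbb E\|Z_j\|_w^2\le \Gamma_K\,\sigma|e_{\bar u}|^2\le\Gamma_K\,\mathcal J_K(\bar u).
\]
Using $\|a+b\|^2\le 2\|a\|^2+2\|b\|^2$,
\[
\widehat{\mathcal S}_K\le 2\mathcal S_K(\bar u)+2\Bigl\|\tfrac1m\textstyle\sum_j(Z_j-\mathbb EZ_j)\Bigr\|_w^2 .
\]
A Hilbert-space Bernstein (Pinelis) inequality bounds the second term by $C\bigl(\Gamma_K\sigma|e_{\bar u}|^2 t/m+M_B^2\Gamma_K t^2/m^2\bigr)$. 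The first of these is at most $C(\Gamma_K/m)\,t\,\mathcal J_K(\bar u)$. For the main claim without a factor $t$, I would instead use the expectation identity $\mathbb E\|\cdot\|^2=\frac1m\mathrm{Var}\le \frac{\Gamma_K}{m}\sigma|e_{\bar u}|^2$ together with a bounded-differences (McDiarmid) deviation for the norm. The deviation is then absorbed as $M_B^2(1+\Gamma_K)t/m$ via $2ab\le a^2+b^2$.
\end{enumerate}

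A union bound over the three events gives probability $1-Ce^{-t}$. Collecting everything,
\[
\widehat{\mathcal J}_K(\bar u)\le (2+\Gamma_K/m)\,\mathcal J_K(\bar u)+CM_I^2t/n+CM_B^2(1+\Gamma_K)t/m ,
\]
and substituting back yields the theorem with $C$ depending only on $C_{\mathrm{st}}$ and absolute constants.

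The main obstacle is step 3: obtaining the spectral deviation with the precise form $(\Gamma_K/m)\mathcal J_K(\bar u)+M_B^2(1+\Gamma_K)t/m$, rather than an extra $t$ multiplying $\mathcal J_K$ or a $t^2/m^2$ term. I would first try the split into expectation plus McDiarmid deviation. Changing one sample $y_j$ moves $\|\frac1m\sum(Z_j-\mathbb EZ_j)\|_w$ by at most $2M_B\sqrt{\Gamma_K}/m$. Hence the norm is at most $\sqrt{\Gamma_K\sigma|e_{\bar u}|^2/m}+CM_B\sqrt{\Gamma_Kt/m}$, and squaring gives exactly the required terms.
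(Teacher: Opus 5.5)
Your proof is correct, and its skeleton is the paper's. Both use the basic inequality $\widehat{\mathcal J}_K(\widehat u_{\mathcal F,K})\le\widehat{\mathcal J}_K(u_{\mathcal F,K})$, concentrate only at the deterministic comparison function $u_{\mathcal F,K}$, apply one-sided Bernstein to the interior and boundary $L^2$ pieces, take a union bound, and finish with $\mathcal J_\infty\le\mathcal J_K+\tau_K(\mathcal F)$ plus stability.

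The difference is in the spectral piece. The paper expands $|\widehat c_k|^2-|c_k|^2=2c_k\zeta_k+|\zeta_k|^2$ and computes the bias of the quadratic part exactly, $b_K=\frac1m\sum_k\lambda_k^\beta\operatorname{Var}_\sigma(e_u\psi_k)\le\frac{\Gamma_K}{m}\sigma|e_u|^2$. It then only asserts that the centered remainder is handled ``by a Bernstein inequality for the finitely many weighted boundary coefficients''. You instead treat $\widehat{\mathcal S}_K(u_{\mathcal F,K})$ as the squared $\lambda^\beta$-weighted norm of an empirical mean of i.i.d.\ vectors. You absorb the cross term via $\|a+b\|_w^2\le 2\|a\|_w^2+2\|b\|_w^2$, and control $\|\frac1m\sum_j(Z_j-\mathbb E Z_j)\|_w$ by Jensen plus McDiarmid with bounded differences $2M_B\Gamma_K^{1/2}/m$.

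What your route buys is a fully explicit argument in which all $K$-dependence enters only through $\Gamma_K$, as the theorem's claim that $C$ is independent of $K$ requires. A literal coefficient-by-coefficient Bernstein bound with a union bound over $k\le K$ would leak a $\log K$ factor. The paper's route, in turn, isolates the exact bias $b_K$, which is the quantity one would track in the localized fast-rate analysis mentioned in the remark.

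Two small points. First, your chain gives a coefficient $4$ rather than $1$ in front of $\Gamma_K\mathcal J_K(u_{\mathcal F,K})/m$; this is harmless because it is absorbed into the outer $C$. Second, the obstacle you flag in step 3 is not real. Since $\sigma|e_{u_{\mathcal F,K}}|^2\le M_B^2$ and $t\ge 1$, any term $t\,\Gamma_K\sigma|e_{u_{\mathcal F,K}}|^2/m$ is already at most $M_B^2\Gamma_K t/m$. The $M_B^2\Gamma_K t^2/m^2$ term is at most $M_B^2\Gamma_K t/m$ when $t\le m$. For $t>m$ one uses the trivial bound $\widehat{\mathcal S}_K(u_{\mathcal F,K})\le\max_j\|Z_j\|_w^2\le M_B^2\Gamma_K$. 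So the Pinelis route would also have sufficed.
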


\begin{remark}
The proof is deferred to~\autoref{appendix: error decomposition}. The estimate separates the error into three components. The first term \(\mathcal J_K(u_{\mathcal F,K})\) is the approximation error induced by the hypothesis class \(\mathcal F\). Under the Lipschitz continuity of \(\mathcal L\) with respect to the \(\mathcal H^{2s}\)-norm and the trace theorem, this term is controlled by the simultaneous Sobolev approximation error of the CNN class, as quantified in~\autoref{theorem: simultaneous approximation} and~\autoref{thm:mult-inner-products}. The empirical-process term satisfies
\[
    G_K(\widehat u_{\mathcal F,K})
    =
    \mathcal J_K(\widehat u_{\mathcal F,K})
    -
    \widehat{\mathcal J}_K(\widehat u_{\mathcal F,K})
    \le
    \sup_{u\in\mathcal F}
    \left\{
        \mathcal J_K(u)-\widehat{\mathcal J}_K(u)
    \right\}.
\]
This is the usual generalization error associated with replacing the population functional by its empirical counterpart. A direct global empirical-process bound typically yields a slow rate of order \(n^{-1/2}\), up to logarithmic and spectral factors. A localized analysis can improve this to the fast scale \(n^{-1}\) for the corresponding excess-risk term. Related fast-rate analyses have been developed in~\cite{lei2025solving,lu2022machine,zhou2025weak}, either by enforcing the boundary condition exactly, which may be restrictive for practical neural-network hypothesis classes, or by working on boundaryless manifolds so that the trace term is absent. In contrast, the present spectral boundary formulation keeps the boundary contribution explicitly in the loss. After adding the spectral tail \(\tau_K(\mathcal F)\), the loss is aligned with the elliptic stability norm, which provides the local structure needed for fast-rate analysis: on suitable localized classes, the variance of the empirical loss can be controlled by the population error. Thus, the spectral boundary loss is not only numerically motivated, but also leads to an error functional better suited to sharp statistical estimates. A complete localized fast-rate analysis for the present boundary spectral functional is left for future work.
\end{remark}

\section{Experimental}\label{Section: Experimental}
The PICNN algorithm is implemented using Python scripts powered by the PyTorch framework. The scripts can be downloaded from \url{https://github.com/hanfei27/PINNOnManifolds}. We consider two representative manifolds: the upper hemisphere $\mathcal{M}_{\mathbb{S}}$ (equatorial boundary) and the upper half-torus $\mathcal{M}_{\mathbb{T}}$ (two circular boundaries at $z=0$). On each, we define a subdomain $\mathcal M^d$ ($d=2$) with boundary $\partial\mathcal M^d$. To evaluate the high-order boundary penalty in~\eqref{equation: modified_pinn_loss} efficiently, we employ Fast Fourier Transform (FFT) to implement the spectral estimation. This approach allows for the practical computation of fractional Sobolev norms by projecting the boundary residuals onto the frequency domain, enabling a computationally efficient realization of the PICNN loss. 

For subdomains on $\mathbb{S}^2$ and $\mathbb{T}^2$ with simple topologies, the boundary $\partial\mathcal{M}^d$ is a closed curve isometric to a circle $S^1_L$ of length $L$. Let $e(\tau) = (u_\theta - g)(\gamma(\tau))$ be the boundary error under arclength parametrization $\gamma: [0, L) \to \partial\mathcal{M}^d$. The Sobolev norm is characterized by the spectral decay:
\begin{equation*}
    \|e\|_{\mathcal{H}^{2s-\frac{1}{2}}(\partial\mathcal{M}^d)}^2 \asymp \sum_{k \in \mathbb{Z}} (1 + \lambda_k)^{2s-\frac{1}{2}} |\widehat{e}_k|^2, \quad \lambda_k = \left(\tfrac{2\pi k}{L}\right)^2,
\end{equation*}
where $\widehat{e}_k$ are the Fourier coefficients. In practice, we sample the error at $M$ equidistant points and compute $\widehat{e}_k$ via FFT. The norm is approximated by the truncated sum:
\begin{equation*}
    \mathcal{L}_{\mathrm{bnd}}^{\partial\mathcal{M}^d}(\theta) \approx \sum_{|k| \le K} (1 + \lambda_k)^{2s-\frac{1}{2}} |\widehat{e}_k|^2 = e_\theta^\top A_{\partial\mathcal{M}^d} e_\theta,
\end{equation*}
where $e_\theta \in \mathbb{R}^M$ is the discrete error vector and $A_{\partial\mathcal{M}^d}$ is a quadratic form matrix constructed from the FFT and spectral weights. This operation has a negligible cost of $\mathcal{O}(M \log M)$, allowing the boundary penalty to be efficiently updated at every training iteration.
\begin{algorithm}[!ht]
\caption{PICNN on Manifolds}
\label{alg:st_pinn}

\KwIn{Data $f, g$; samples $S^{\mathrm{int}}=\{x_i\}_{i=1}^n \subset \mathcal{M}^d$ and $S^{\mathrm{bnd}}=\{y_j\}_{j=1}^m \subset \partial \mathcal{M}^d$; parameters $N_{\mathrm{ep}}, s, K, \lambda_{\mathrm{bnd}}, \eta$; neural network $u_\theta$}
\KwOut{Optimized parameters $\theta^*$}

Initialize network parameters $\theta$\;
Precompute spectral weights $w_k = (1+\lambda_k)^{2s-\frac12}$ for $|k|\le K$\;
Set $\theta^* \leftarrow \theta$\;

\For{$t = 1, \ldots, N_{\mathrm{ep}}$}{
    \tcp{Interior loss}
    Compute
    \[
    \mathcal{L}_{\mathrm{phys}}(\theta)
    = \frac{1}{n}\sum_{i=1}^n \left|\mathcal{L}u_\theta(x_i)-f(x_i)\right|^2
    \]
    
    \tcp{Boundary loss}
    Set $\mathbf e = \{u_\theta(y_j)-g(y_j)\}_{j=1}^m$\;
    Compute $\widehat{\mathbf e}=\mathrm{FFT}(\mathbf e)$\;
    Compute
    \[
    \mathcal{L}_{\mathrm{bnd}}(\theta)
    = \sum_{|k|\le K} w_k |\widehat e_k|^2
    \]
    
    \tcp{Total loss}
    Compute
    \[
    \mathcal{L}_{\mathrm{total}}(\theta)
    = \mathcal{L}_{\mathrm{phys}}(\theta)
    + \lambda_{\mathrm{bnd}} \mathcal{L}_{\mathrm{bnd}}(\theta)
    \]
    
    Update
    \[
    \theta \leftarrow \theta - \eta \nabla_\theta \mathcal{L}_{\mathrm{total}}(\theta)
    \]
    
    \If{$\mathcal{L}_{\mathrm{total}}(\theta) < \mathcal{L}_{\mathrm{total}}(\theta^*)$}{
        $\theta^* \leftarrow \theta$\;
    }
}

\Return{$\theta^*$}\;
\end{algorithm}
On both domains, we solve the variable-coefficient elliptic equation:
\begin{equation}\label{eq:exp1_pde_both}
-\operatorname{div}_{\mathcal M}\!\big((2+z)\nabla_{\mathcal M}u\big)+u=f,
\qquad x\in\mathcal M,
\end{equation}
with the exact solution $u^*(x,y,z)=xyz$ and Dirichlet boundary data $g:=u^*|_{\partial\mathcal M}$. The source term $f$ is derived analytically to satisfy \eqref{eq:exp1_pde_both}.

\subsection{Loss Function and Boundary Penalty}
We train the network $u_\theta$ by minimizing the total empirical loss $\mathcal L(\theta) := \mathcal L_{\mathrm{phys}}(\theta) + \lambda_{\mathrm{bnd}} \mathcal L_{\mathrm{bnd}}(\theta),$ where $\mathcal L_{\mathrm{phys}}$ is the mean-squared interior residual. The boundary term $\mathcal L_{\mathrm{bnd}}$ approximates the $\|u_\theta - g\|_{\mathcal H^{3/2}}^2$ norm using discrete spectral estimators. For the hemisphere, this is computed via a single 1D FFT. For the torus, the boundary loss is defined as the sum of spectral penalties on its two disjoint components:
\begin{equation*}
\mathcal L_{\mathrm{bnd}}^{\partial\mathcal M_{\mathbb T}}(\theta) := \|u_\theta - g\|_{\mathcal H^{3/2}(\Gamma_+)}^2 + \|u_\theta - g\|_{\mathcal H^{3/2}(\Gamma_-)}^2,
\end{equation*}
where each term is evaluated independently using the FFT-based approach described in~\autoref{alg:st_pinn}. This formulation ensures that high-order boundary stability is maintained across disjoint boundary components.

\subsection{Network Architecture and Smooth Activation}
We employ a lightweight 1D-CNN + Multilayer Perceptron (MLP) architecture. The input $x\in\mathbb R^3$ is reshaped into a one-dimensional sequence and passed through $L$ convolutional layers with channel width $c$, followed by average pooling and flattening. The resulting features are then fed into a two-layer MLP to produce a scalar output. Although our approximation theory is developed for $\mathrm{ReLU}$--$\mathrm{ReQU}$ networks, preliminary experiments indicate that $\mathrm{ReLU}$ is not well suited for PINNs, since its nondifferentiability at zero makes high-order differentiation unstable and may lead to training failure. For this reason, we replace $\mathrm{ReLU}$ by the Gaussian Error Linear Unit (GeLU)~\cite{hendrycks2016gaussian}. Empirically, this yields substantially improved stability and convergence. More precisely, we use the $\mathrm{GeLU}^2(t)$ in the MLP layers to enhance regularity.

\subsection{Optimization and Evaluation}
Parameters are optimized using Adam with a StepLR scheduler. We denote the number of interior samples by $N$, boundary samples by $M$, and independent test samples by $N_{\mathrm{test}}$. At test time, we report relative $L^2$ and $H^2$ errors:
\begin{equation*}
    \mathrm{Rel}\ L^2 :=\frac{\|u_\theta-u^*\|_{L^2(\mathcal M)}}{\|u^*\|_{L^2(\mathcal M)}}, \qquad
    \mathrm{Rel}\ H^{2} :=\frac{\|\Delta_{\mathcal M}u_\theta-\Delta_{\mathcal M}u^*\|_{L^2(\mathcal M)}}{\|\Delta_{\mathcal M}u^*\|_{L^2(\mathcal M)}},
\end{equation*}
approximated by discrete averages over the test set. 

\subsection{Results and Discussion}\label{Subsubsection: Results and Discussion}
We investigate how different boundary losses affect error convergence as the number of interior samples $N$ increases. Fixing the network architecture and optimization strategy, we vary $N$ and compare the standard $L^2$ penalty against our proposed Sobolev penalty:
\begin{equation*}
    \mathcal L_{\mathrm{bnd}}^{L^2}(\theta)=\|u_\theta-g\|_{L^2(\partial\mathcal M)}^2 \quad \text{vs.} \quad
    \mathcal L_{\mathrm{bnd}}^{H^{3/2}}(\theta)=\|u_\theta-g\|_{\mathcal H^{3/2}(\partial\mathcal M)}^2.
\end{equation*}
We evaluate performance using both $\mathrm{Rel}\ L^2$ and $\mathrm{Rel}\ H^{2}$ metrics.

Both manifolds are tested under the same setup. We vary the interior sample size over $N\in\{128,256,512,1024,2048,4096\}$, while fixing $M=256$ and $N_{\mathrm{test}}=5120$. For each $N$, we perform 10 independent trials and report the mean and standard deviation. The network is a 1D-CNN + MLP with $L=3$ convolutional layers of 56 channels and an MLP head of widths $(24,8)$. Training uses Adam with initial learning rate $\eta=10^{-3}$ for $T=200$ epochs, with decay factor $\gamma=0.5$ every $\tau=50$ epochs. We report the test errors $\mathrm{Rel}\,L^2$ and $\mathrm{Rel}\,H^{2s}$; see~\autoref{tab:exp1_relL2}--\autoref{tab:exp1_relH2s}.
\begin{table}[H]
\centering
\small
\setlength{\tabcolsep}{4pt}
\renewcommand{\arraystretch}{1.15}
\caption{Test $\mathrm{Rel}\,L^2$ errors under different boundary losses and manifold types.}
\label{tab:exp1_relL2}
\resizebox{\linewidth}{!}{%
\begin{tabular}{llcccccc}
\hline
Manifold $\mathcal M$ & Boundary loss $\mathcal L_{\mathrm{bnd}}$ 
& $N=128$ & $256$ & $512$ & $1024$ & $2048$ & $4096$ \\
\hline
$\mathcal M_{\mathbb S}$ & $H^{2s-\frac12}$ 
& $0.013400\pm0.005801$ & $0.015420\pm0.006197$ & $0.007014\pm0.002749$
& $0.003877\pm0.001847$ & $0.003638\pm0.001473$ & $0.003826\pm0.001525$ \\
$\mathcal M_{\mathbb S}$ & $L^2$
& $0.103385\pm0.019942$ & $0.056990\pm0.030944$ & $0.032291\pm0.009287$
& $0.016695\pm0.009112$ & $0.013323\pm0.004489$ & $0.007756\pm0.003632$ \\
$\mathcal M_{\mathbb T}$ & $H^{2s-\frac12}$  
& $0.534112\pm0.344148$ & $0.040684\pm0.055469$ & $0.006771\pm0.003217$
& $0.003835\pm0.001095$ & $0.002760\pm0.000958$ & $0.002397\pm0.000866$ \\
$\mathcal M_{\mathbb T}$ & $L^2$
& $0.110458\pm0.102444$ & $0.038127\pm0.016059$ & $0.029580\pm0.019402$
& $0.011489\pm0.003508$ & $0.009491\pm0.002787$ & $0.008118\pm0.007250$ \\
\hline
\end{tabular}}
\end{table}
Following the results in \autoref{tab:exp1_relL2} and \autoref{tab:exp1_relH2s}, we visualize the convergence behavior in log--log coordinates. For each sample size $N$, we compute the mean error $\bar e_N$ and standard deviation across $n_{\mathrm{runs}}$ independent trials. The empirical convergence rate $\alpha$ is then determined via a linear least-squares fit $\log_2 \bar e_N \approx a - \alpha \log_2 N.$
\begin{table}[H]
\centering
\small
\setlength{\tabcolsep}{4pt}
\renewcommand{\arraystretch}{1.15}
\caption{Test $\mathrm{Rel}\,H^{2s}$ errors under different boundary losses and manifold types.}
\label{tab:exp1_relH2s}
\resizebox{\linewidth}{!}{%
\begin{tabular}{llcccccc}
\hline
Manifold $\mathcal M$ & Boundary loss $\mathcal L_{\mathrm{bnd}}$ 
& $N=128$ & $256$ & $512$ & $1024$ & $2048$ & $4096$ \\
\hline
$\mathcal M_{\mathbb S}$ & $H^{2s-\frac12}$ 
& $0.014300\pm0.006167$ & $0.011610\pm0.003901$ & $0.005242\pm0.001911$
& $0.002926\pm0.001071$ & $0.002434\pm0.000768$ & $0.001895\pm0.000413$ \\
$\mathcal M_{\mathbb S}$ & $L^2$
& $0.026414\pm0.005214$ & $0.014656\pm0.004128$ & $0.009338\pm0.002966$
& $0.005465\pm0.001717$ & $0.004558\pm0.000935$ & $0.002905\pm0.000930$ \\
$\mathcal M_{\mathbb T}$ & $H^{2s-\frac12}$ 
& $0.919617\pm0.547020$ & $0.171405\pm0.186194$ & $0.022856\pm0.007895$
& $0.013131\pm0.002706$ & $0.010021\pm0.001337$ & $0.009999\pm0.002021$ \\
$\mathcal M_{\mathbb T}$ & $L^2$
& $0.220294\pm0.161027$ & $0.053042\pm0.033999$ & $0.025099\pm0.019792$
& $0.010551\pm0.003485$ & $0.006743\pm0.002244$ & $0.005756\pm0.006204$ \\
\hline
\end{tabular}}
\end{table}
\autoref{fig:exp1_all_rates} shows the convergence results on the upper hemisphere $\mathcal M_{\mathbb S}$ (top row) and the upper half-torus $\mathcal M_{\mathbb T}$ (bottom row). The left column uses the Sobolev boundary penalty, while the right column uses the standard $L^2$ penalty. All panels are plotted on a log-log scale, with test error versus the number of interior samples $N$, and the fitted lines indicate the empirical convergence rates. The error bars represent the standard deviation over $n_{\mathrm{runs}}$ independent trials.
\begin{figure}[!ht]
\centering

\begin{subfigure}{0.48\linewidth}
  \centering
  \includegraphics[width=\linewidth]{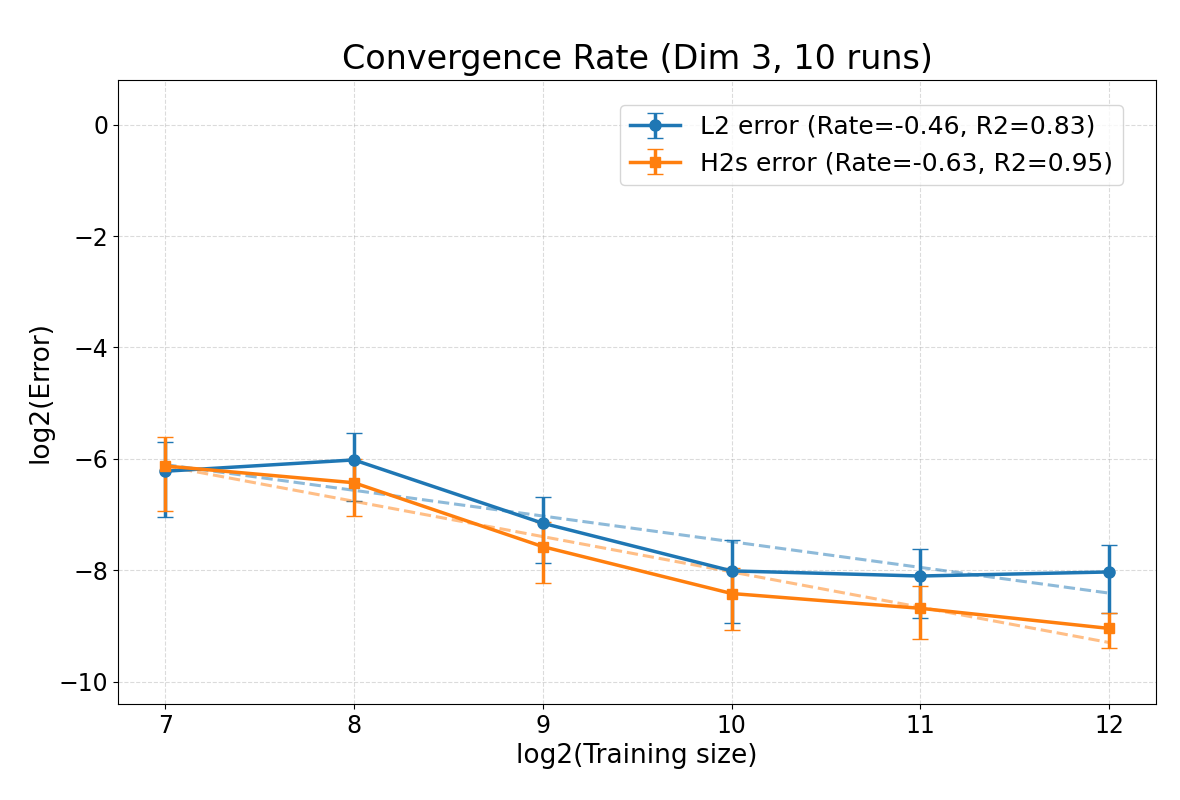}
  \caption{$\mathcal M_{\mathbb S}$ + $\mathcal L_{\mathrm{bnd}}^{H^{2s-\frac12}}$}
  \label{fig:exp1_sphere_sobolev_rate}
\end{subfigure}\hfill
\begin{subfigure}{0.48\linewidth}
  \centering
  \includegraphics[width=\linewidth]{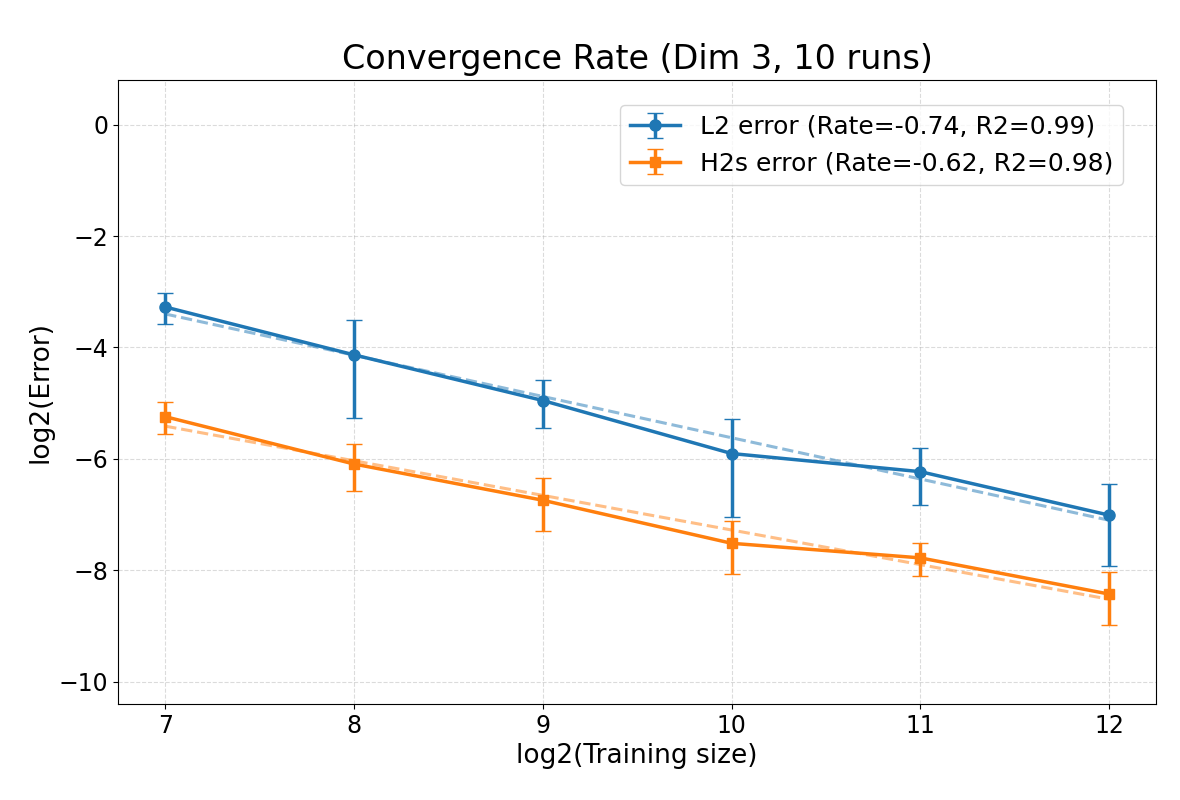}
  \caption{$\mathcal M_{\mathbb S}$ + $\mathcal L_{\mathrm{bnd}}^{L^2}$}
  \label{fig:exp1_sphere_l2_rate}
\end{subfigure}

\vspace{0.6em}

\begin{subfigure}{0.48\linewidth}
  \centering
  \includegraphics[width=\linewidth]{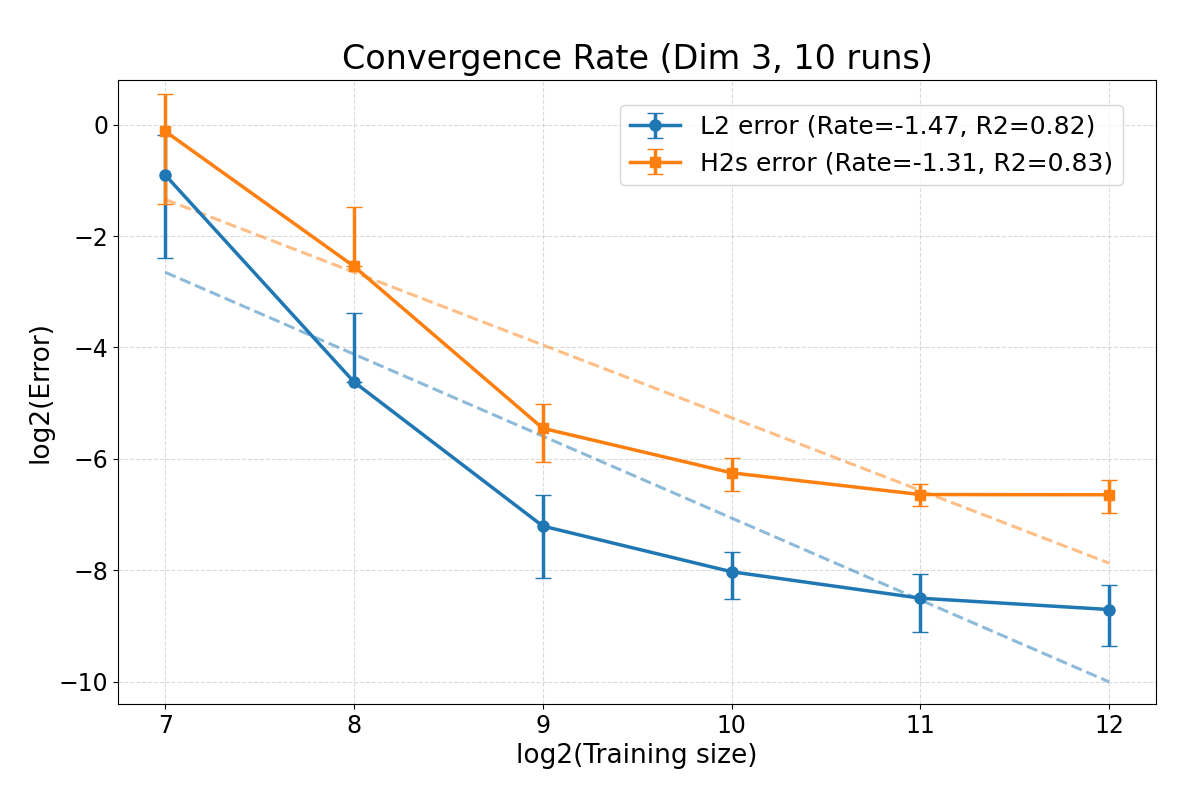}
  \caption{$\mathcal M_{\mathbb T}$ + $\mathcal L_{\mathrm{bnd}}^{H^{2s-\frac12}}$}
  \label{fig:exp1_torus_sobolev_rate}
\end{subfigure}\hfill
\begin{subfigure}{0.48\linewidth}
  \centering
  \includegraphics[width=\linewidth]{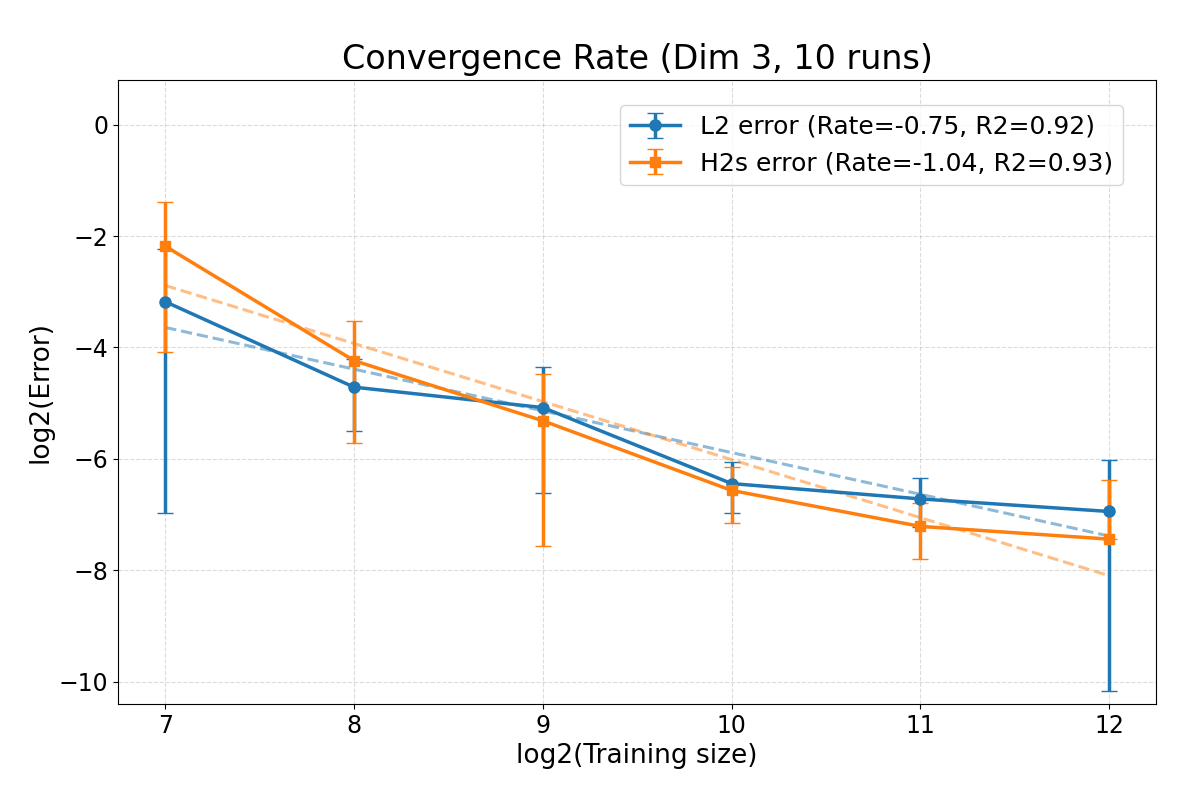}
  \caption{$\mathcal M_{\mathbb T}$ + $\mathcal L_{\mathrm{bnd}}^{L^2}$}
  \label{fig:exp1_torus_l2_rate}
\end{subfigure}

\caption{Empirical convergence curves (log--log) and fitted slopes on two manifolds $\mathcal M_{\mathbb S}$ and $\mathcal M_{\mathbb T}$, comparing the boundary losses $\mathcal L_{\mathrm{bnd}}^{H^{2s-\frac12}}$ and $\mathcal L_{\mathrm{bnd}}^{L^2}$.}
\label{fig:exp1_all_rates}
\end{figure}
As shown in \autoref{fig:exp1_all_rates}, the Sobolev penalty $\mathcal L_{\mathrm{bnd}}^{H^{3/2}}$ consistently outperforms the standard $L^2$ penalty on both manifolds. This advantage is evident in both $\mathrm{Rel}\,L^2$ and $\mathrm{Rel}\,H^{2}$ metrics. Intuitively, the Sobolev penalty goes beyond simple value matching: it actively suppresses high-frequency oscillations near the boundary. Since the PDE involves second-order derivatives, this added regularity stabilizes training and leads to better final accuracy.

The log-log error curves do not show a perfect power-law behavior. We suspect this is due to optimization dynamics rather than network limitations. Training PINNs with high-order derivatives is sensitive to hyperparameters, so under a fixed training budget, experiments with different $N$ may not reach the same optimization level. Consequently, the fitted rates reflect a mix of statistical scaling and optimization noise. Future work could focus on designing more robust strategies to stabilize the Sobolev penalty against high-frequency oscillations.

To analyze the spatial error distribution, we visualize pointwise errors for $N=4096$ in \autoref{fig:exp1_vis_4096}. We compare the function error $|u_\theta-u|$ and Laplacian error $|\Delta_{\mathcal M}u_\theta-\Delta_{\mathcal M}u|$ across all four settings. 

\begin{figure}[!ht]
\centering
\begin{subfigure}{0.48\linewidth}
  \centering
  \includegraphics[width=\linewidth,height=0.28\textheight,keepaspectratio]{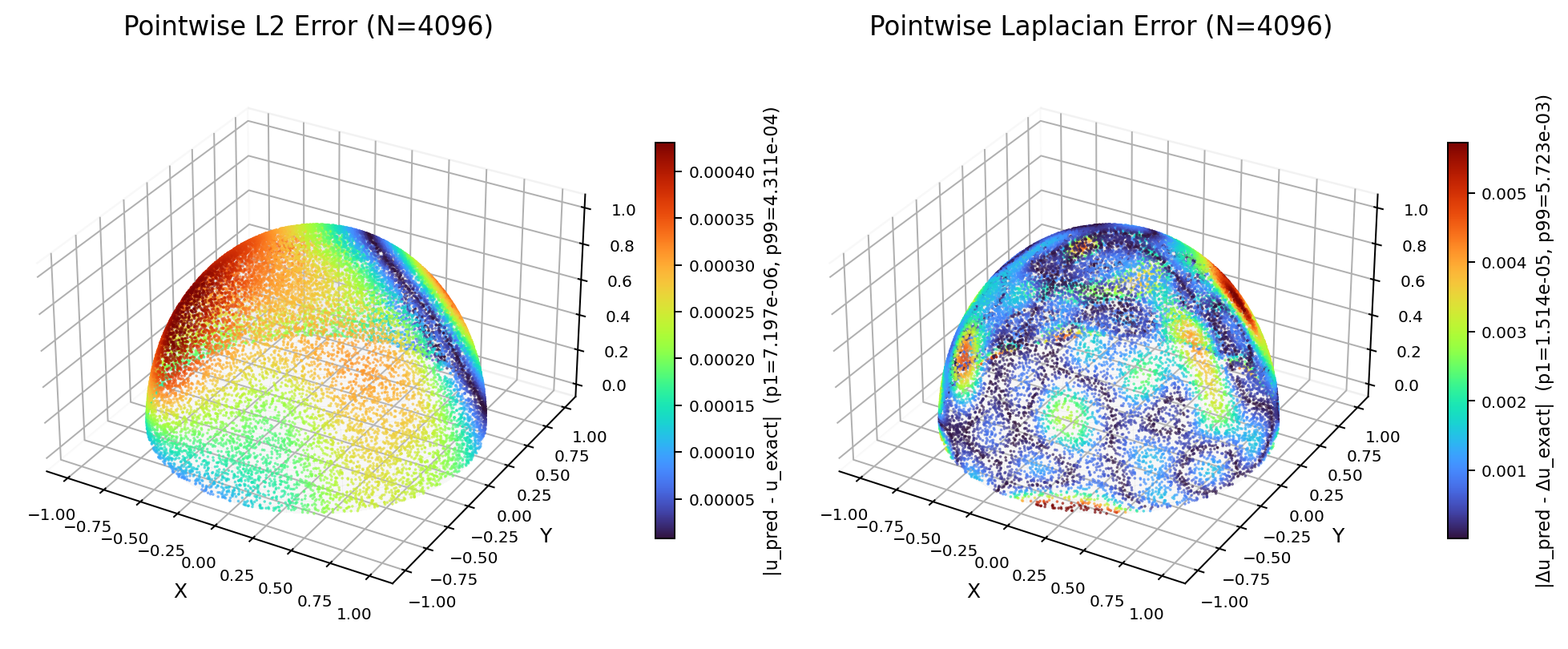}
  \caption{$\mathcal M_{\mathbb S}+\mathcal L_{\mathrm{bnd}}^{H^{2s-\frac12}}$}
  \label{fig:exp1_sphere_sobolev_vis}
\end{subfigure}\hfill
\begin{subfigure}{0.48\linewidth}
  \centering
  \includegraphics[width=\linewidth,height=0.28\textheight,keepaspectratio]{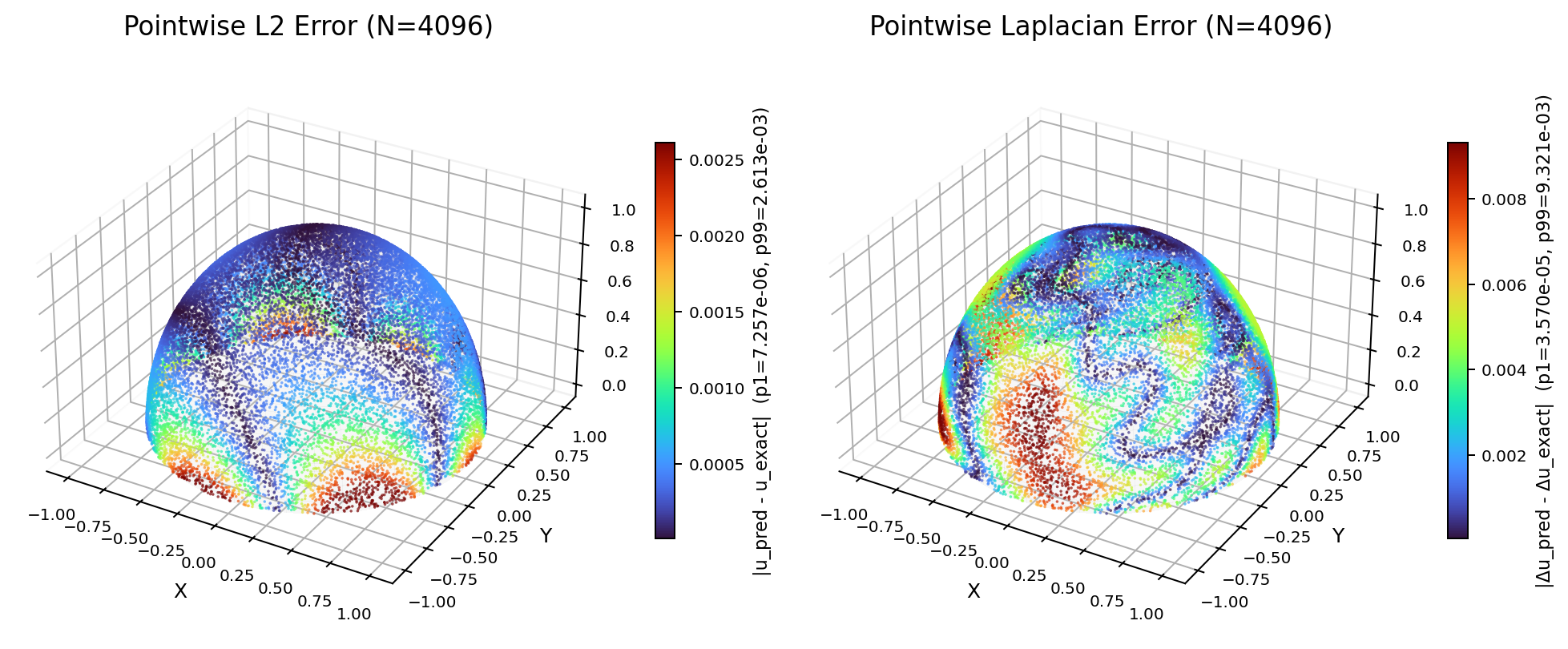}
  \caption{$\mathcal M_{\mathbb S}+\mathcal L_{\mathrm{bnd}}^{L^2}$}
  \label{fig:exp1_sphere_l2_vis}
\end{subfigure}

\vspace{0.6em}

\begin{subfigure}[t]{0.48\linewidth}
  \centering
  \includegraphics[width=\linewidth,height=0.28\textheight,keepaspectratio]{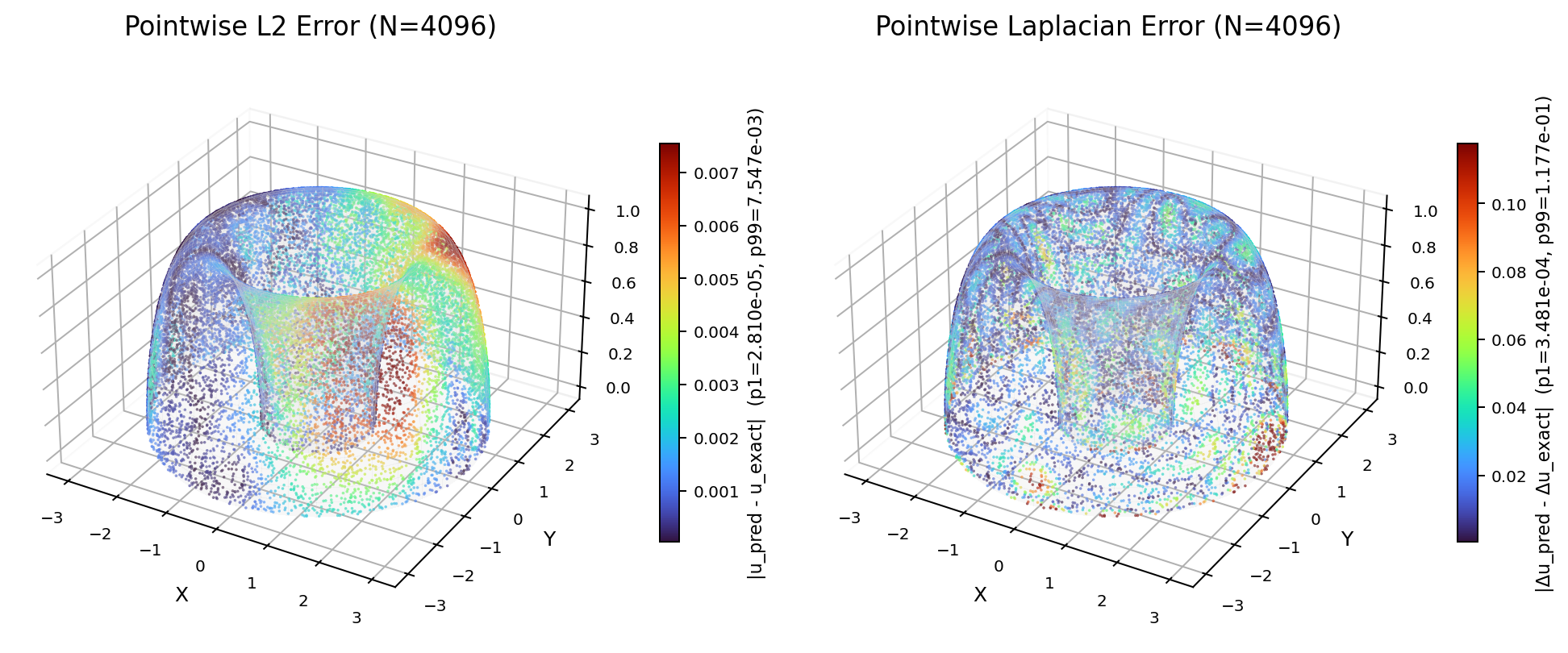}
  \caption{$\mathcal M_{\mathbb T}+\mathcal L_{\mathrm{bnd}}^{H^{2s-\frac12}}$}
  \label{fig:exp1_torus_sobolev_vis}
\end{subfigure}\hfill
\begin{subfigure}[t]{0.48\linewidth}
  \centering
  \includegraphics[width=\linewidth,height=0.28\textheight,keepaspectratio]{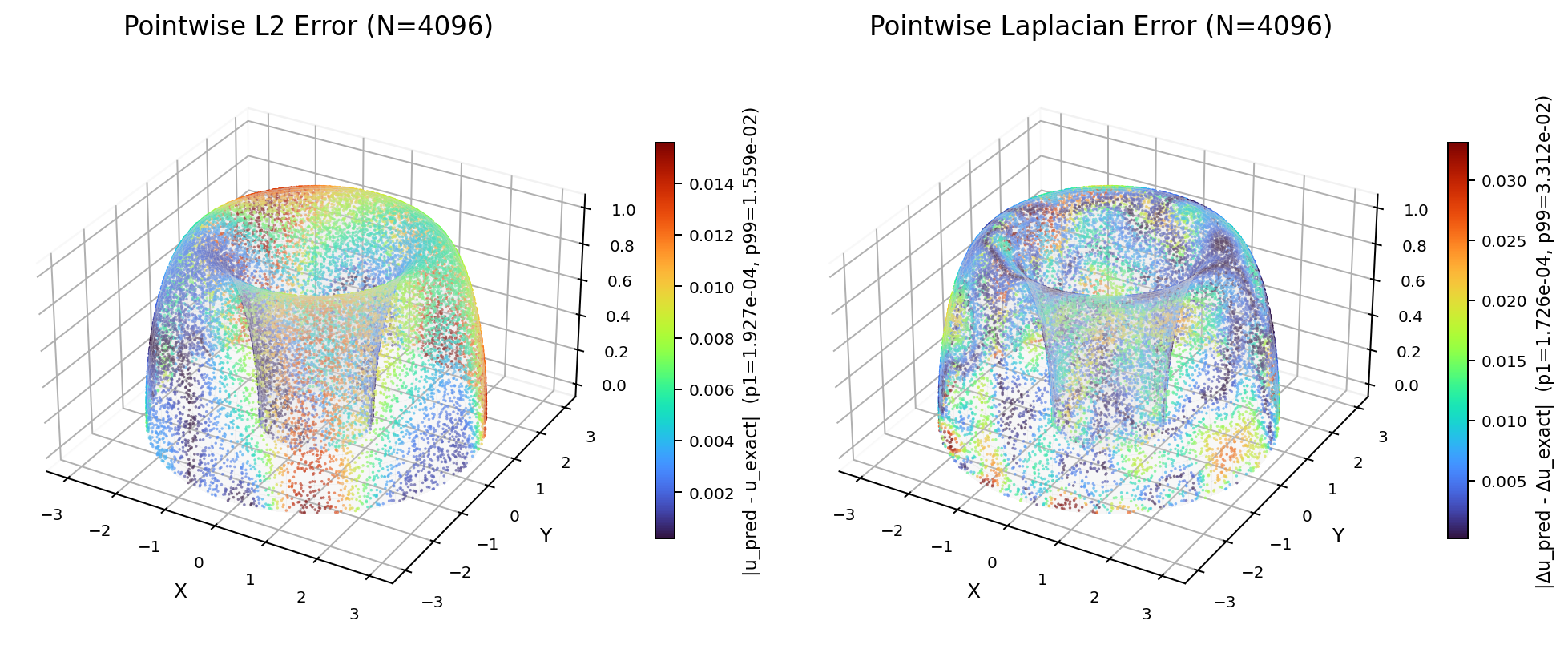}
  \caption{$\mathcal M_{\mathbb T}+\mathcal L_{\mathrm{bnd}}^{L^2}$}
  \label{fig:exp1_torus_l2_vis}
\end{subfigure}
\caption{Pointwise error visualization at $N=4096$: left/right subplots correspond to the spatial distributions of $|u_\theta-u|$ and $|\Delta_{\mathcal M}u_\theta-\Delta_{\mathcal M}u|$, comparing four combinations of boundary penalty and manifold type.}
\label{fig:exp1_vis_4096}
\end{figure}
To study the optimization behavior, we track the test $\mathrm{Rel}\,L^2$ and $\mathrm{Rel}\,H^2$ errors over epochs on both $\mathcal M_{\mathbb S}$ and $\mathcal M_{\mathbb T}$. With all other hyperparameters fixed, we compare the Sobolev penalty with the standard $L^2$ penalty. Empirically, the Sobolev penalty $\mathcal L_{\mathrm{bnd}}^{H^{3/2}}$ yields faster error decay in the early stage and smoother convergence trajectories. \autoref{fig:epoch_6panels} shows the corresponding curves for $N\in\{512,1024,2048\}$. This improvement can be attributed to the spectral nature of the Sobolev norm, which places greater weight on high-frequency modes and hence suppresses boundary oscillations more effectively. Moreover, because this norm matches the natural trace space of the PDE, it provides a more coherent scaling of the error components, leading to more efficient gradient descent and earlier convergence under the same training budget.
\begin{figure}[!ht]
\centering

\begin{subfigure}[t]{0.32\linewidth}
  \centering
  \includegraphics[width=\linewidth,height=0.26\textheight,keepaspectratio]{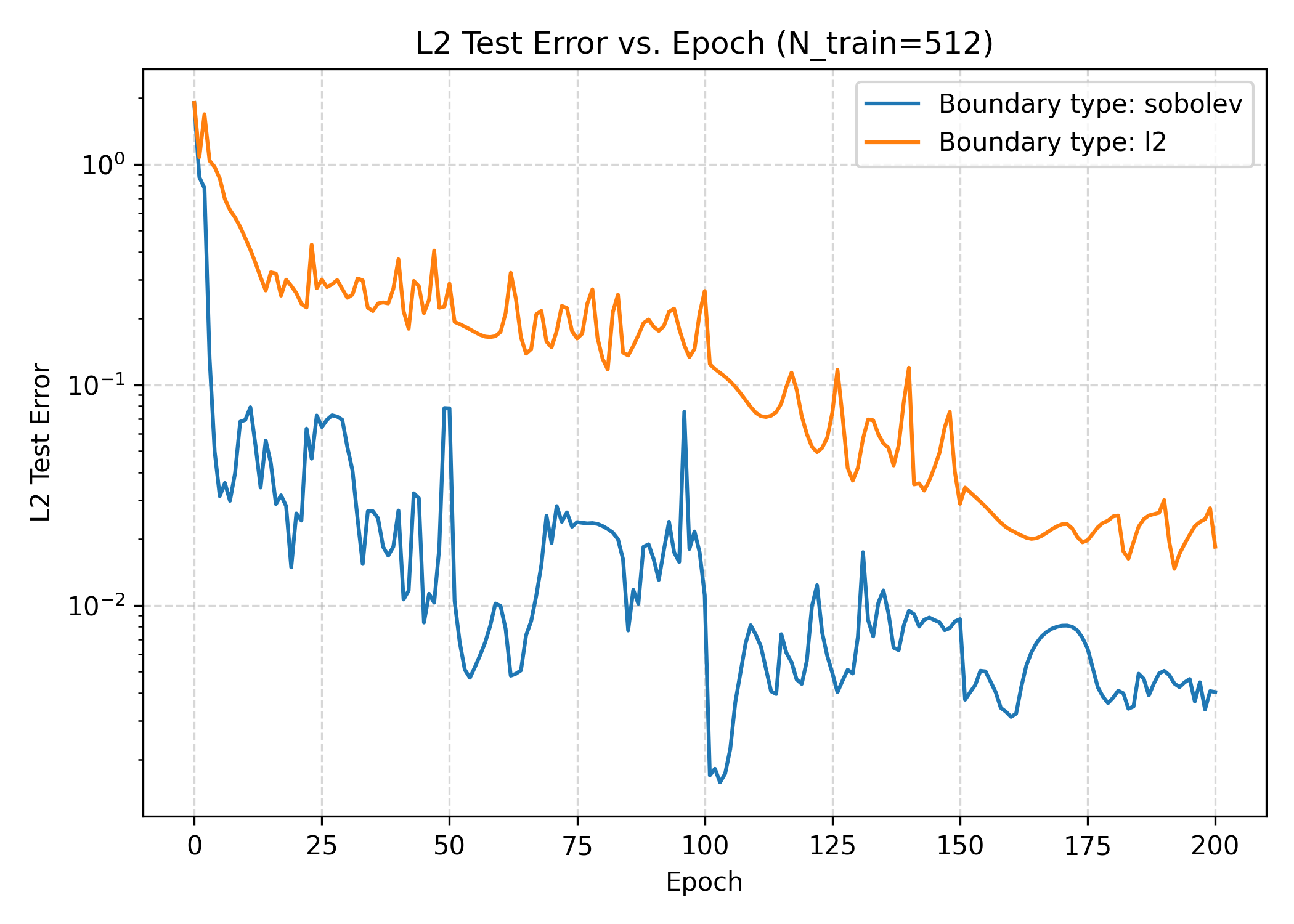}
  \caption{$\mathcal M_{\mathbb S},\,N=512$}
  \label{fig:epoch_sphere_n512}
\end{subfigure}\hfill
\begin{subfigure}[t]{0.32\linewidth}
  \centering
  \includegraphics[width=\linewidth,height=0.26\textheight,keepaspectratio]{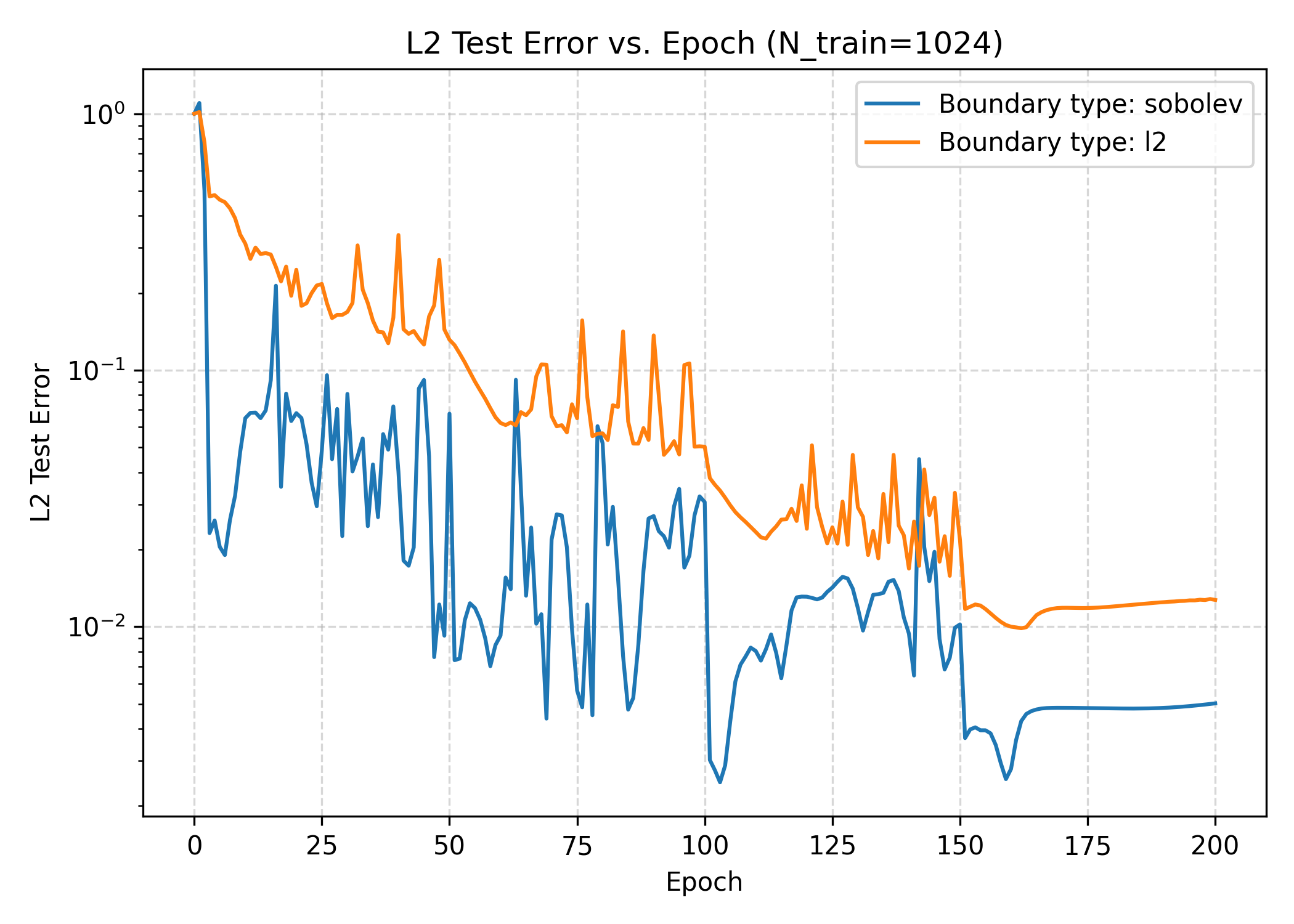}
  \caption{$\mathcal M_{\mathbb S},\,N=1024$}
  \label{fig:epoch_sphere_n1024}
\end{subfigure}\hfill
\begin{subfigure}[t]{0.32\linewidth}
  \centering
  \includegraphics[width=\linewidth,height=0.26\textheight,keepaspectratio]{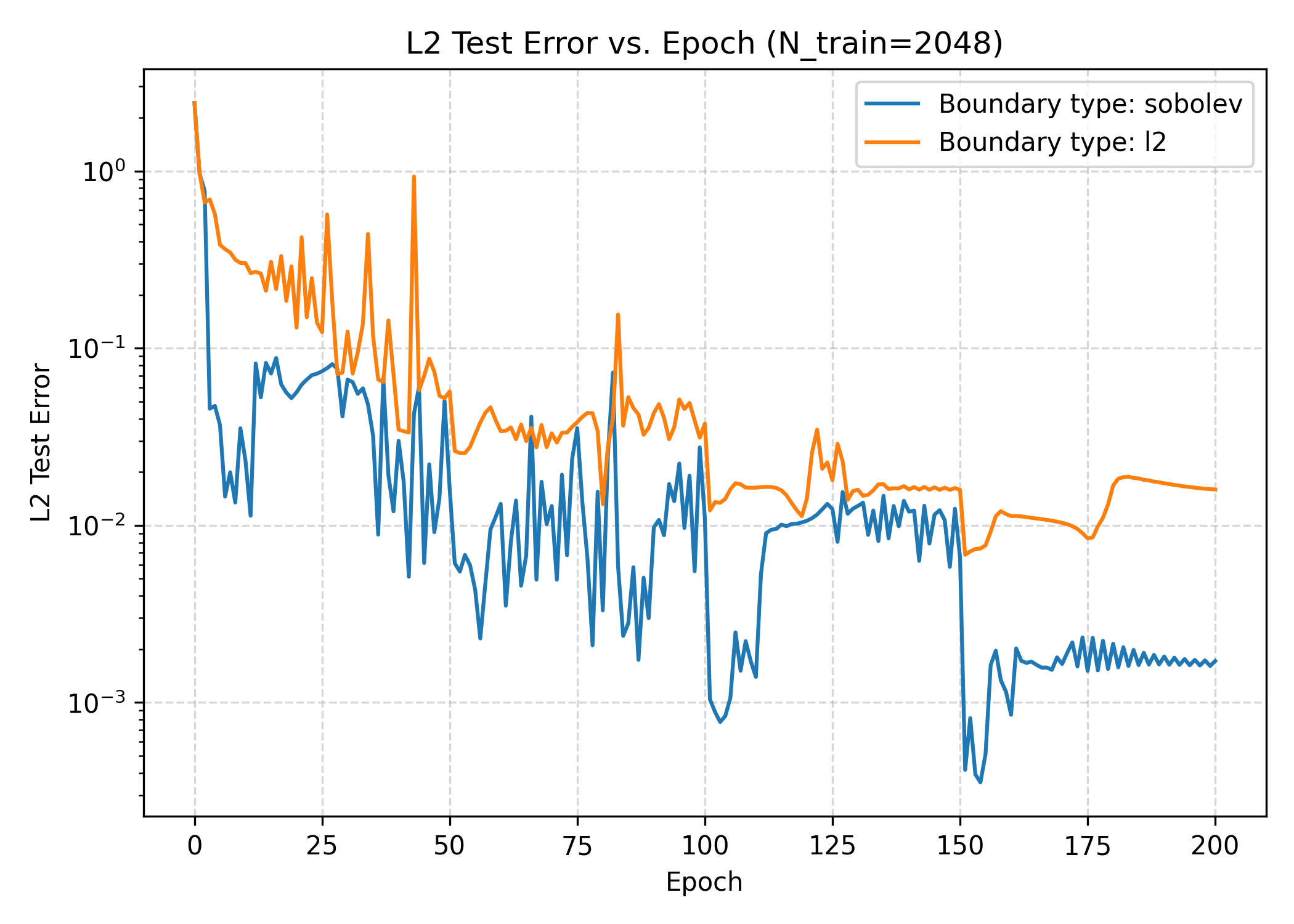}
  \caption{$\mathcal M_{\mathbb S},\,N=2048$}
  \label{fig:epoch_sphere_n2048}
\end{subfigure}

\vspace{0.6em}

\begin{subfigure}[t]{0.32\linewidth}
  \centering
  \includegraphics[width=\linewidth,height=0.26\textheight,keepaspectratio]{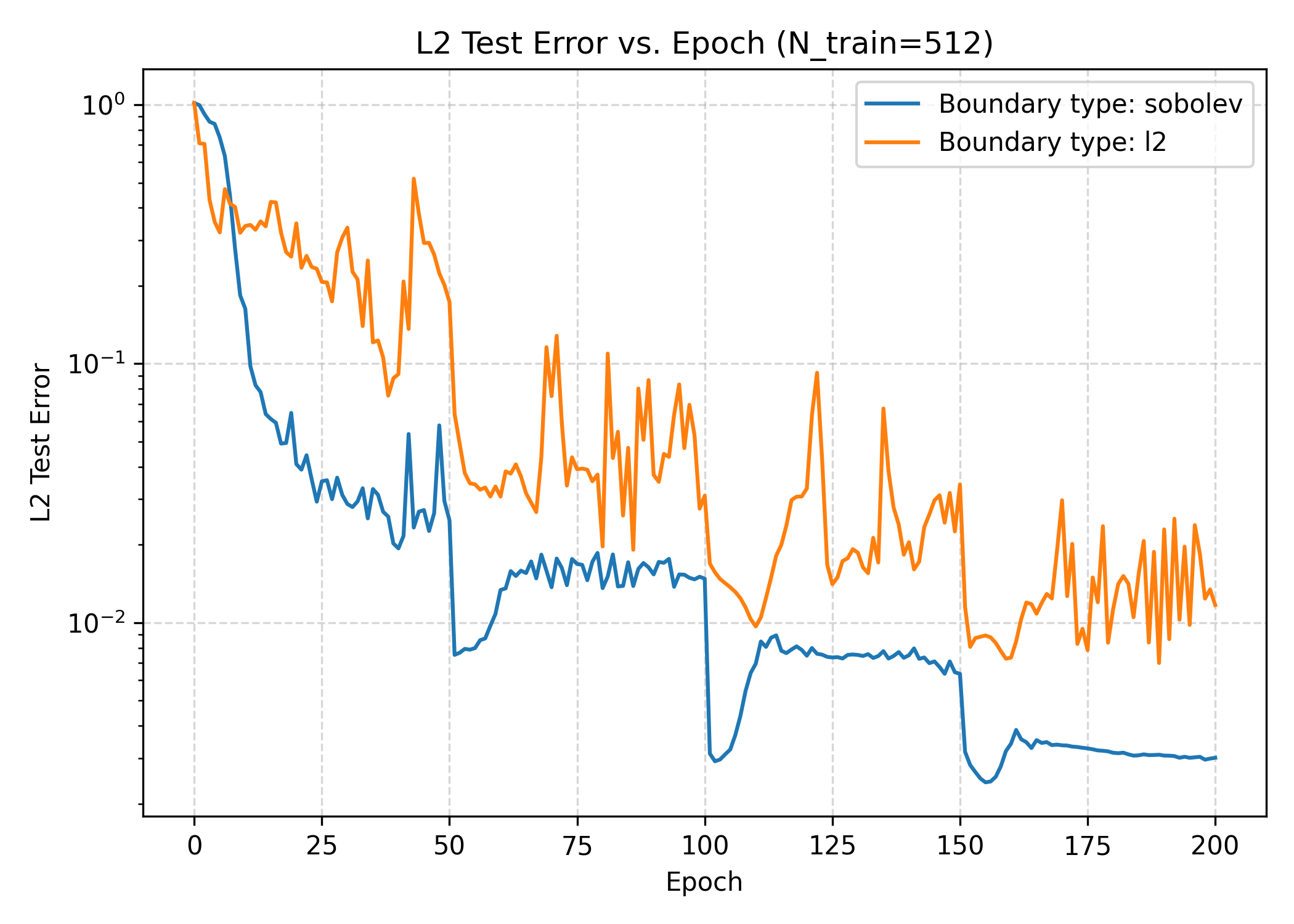}
  \caption{$\mathcal M_{\mathbb T},\,N=512$}
  \label{fig:epoch_torus_n512}
\end{subfigure}\hfill
\begin{subfigure}[t]{0.32\linewidth}
  \centering
  \includegraphics[width=\linewidth,height=0.26\textheight,keepaspectratio]{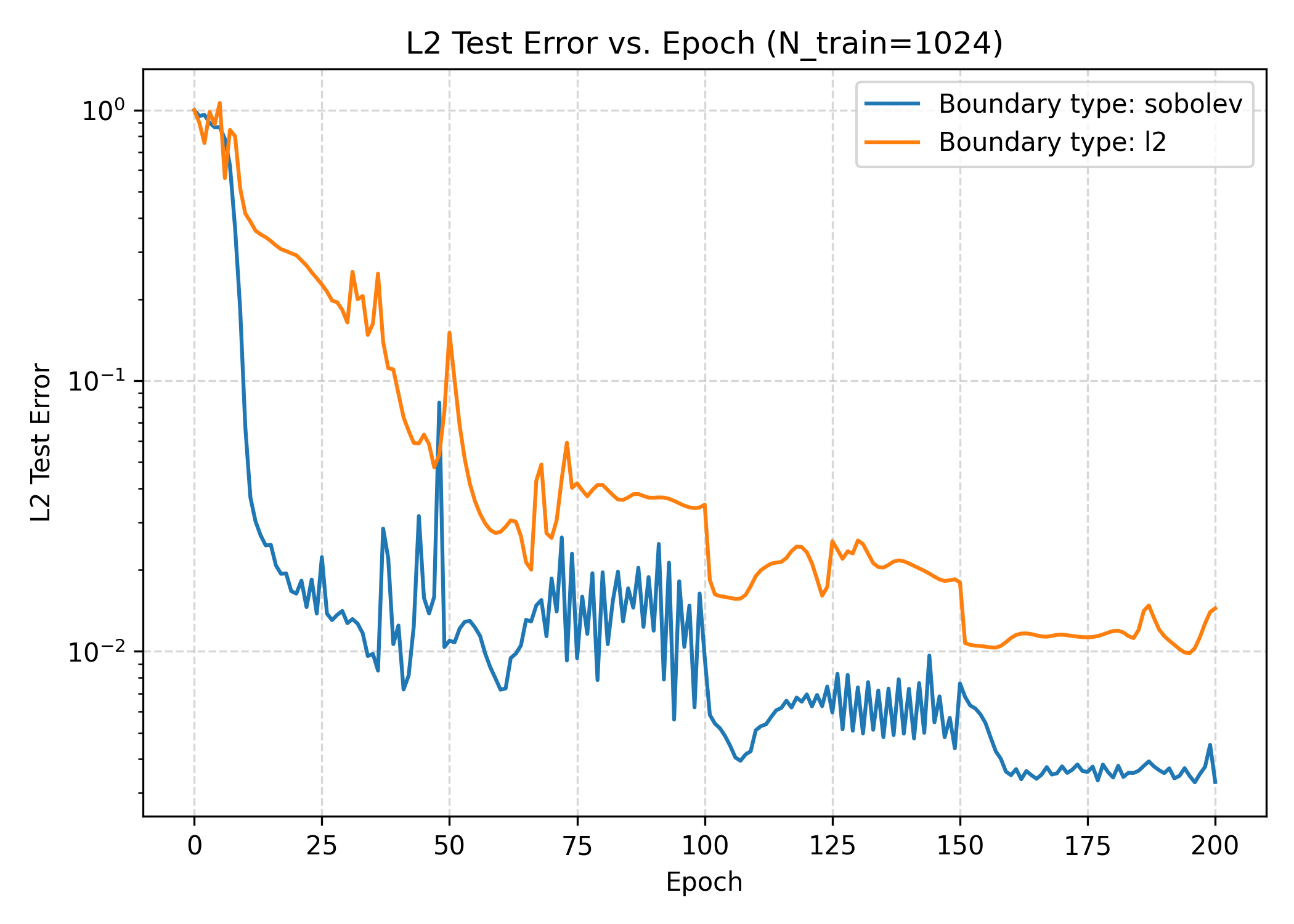}
  \caption{$\mathcal M_{\mathbb T},\,N=1024$}
  \label{fig:epoch_torus_n1024}
\end{subfigure}\hfill
\begin{subfigure}[t]{0.32\linewidth}
  \centering
  \includegraphics[width=\linewidth,height=0.26\textheight,keepaspectratio]{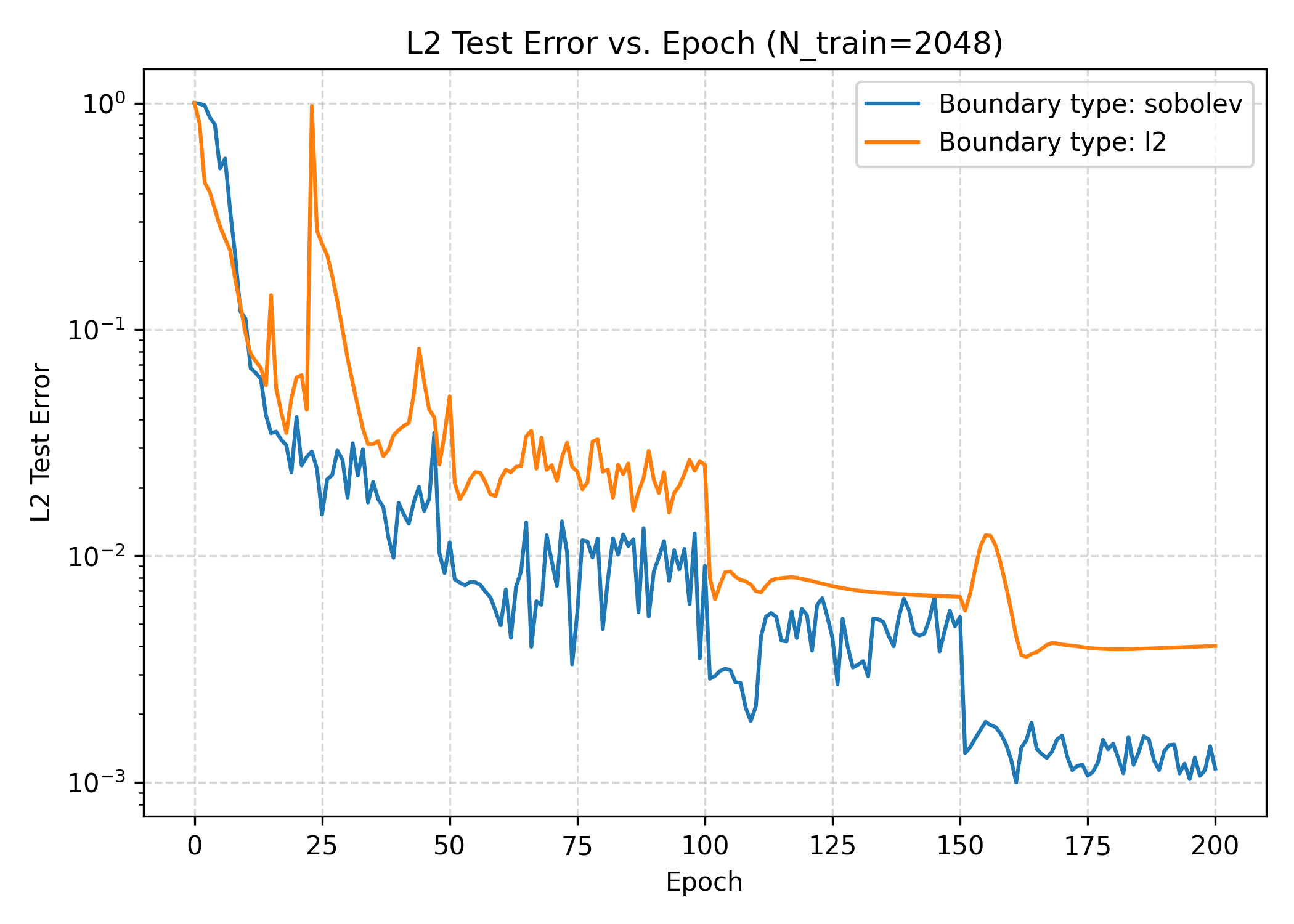}
  \caption{$\mathcal M_{\mathbb T},\,N=2048$}
  \label{fig:epoch_torus_n2048}
\end{subfigure}

\caption{Comparison of test $L^2$ error versus epoch under different interior sample sizes.}
\label{fig:epoch_6panels}
\end{figure}

We next assess whether increasing network width reduces error under a fixed sampling budget. On the upper torus $\mathcal M_{\mathbb T}$, we perform a channel-width sweep ($c \in \{2, \dots, 64\}$) for a 2-layer CNN. We fix the geometry, Sobolev boundary penalty ($\lambda=10$), and optimization scheme (Adam, 200 epochs). We report statistics over 5 runs using interior sizes $N\in\{16384, 32768\}$ and boundary size $M=2048$. As shown in \autoref{fig:feature_scaling_4panels}, test errors decrease as $c$ grows, confirming that expressive features help generalization. However, performance eventually saturates. This suggests that beyond a certain point, errors are dominated by data scarcity and optimization limits rather than capacity. Therefore, one must carefully balance model size against available data and compute.

\begin{figure}[!ht]
\centering

\begin{subfigure}[t]{0.44\linewidth}
  \centering
  \includegraphics[width=\linewidth,height=0.24\textheight,keepaspectratio]{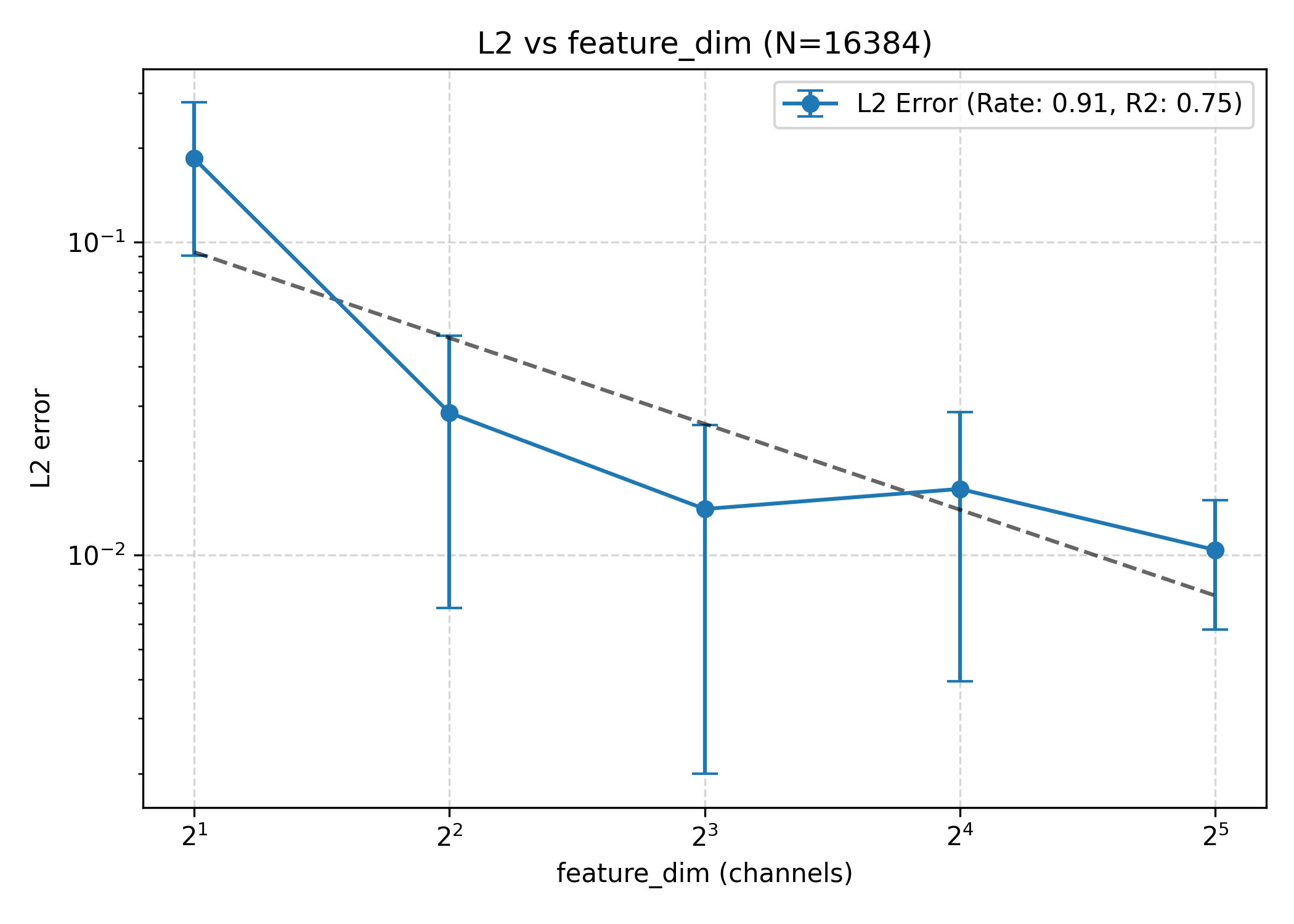}
  \caption{$\mathrm{Rel}\,L^2,\ N=16384$}
  \label{fig:feat_l2_N16384}
\end{subfigure}\hfill
\begin{subfigure}[t]{0.44\linewidth}
  \centering
  \includegraphics[width=\linewidth,height=0.24\textheight,keepaspectratio]{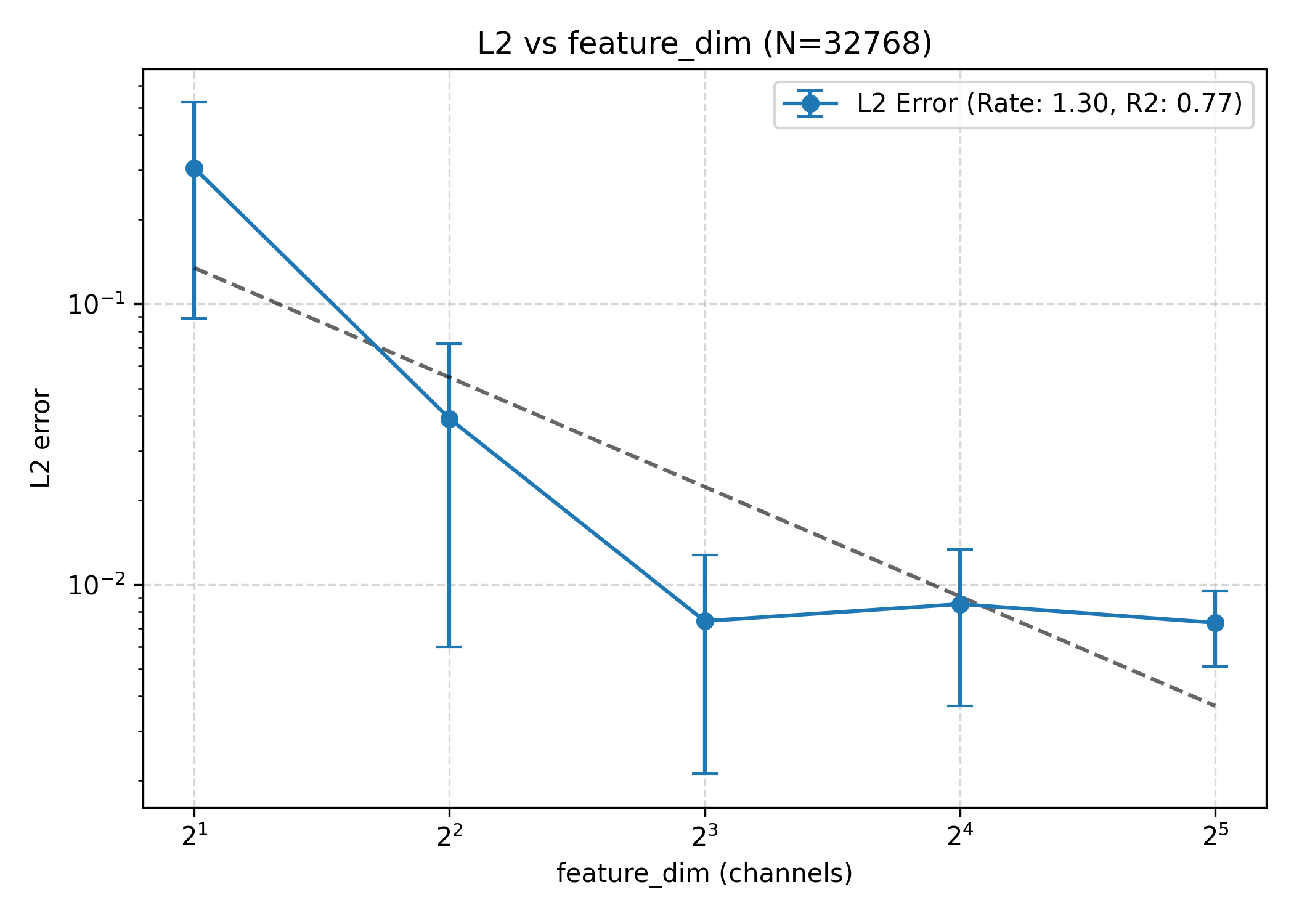}
  \caption{$\mathrm{Rel}\,L^2,\ N=32768$}
  \label{fig:feat_l2_N32768}
\end{subfigure}

\vspace{0.5em}

\begin{subfigure}[t]{0.44\linewidth}
  \centering
  \includegraphics[width=\linewidth,height=0.24\textheight,keepaspectratio]{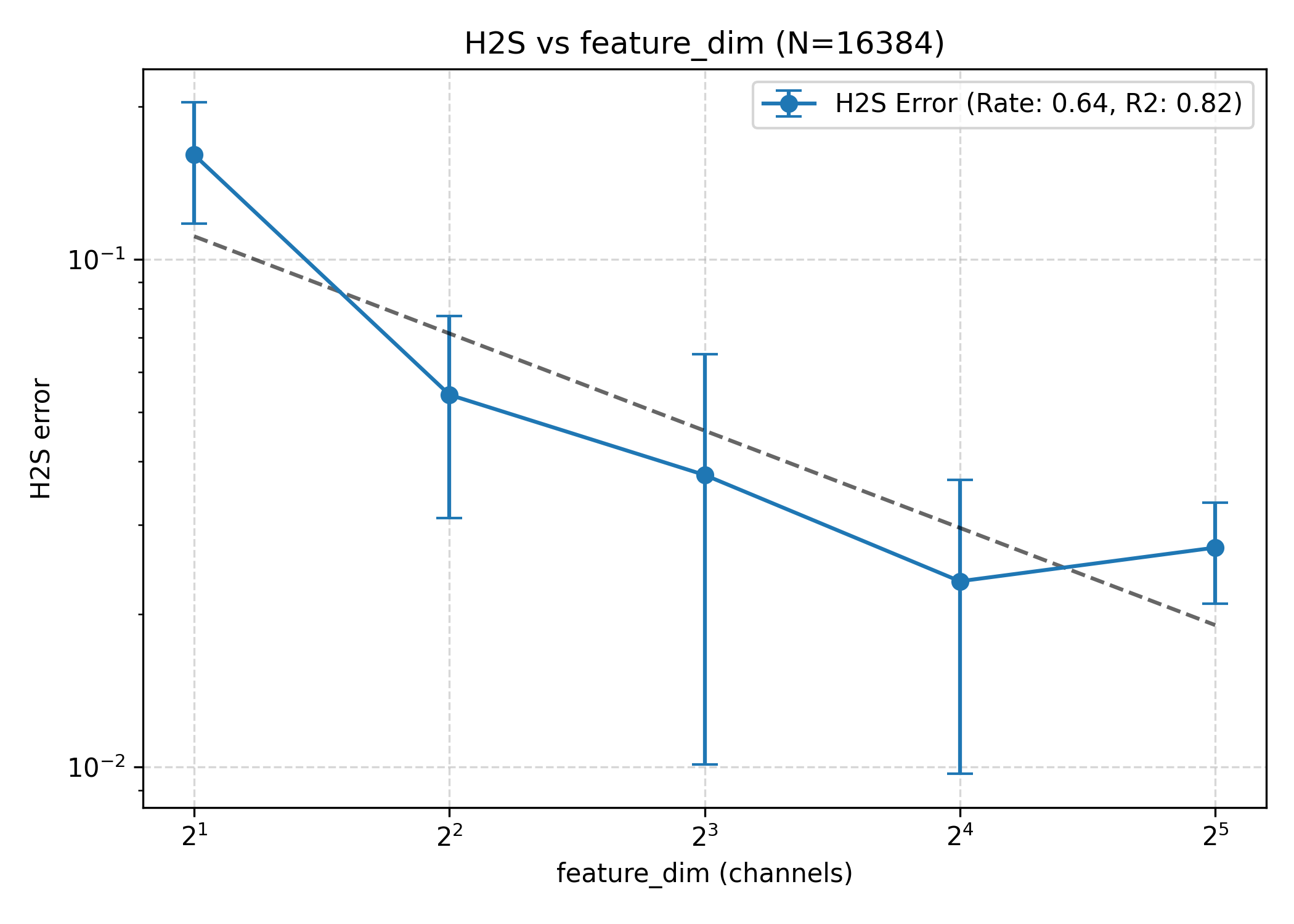}
  \caption{$\mathrm{Rel}\,H^{2s},\ N=16384$}
  \label{fig:feat_h2s_N16384}
\end{subfigure}\hfill
\begin{subfigure}[t]{0.44\linewidth}
  \centering
  \includegraphics[width=\linewidth,height=0.24\textheight,keepaspectratio]{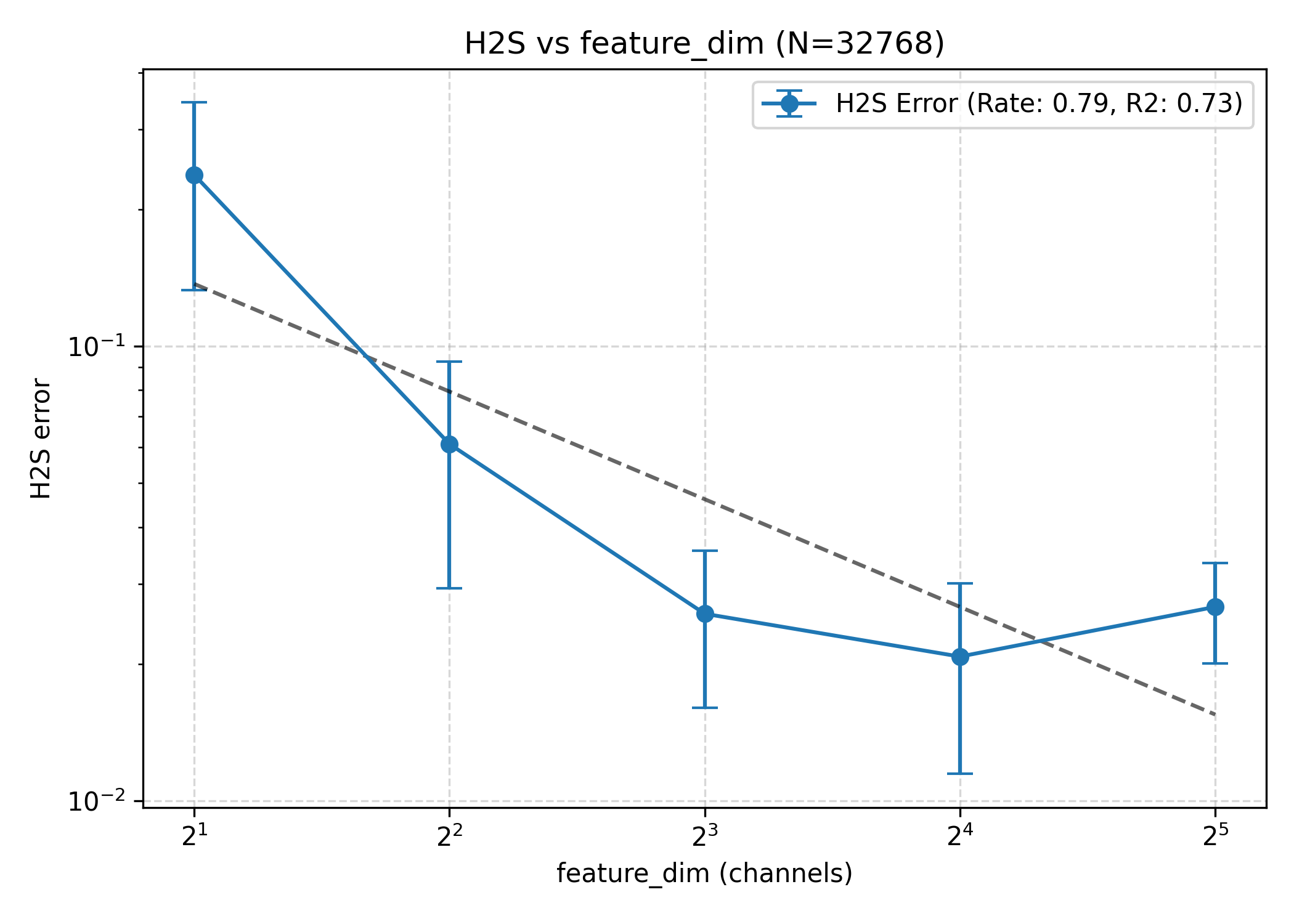}
  \caption{$\mathrm{Rel}\,H^{2s},\ N=32768$}
  \label{fig:feat_h2s_N32768}
\end{subfigure}

\caption{Channel-width sweep results on the upper-torus manifold $\mathcal M_{\mathbb T}$.}
\label{fig:feature_scaling_4panels}
\end{figure}
To isolate the effect of high-frequency boundary errors, we use manufactured solutions on the upper hemisphere:
\begin{equation*}
u_q(x,y,z)=(1+z)\operatorname{Im}\bigl((x+\mathrm{i}y)^q\bigr),
\qquad
u_q|_{z=0}=\sin(q\phi),
\end{equation*}
where \(\phi\) is the angular coordinate on the equatorial boundary. The trace contains a single Fourier mode \(q\), while the interior variation remains mild through the factor \(1+z\).

We use a CNN with \(L=3\) convolutional layers, channel width \(56\), MLP head \((24,8)\), and activation \(\mathrm{GeLU}^2\). The training set contains \(N=2048\) interior samples and \(M=256\) boundary samples. We sweep \(q\) and compare the standard \(L^2\) boundary loss with the spectral Sobolev boundary loss; the results are reported in~\autoref{tab:highfreq_q_sweep}.
\begin{table}[!htbp]
\centering
\small
\setlength{\tabcolsep}{4pt}
\renewcommand{\arraystretch}{1.15}
\caption{\(q\)-sweep for high-frequency boundary manufactured solutions. Values are reported as mean \(\pm\) std over five random seeds.}
\label{tab:highfreq_q_sweep}
\resizebox{\linewidth}{!}{%
\begin{tabular}{clcccc}
\hline
\(q\) & Boundary loss \(\mathcal L_{\mathrm{bnd}}\)
& Overall \(\mathrm{Rel}\,L^2\)
& Boundary \(\mathrm{Rel}\,L^2\)
& Overall reduction
& Boundary reduction \\
\hline
\multirow{2}{*}{2}
& \(L^2\)
& \(0.01661\pm0.02089\)
& \(0.02214\pm0.01795\)
& \multirow{2}{*}{\(2.25\times\)}
& \multirow{2}{*}{\(3.31\times\)} \\
& \(H^{2s-\frac12}\)
& \(0.007382\pm0.006089\)
& \(0.006696\pm0.005439\)
& & \\
\hline
\multirow{2}{*}{4}
& \(L^2\)
& \(0.03248\pm0.01560\)
& \(0.05407\pm0.03071\)
& \multirow{2}{*}{\(1.15\times\)}
& \multirow{2}{*}{\(2.44\times\)} \\
& \(H^{2s-\frac12}\)
& \(0.02820\pm0.03915\)
& \(0.02219\pm0.02863\)
& & \\
\hline
\multirow{2}{*}{6}
& \(L^2\)
& \(0.1973\pm0.2402\)
& \(0.3011\pm0.3100\)
& \multirow{2}{*}{\(5.55\times\)}
& \multirow{2}{*}{\(14.9\times\)} \\
& \(H^{2s-\frac12}\)
& \(0.03555\pm0.03845\)
& \(0.02016\pm0.01900\)
& & \\
\hline
\multirow{2}{*}{8}
& \(L^2\)
& \(0.5567\pm0.3894\)
& \(0.7247\pm0.4352\)
& \multirow{2}{*}{\(11.7\times\)}
& \multirow{2}{*}{\(22.1\times\)} \\
& \(H^{2s-\frac12}\)
& \(0.04774\pm0.03134\)
& \(0.03281\pm0.01119\)
& & \\
\hline
\end{tabular}}
\end{table}

~\autoref{tab:highfreq_q_sweep} shows that the advantage of the spectral Sobolev boundary loss becomes pronounced for high boundary frequencies. For \(q=8\), the overall \(\mathrm{Rel}\,L^2\) error decreases from \(0.5567\) to \(0.04774\), giving a \(11.7\times\) gain, while the boundary \(\mathrm{Rel}\,L^2\) error decreases from \(0.7247\) to \(0.03281\), giving a \(22.1\times\) gain. To explain this improvement, we examine the Fourier spectrum of the boundary error. The predicted and exact boundary traces are sampled on a dense uniform grid, and rFFT is applied to their difference. Since the exact boundary trace is dominated by the target mode \(q\), the spectrum reveals whether the error is concentrated near this mode or spread into spurious modes. We also report the off-target energy, defined as the relative Fourier energy outside the constant mode and the target mode. As shown in~\autoref{tab:boundary_fourier_diag}, for \(q=8\) this quantity decreases from \(0.7186\) to \(0.02368\), corresponding to a \(30.3\times\) reduction. This supports the interpretation that the Sobolev boundary loss suppresses non-target boundary oscillations through its frequency-weighted structure.

\begin{table}[!htbp]
\centering
\small
\setlength{\tabcolsep}{6pt}
\renewcommand{\arraystretch}{1.15}
\caption{Fourier diagnostic of the boundary error. The off-target energy measures the relative energy in non-target Fourier modes.}
\label{tab:boundary_fourier_diag}
\begin{tabular}{cccc}
\hline
\(q\) & \(L^2\) boundary loss & Sobolev boundary loss & Reduction \\
\hline
6 & \(0.2938\pm0.3116\) & \(0.01269\pm0.01040\) & \(23.1\times\) \\
8 & \(0.7186\pm0.4365\) & \(0.02368\pm0.01063\) & \(30.3\times\) \\
\hline
\end{tabular}
\end{table}

\begin{figure}[!htbp]
\centering
\includegraphics[width=\linewidth]{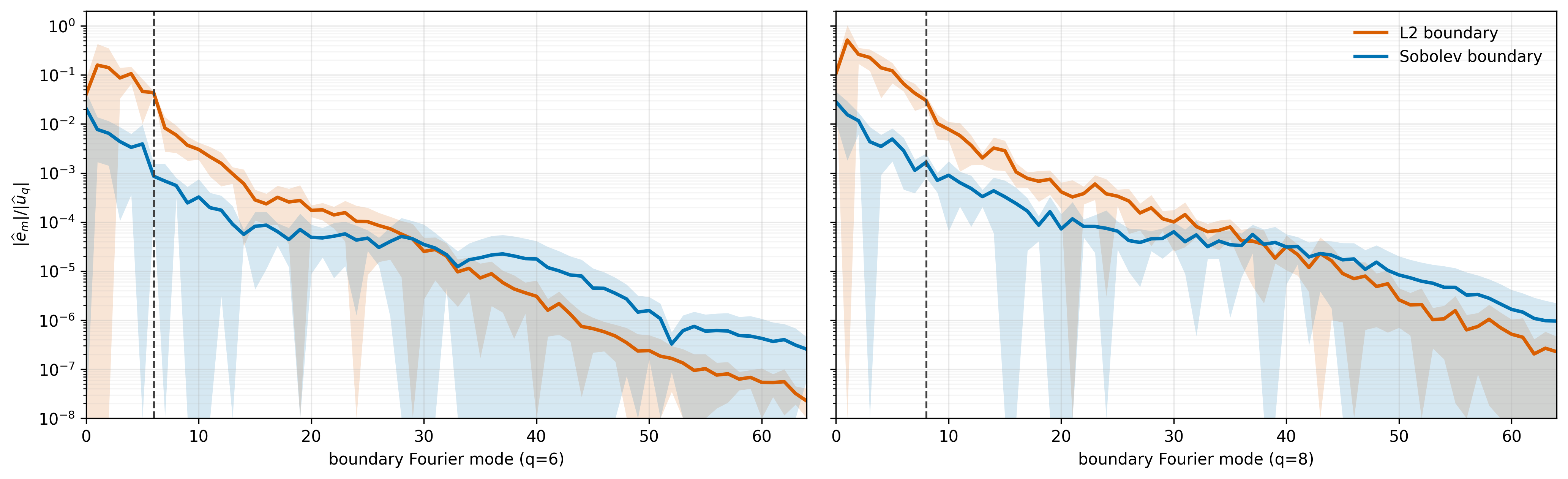}
\caption{Fourier spectra of the boundary error for \(q=6\) and \(q=8\). The modal errors are normalized by the amplitude of the exact target Fourier mode.}
\label{fig:boundary_fourier_modes}
\end{figure}
Finally, we examine the Fourier cutoff \(K\) in the spectral Sobolev boundary loss by fixing \(q=8\) and varying only the number of retained boundary modes; see~\autoref{tab:boundary_k_sweep}. The results show that a moderate cutoff already gives strong performance. Increasing \(K\) from \(8\) to \(16\) generally reduces the boundary error and the off-target energy, whereas \(K=32\) leads to a rebound in the error. This suggests a trade-off: too small a cutoff misses relevant boundary frequencies, while too large a cutoff may amplify the variance of the empirical boundary term and make the optimization less stable. In the present setting, \(K=12\) to \(K=16\) provides a relatively stable range.
\begin{table}[!htbp]
\centering
\small
\setlength{\tabcolsep}{8pt}
\renewcommand{\arraystretch}{1.15}
\caption{Fourier cutoff sweep with fixed \(q=8\). Values are reported as mean \(\pm\) std over five random seeds.}
\label{tab:boundary_k_sweep}
\begin{tabular}{ccc}
\hline
\(K\) & Boundary \(\mathrm{Rel}\,L^2\) & Off-target energy \\
\hline
8  & \(0.04188\pm0.006530\) & \(0.03975\pm0.009149\) \\
12 & \(0.03922\pm0.03800\)  & \(0.02794\pm0.02011\) \\
16 & \(0.02633\pm0.02021\)  & \(0.02094\pm0.01373\) \\
32 & \(0.03314\pm0.01467\)  & \(0.02659\pm0.01454\) \\
\hline
\end{tabular}
\end{table}
In summary, the experiments show that effective boundary regularization and suitable model capacity are both important for CNN-based solvers on manifolds with boundary. The Sobolev boundary penalty consistently reduces test errors and accelerates convergence compared with the standard PINN. For high-frequency Dirichlet data, the spectral Sobolev boundary loss further suppresses non-target Fourier modes, leading to lower boundary and overall errors. The \(K\)-sweep shows that a moderate number of boundary modes is already sufficient, while an overly large cutoff may increase the variance of the empirical boundary term and make optimization less stable. Finally, the channel-width sweep indicates that increasing the network width improves generalization, but the benefit saturates under a fixed data budget. These results suggest a practical trade-off between boundary spectral control, network capacity, and optimization cost.

\section{Conclusion}\label{Section: Conclusion}

We developed a simultaneous approximation framework for convolutional neural networks on compact Riemannian manifolds. The analysis is based on restricted kernels and their induced Sobolev/RKHS structures. Since the restricted kernels may be singular at the origin but analytic away from it, we decompose them into singular and analytic components and approximate each part separately. This yields CNN approximation rates governed by the intrinsic dimension of the manifold. We also show that a multichannel construction avoids the depth-related instability of one-channel CNNs and gives sharper complexity control.

We further applied this approximation framework to PICNNs for elliptic boundary value problems on manifolds with boundary. Guided by a priori estimates for elliptic boundary value problems, we replaced the standard low-order boundary penalty with a spectral Sobolev boundary loss, implemented through boundary eigenfunction expansions or FFT-based estimators. We also provided an error-decomposition analysis that separates the main sources of error, including approximation error, generalization errors arising from interior and boundary sampling, spectral truncation error, and optimization effects. 

Numerical results on both the hemisphere and half-torus validate the proposed Sobolev boundary treatment for low- and high-frequency benchmark problems. For low-frequency solutions, our method consistently achieves smaller overall errors and exhibits faster convergence during training, reflecting the advantage of incorporating higher-order boundary regularity into the learning objective. We also conducted architecture studies by varying the network capacity, including the number of channels and channel width, to examine how model complexity influences approximation quality. The observed trends are consistent with the approximation theory and further support the effectiveness of the proposed CNN architecture on manifolds. For high-frequency solutions, it significantly reduces both boundary and overall errors, often by one to two orders of magnitude. The Fourier cutoff study further confirms the predicted trade-off associated with spectral truncation, where too small a cutoff misses relevant high-frequency boundary information, while too large a cutoff amplifies sampling noise. These results show that boundary spectral control, model capacity, and optimization accuracy jointly determine the performance of PICNN solvers on manifolds.

Several problems remain open. A natural next step is to turn the present error decomposition into a full convergence-rate theorem by combining elliptic stability, boundary spectral truncation, and statistical estimates for the interior and boundary empirical losses. Another direction is to extend the RKHS-based approximation framework beyond the native-space regime, especially to weak-solution settings and lower-regularity PDEs on manifolds.

\section*{Appendix}
\addcontentsline{toc}{section}{Appendix}
\label{allapp}
\appendix
\section{Notations}\label{Appendix: Notations}

For positive sequences $A_n$ and $B_n$, we write $A_n \lesssim B_n$ if $A_n$ is bounded by $B_n$ up to a constant (independent of $n$). We write $A_n \asymp B_n$ if both $A_n \lesssim B_n$ and $B_n \lesssim A_n$ hold.

We consider a compact, connected, and complete $d$-dimensional Riemannian manifold $(\mathcal{M}^{d}, g)$ of bounded geometry. The metric $g$ defines an inner product on the tangent bundle $T\mathcal{M}^{d}$. In a local coordinate chart $(x^i)$, with natural basis $\partial_i$, the metric components are $g_{ij} = \langle \partial_i, \partial_j \rangle$, and the volume element is $dV_g = \sqrt{\det(g)}\, dx$.

The Levi-Civita connection $\nabla$ is the unique torsion-free connection satisfying $\nabla g = 0$. Its action on vector fields is given by $\nabla_j X^i = \partial_j X^i + \Gamma^i_{kj} X^k$, where $\Gamma^i_{kj} = \frac{1}{2} g^{il} ( \partial_k g_{lj} + \partial_j g_{lk} - \partial_l g_{kj} )$. The gradient of a function $f$ is $\mathrm{grad}_g f = g^{ij} \partial_j f \partial_i$, and the divergence of a vector field $X$ is $\mathbf{div}_g X = \nabla_i X^i = (\det g)^{-1/2} \partial_i ( (\det g)^{1/2} X^i )$. Consequently, the Laplace-Beltrami operator is defined as $\Delta_{\mathcal{M}} f := -\mathbf{div}_g(\mathrm{grad}_g f)$.

The operator $\Delta_{\mathcal{M}}$ admits a unique self-adjoint extension on $L^2(\mathcal{M}^d)$. By the spectral theorem, $\Delta_{\mathcal{M}} = \int_{0}^\infty \lambda \, dP(\lambda)$. For any Borel function $\Phi: [0,\infty) \to \mathbb{R}$, the operator $\Phi(\Delta_{\mathcal{M}})$ is defined via the spectral integral:
\begin{equation*}
    \langle \Phi(\Delta_{\mathcal{M}})f, h \rangle_{L^2} = \int_{0}^{\infty} \Phi(\lambda) \, dP_{f,h}(\lambda),
\end{equation*}
for all functions $f$ in the domain $\mathrm{dom}(\Phi(\Delta_{\mathcal{M}})) = \{ f \in L^2(\mathcal{M}^d) \mid \int_0^{\infty} |\Phi(\lambda)|^2 \, dP_{f,f}(\lambda) < \infty \}$.

Finally, we state the regularity assumption on the manifold.

\begin{definition}[Bounded geometry]\label{def:bounded_geometry}
A complete Riemannian manifold $\mathcal{M}^d$ has bounded geometry if:
\begin{enumerate}[(i)]
\item The injectivity radius is bounded from below: $\mathrm{inj}(\mathcal{M}^d) \ge \delta > 0$.
\item In any normal coordinate chart, the metric tensor $g_{ij}$ and its derivatives are uniformly bounded; that is, for all multi-indices $\alpha$, there exists $C_\alpha$ such that $|\partial^\alpha g_{ij}| \le C_\alpha$ independent of the chart.
\end{enumerate}
\end{definition}

\section{Supplement for approximation error analysis}\label{Appendix: Supplement for Approximation Error Analysis}
We first present the proof of \autoref{theorem: simultaneous approximation}. 
\subsection{Proof of~\autoref{theorem: simultaneous approximation}}\label{Appendix: Proof of simultaneous approximation 1channel}
\begin{proof}[Proof of~\autoref{theorem: simultaneous approximation}]\label{Proof: Proof of simultaneous approximation with 1 channel}
We recall that there exist quasi-uniform point sets $X=\{x_i\}_{i=1}^N\subset\mathcal M^d$ with fill distance $h_{X,\mathcal M^d}\lesssim N^{-1/d}$; see~\cite[Proposition~3]{azangulov2024convergence}. Hence, by~\cite[Theorem~10]{fuselier2012scattered}, for sufficiently large $N$, the kernel interpolant $I_Xf$ satisfies
\begin{equation}\label{eq:rbf_rate}
\|f-I_X f\|_{\mathcal H^s(\mathcal M^d)}
\le C\,N^{-(k-s)/d}\,\|f\|_{\mathcal H^k(\mathcal M^d)},
\qquad 0\le s<k.
\end{equation}
Writing $\Psi$ for the restricted kernel, the interpolant $I_Xf$ takes the form
\begin{equation*}
    I_X f(x)=\sum_{i=1}^N c_i\,\Psi(x,x_i), \qquad \text{with } I_X f(x_j)=f(x_j),\ j=1,\dots,N.
\end{equation*}
By \autoref{lemma: approximation of restricted kernel function}, for any $\varepsilon'>0$ and each $i=1,\dots,N$,
there exists a CNN $F_i^{(L+L_0+1)}\in\mathcal F$ such that
\begin{equation*}
    \|\Psi(\cdot,x_i)-F_i^{(L+L_0+1)}\|_{\mathcal H^s(\mathcal M^d)}\le \varepsilon'.
\end{equation*}
Moreover, the network complexity satisfies
\begin{equation*}
    L \lesssim \Big\lceil \frac{n_2D-1}{S-2}\Big\rceil,\quad
    L_0 \lesssim \log \log (1/\varepsilon'),\quad
    d_l \lesssim \log^2(1/\varepsilon'),\quad
    \mathcal S \lesssim \log^3(1/\varepsilon').
\end{equation*}
Define the linear-combination network
\begin{equation*}
    F^{(L+L_0+1)}(x):=\sum_{i=1}^N c_i\,F_i^{(L+L_0+1)}(x).
\end{equation*}
Following \autoref{lemma: downsampling}, we first generate the features $\xi_{i,j}$, $j=1,\ldots,n_2$, associated with each $\{x_i\}_{i=1}^N$, yielding $F^{L}(x)=\bigl(\langle x,\xi_{i,j}\rangle\bigr)_{i=1,\dots,N,\;j=1,\ldots,n_2}$. The first $\mathrm{ReQU}$ layer then produces the quadratic features $F^{L+1}(x)=\bigl(\|x-y_i\|_2^2\bigr)_{i=1,\ldots,N}$ by \autoref{lemma: feature construction} and \autoref{lemma: multiply}. Finally, \autoref{lemma: approximation of restricted kernel function} gives $F^{L+L_0+1}(x)=\sum_{i=1}^N c_i\,\widetilde{\phi}(\|x-y_i\|_2^2)=\sum_{i=1}^N c_i\,F_i^{(L+L_0+1)}(x)$. Because the same approximation $\widetilde{\phi}$ is used for every $i$, this step introduces only logarithmically many free parameters, and hence the total parameter complexity remains dominated by the CNN layers. The complexity of the resulting network is characterized by:
\begin{align*}
    L &\lesssim \Big\lceil \frac{n_2ND-1}{S-2}\Big\rceil,\qquad L_0 \lesssim \log \log(1/\varepsilon),\\
    d_\ell &\lesssim N\log^2(1/\varepsilon'),\qquad \ell=L+1,\dots,L+L_0+1,
\end{align*}
and $\mathcal S \lesssim N.$ Then, by the triangle inequality,
\begin{equation*}
    \|f-F^{(L+L_0+1)}\|_{\mathcal H^s(\mathcal M^d)} \le \|f-I_X f\|_{\mathcal H^s(\mathcal M^d)} +\|I_X f-F^{(L+L_0+1)}\|_{\mathcal H^s(\mathcal M^d)}.
\end{equation*}
The first term is bounded by~\eqref{eq:rbf_rate}. For the second term,
\begin{equation*}
    \|I_X f-F^{(L+L_0+1)}\|_{\mathcal H^s(\mathcal M^d)} \le \sum_{i=1}^N |c_i|\,\|\Psi(\cdot,x_i)-F_i^{(L+L_0+1)}\|_{\mathcal H^s(\mathcal M^d)} \le \|c\|_1\,\varepsilon'.
\end{equation*}
Since $c=K(X)^{-1}f(X)$, we have $\|c\|_1\le \sqrt{N}\|c\|_2\le \sqrt{N}\,\|K(X)^{-1}\|_{\mathrm{op}}\,\|f(X)\|_2$. To bound $\|K(X)^{-1}\|_{\mathrm{op}}$, we use the connection between the native space of the ambient kernel and that of its restriction to $M$ established in~\cite{fuselier2012scattered}. In particular, the native space of the restricted kernel is norm-equivalent to $\mathcal H^k(\mathcal M)$. It follows that, for any $a\in\mathbb R^N$, the quadratic form $a^\top K(X)a$ is equivalent to the $\mathcal H^{-k}(\mathcal M)$-norm squared of the discrete measure $\sum_{j=1}^N a_j\delta_{x_j}$. By testing this dual norm against suitably chosen bump functions supported in pairwise disjoint geodesic balls centered at the data sites, one obtains $a^\top K(X)a\gtrsim q_X^{\,2k-d}\|a\|_2^2$, and hence $\lambda_{\min}(K(X))\gtrsim q_X^{\,2k-d}$, where $q_X$ is the separation radius of $X$. Therefore, $\|K(X)^{-1}\|_{\mathrm{op}}=\lambda_{\min}(K(X))^{-1}\lesssim q_X^{-(2k-d)}$. For quasi-uniform point sets, $q_X\asymp N^{-1/d}$, so $\|K(X)^{-1}\|_{\mathrm{op}}\lesssim N^{(2k-d)/d}$. Consequently, $\|c\|_1\lesssim N^{1/2+(2k-d)/d}\,\|f(X)\|_2$. On the other hand, by the reproducing property and the boundedness of the diagonal kernel, $|f(x_i)|\le \|f\|_{\mathcal H^k(\mathcal M^d)}\|k(x_i,\cdot)\|_{\mathcal H^k(\mathcal M^d)}\lesssim \|f\|_{\mathcal H^k(\mathcal M^d)}$ for each $i$, and hence $\|f(X)\|_2\lesssim \sqrt{N}\,\|f\|_{\mathcal H^k(\mathcal M^d)}$. Combining these estimates yields $\|c\|_1\lesssim N^{1/2}\cdot N^{(2k-d)/d}\cdot N^{1/2}\,\|f\|_{\mathcal H^k(\mathcal M^d)}=N^{2k/d}\|f\|_{\mathcal H^k(\mathcal M^d)}$. Consequently,
\begin{equation*}
    \|f-F^{(L+L_0+1)}\|_{\mathcal H^s(\mathcal M^d)} \lesssim N^{-(k-s)/d} + N^{2k/d}\,\varepsilon'.
\end{equation*}
Choosing $\varepsilon'\asymp N^{-2k/d-(k-s)/d}$ yields
\begin{equation*}
    \|f-F^{(L+L_0+1)}\|_{\mathcal H^s(\mathcal M^d)}\lesssim N^{-(k-s)/d}.
\end{equation*}
Equivalently, for any $\varepsilon>0$, taking $N\asymp \varepsilon^{-d/(k-s)}$ gives
\begin{equation*}
    \|f-F^{(L+L_0+1)}\|_{\mathcal H^s(\mathcal M^d)}\lesssim \varepsilon.
\end{equation*}
With this choice, the network complexity follows from the previous bounds, completing the proof.
\end{proof}

\begin{proof}[Proof of \autoref{thm:mult-inner-products}]
The proof of this theorem differs from that of~\autoref{theorem: simultaneous approximation} primarily in the feature construction. We depart from the previous wavelet-based approach and leverage the parallel nature of multiple channels to construct features directly. This strategy avoids depth explosion, thereby yielding a tighter upper bound on the network complexity. The construction logic for an individual feature follows the same approach as in~\cite[Lemma A.4]{yang2025rates}. 

Let $\{\xi_{i,j}\}_{1\le i\le N,\ 1\le j\le n_2}\subset \mathbb R^D,$ be the feature constructed in \autoref{theorem: simultaneous approximation}, where \(N\) is the number of samples on manifold and \(n_2\) is the constant from~\autoref{lemma: feature construction}. Let $m:=Nn_2$, Rearrange this family as $\{\xi^{(r)}\}_{r=1}^m,\xi^{(r)}:=\xi_{i,j}\,\text{whenever }r=(i-1)n_2+j.$ 
For each \(r\in[m]\), write $\xi^{(r)}=(\xi^{(r)}_1,\dots,\xi^{(r)}_D)^\top,$ and allocate three channels $I_r:=\{3r-2,\,3r-1,\,3r\}$ to the \(r\)-th branch. The first two channels store the positive and negative parts of the running partial sum associated with \(\langle \xi^{(r)},x\rangle\), while the third channel shifts the unused tail of the input. Define \(w^{(0)}\in\mathbb R^{s\times 3m\times 1}\) and \(b^{(0)}=0\) by
\[
w^{(0)}_{:,\,3r-2,\,1}
=
\begin{pmatrix}
\xi^{(r)}_1\\
\vdots\\
\xi^{(r)}_s
\end{pmatrix},
\qquad
w^{(0)}_{:,\,3r-1,\,1}
=
-\begin{pmatrix}
\xi^{(r)}_1\\
\vdots\\
\xi^{(r)}_s
\end{pmatrix},
\qquad
w^{(0)}_{:,\,3r,\,1}
=
e_s:=
\begin{pmatrix}
0\\
\vdots\\
0\\
1
\end{pmatrix},
\]
for \(r=1,\dots,m\), and all remaining entries of \(w^{(0)}\) are set to zero. Since the one-sided convolution by \(e_s\) is the left translation by \(s-1\), the first hidden layer satisfies
\[
F^{(1)}_{1,\,3r-2}(x)
=
\sigma\!\left(\sum_{k=1}^{s} \xi^{(r)}_k x_k\right),
\qquad
F^{(1)}_{1,\,3r-1}(x)
=
\sigma\!\left(-\sum_{k=1}^{s} \xi^{(r)}_k x_k\right),
\]
and
\[
F^{(1)}_{:,\,3r}(x)
=
(x_s,x_{s+1},\dots,x_D,0,\dots,0)^\top.
\]

For \(\ell=1,\dots,L-1\), define \(w^{(\ell)}\in\mathbb R^{s\times 3m\times 3m}\) and \(b^{(\ell)}=0\). The tensor \(w^{(\ell)}\) is block diagonal with respect to the channel groups \(I_r\), namely
\[
w^{(\ell)}_{:,j',j}=0
\qquad\text{whenever }j\in I_r,\ j'\in I_{r'},\ r\neq r'.
\]
For each \(r\in[m]\), let $q_\ell:=\min\{\ell(s-1)+1,D\}.$ On the block \(I_r\times I_r\), define
\[
w^{(\ell)}_{:,\,3r-2,\,I_r}
=
\begin{pmatrix}
1 & -1 & 0\\
0 & 0 & \xi^{(r)}_{\ell(s-1)+2}\\
\vdots & \vdots & \vdots\\
0 & 0 & \xi^{(r)}_{q_{\ell+1}}\\
0 & 0 & 0\\
\vdots & \vdots & \vdots\\
0 & 0 & 0
\end{pmatrix},
\qquad
w^{(\ell)}_{:,\,3r-1,\,I_r}
=
-
\begin{pmatrix}
1 & -1 & 0\\
0 & 0 & \xi^{(r)}_{\ell(s-1)+2}\\
\vdots & \vdots & \vdots\\
0 & 0 & \xi^{(r)}_{q_{\ell+1}}\\
0 & 0 & 0\\
\vdots & \vdots & \vdots\\
0 & 0 & 0
\end{pmatrix},
\]
and
\[
w^{(\ell)}_{:,\,3r,\,I_r}
=
\begin{pmatrix}
0&0&0\\
\vdots&\vdots&\vdots\\
0&0&0\\
0&0&1
\end{pmatrix}.
\]
We claim that after the \(\ell\)-th hidden layer,
\[
F^{(\ell)}_{1,\,3r-2}(x)
=
\sigma\!\left(\sum_{k=1}^{q_\ell} \xi^{(r)}_k x_k\right),
\qquad
F^{(\ell)}_{1,\,3r-1}(x)
=
\sigma\!\left(-\sum_{k=1}^{q_\ell} \xi^{(r)}_k x_k\right),
\]
and
\[
F^{(\ell)}_{:,\,3r}(x)
=
(x_{q_\ell},x_{q_\ell+1},\dots,x_D,0,\dots,0)^\top.
\]
The case \(\ell=1\) follows from the construction of the first layer. Assume now that the claim holds for some \(\ell\in[L-1]\). Using the identity
\[
t=\sigma(t)-\sigma(-t),
\]
the first coordinate of channel \(3r-2\) at layer \(\ell+1\) becomes
\[
F^{(\ell)}_{1,\,3r-2}(x)-F^{(\ell)}_{1,\,3r-1}(x)
+\sum_{k=q_\ell+1}^{q_{\ell+1}} \xi^{(r)}_k x_k
=
\sum_{k=1}^{q_{\ell+1}} \xi^{(r)}_k x_k,
\]
while the first coordinate of channel \(3r-1\) becomes its negative counterpart, and the third channel is shifted again by \(s-1\). Hence, the claim follows by induction. Since
\[
L=\Big\lceil\frac{D-1}{s-1}\Big\rceil,
\]
we have \(q_L=D\), and therefore
\[
F^{(L)}_{1,\,3r-2}(x)=\sigma(\langle \xi^{(r)},x\rangle),
\qquad
F^{(L)}_{1,\,3r-1}(x)=\sigma(-\langle \xi^{(r)},x\rangle).
\]
For each \(r\in[m]\), choose the output matrix \(W^{(L,r)}\in\mathbb R^{D\times 3m}\) by
\[
W^{(L,r)}_{1,\,3r-2}=1,\qquad
W^{(L,r)}_{1,\,3r-1}=-1,
\]
and all remaining entries equal to zero. Then
\[
\langle W^{(L,r)},F^{(L)}(x)\rangle_F
=
F^{(L)}_{1,\,3r-2}(x)-F^{(L)}_{1,\,3r-1}(x)
=
\langle \xi^{(r)},x\rangle.
\]
Thus, we can compute the desired inner products in parallel $\bigl(\langle \xi^{(1)},x\rangle,\dots,\langle \xi^{(m)},x\rangle\bigr)$, which is exactly the desired vector after identifying \(\xi^{(r)}=\xi_{i,j}\) through \(r=(i-1)n_2+j\). The subsequent operations in the fully connected layers are identical to those in~\autoref{theorem: simultaneous approximation}, namely the construction of ridge polynomials and the high-order approximation of Sobolev kernels, which we omit here for brevity. Moreover, because the network is given by \(m\) parallel channel blocks with block-diagonal convolutional kernels and no cross-branch coupling, the total number of nonzero free parameters is the sum of the contributions of the individual branches. Hence, for fixed \(D\) and \(S\), it is of order \(O(m)\). The proof is thus complete.
\end{proof}

\subsection{Approximation of restricted kernel function}\label{Appendix: Approximation of restricted kernel function}
In this subsection, we show that our CNN efficiently approximates restricted kernels on a manifold in Sobolev norms. This is a foundational step for our main theorem. The core challenge is the slight singularity of Matérn kernels at $r=0$. To handle this, we use smooth spline-based cutoffs to avoid large high-order errors. $\mathrm{ReLU}$ alone isn't smooth enough, but skipping the cutoff entirely would make the model too complex due to the singularity. We write $\sigma_2(t):=(t)_+^2$ for the $\mathrm{ReQU}$ activation.

\begin{lemma}\label{lemma: approximation of restricted kernel function}
Let $\mathcal{M}^d\subset \mathbb{R}^D$ satisfy the assumptions in~\autoref{Section: Preliminaries}.
For any $\varepsilon>0$ and $0\le s<k$, there exists a CNN $F^{(L+L_0+1)}\in\mathcal{F}$ with depth
\[
L \lesssim \left\lceil \frac{n_2D-1}{S-2}\right\rceil,\qquad
L_0 \lesssim \log \log (1/\varepsilon),
\]
where $n_2$ comes from~\autoref{lemma: feature construction}, width $d_\ell\lesssim \log^2(1/\varepsilon)$, and number of parameters
$\mathcal{S}\lesssim \log^3(1/\varepsilon)$, such that for any fixed $y\in\mathcal{M}^d$,
\[
\|\Psi(\cdot,y)-F^{(L+L_0+1)}(\cdot,y)\|_{\mathcal{H}^s(\mathcal{M}^d)}\le \varepsilon.
\]
\end{lemma}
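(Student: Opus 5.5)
The plan is to write the restricted kernel as a radial profile of the squared ambient distance, $\Psi(x,y)=\phi(\|x-y\|_2^2)$. Here $\phi(t)=\Phi(\sqrt t)$ comes from the Mat\'ern (Sobolev) kernel of order $\tau$ on $\mathbb R^D$. The network then computes, in order: the features $\langle x,\xi_j\rangle$, the quadratic $t(x)=\|x-y\|_2^2$, and a ReQU approximation $\widetilde\phi$ of $\phi$ on $[0,\operatorname{diam}(\mathcal M^d)^2]$.

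\textbf{Steps 1--2: computing $t(x)$ exactly.} By \autoref{lemma: feature construction} and \autoref{lemma: downsampling}, the $L\lesssim\lceil (n_2D-1)/(S-2)\rceil$ ReLU convolutional layers followed by downsampling output the linear features $\langle x,\xi_j\rangle$, $j=1,\dots,n_2$. These features are chosen so that $\|x\|_2^2$ is a fixed linear combination of their squares. Adding a bias handles the shift by $y$. The first ReQU layer then produces $t(x)$ exactly through $u^2=\sigma_2(u)+\sigma_2(-u)$ (cf.\ \autoref{lemma: multiply}). This part of the network is exact, so it contributes no error.

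\textbf{Step 3: splitting $\phi$.} $\phi$ is analytic on $(0,\infty)$, but near $t=0$ it has a singular term of the form $t^{\tau-D/2}$ (times a logarithm in the integer case). Since $\tau-D/2=k-d/2$, $\phi(\|\cdot-y\|^2)|_{\mathcal M}$ has only $\mathcal H^{k}$-type regularity near the diagonal. I split $\phi=\chi_\delta\phi+(1-\chi_\delta)\phi$, where $\chi_\delta$ is a ReQU-representable quadratic B-spline cutoff supported in $[0,2\delta]$.
\begin{itemize}
\item \emph{Singular piece.} I approximate $\chi_\delta\phi$ either by $0$ or by its Taylor jet of the regular part. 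Its $\mathcal H^s(\mathcal M^d)$ norm is governed by the geodesic ball of radius $\sqrt\delta$ around $y$. Scaling in local normal coordinates (via the chart norm of \autoref{theorem: sobolev equivalence}) should give a bound $\lesssim \delta^{(k-s)/2}$, possibly with a logarithmic factor. Choosing $\delta\asymp\varepsilon^{2/(k-s)}$ makes this $\le\varepsilon/2$.
\item \emph{Analytic piece.} On $[\delta,\operatorname{diam}^2]$ I cover the interval by dyadic blocks $[2^j\delta,2^{j+1}\delta]$; there are $O(\log(1/\varepsilon))$ of them. On each block $\phi$ extends holomorphically to a Bernstein ellipse of fixed relative size. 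Chebyshev polynomials of degree $O(\log(1/\varepsilon))$ then approximate it in $C^{\lceil s\rceil}$ to accuracy $\varepsilon$. The blocks are glued with ReQU B-spline partitions of unity.
\end{itemize}

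\textbf{Step 4: ReQU realization and complexity.} A degree-$n$ polynomial is realized exactly by a ReQU network of depth $O(\log n)$ and width $O(n)$, using $u^2$ and $uv=\frac14[(u+v)^2-(u-v)^2]$ with repeated squaring. With $n\asymp\log(1/\varepsilon)$ this gives depth $L_0\lesssim\log\log(1/\varepsilon)$. Running the $O(\log(1/\varepsilon))$ blocks in parallel gives width $\lesssim\log^2(1/\varepsilon)$ and parameters $\lesssim\log^3(1/\varepsilon)$. The bounds must be uniform in $y$, which holds because only the bias depends on $y$.

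\textbf{Step 5: transfer to $\mathcal H^s(\mathcal M^d)$.} Derivatives of $\widetilde\phi(t(x))$ on $\mathcal M^d$ are controlled by the chain rule: $t$ is smooth with bounded derivatives on the compact manifold (bounded geometry), so $\|(\phi-\widetilde\phi)\circ t\|_{\mathcal H^s}\lesssim\|\phi-\widetilde\phi\|_{C^{\lceil s\rceil}}$ on the analytic region. I expect the main obstacle to be the interface at $t\sim\delta$. There the Chebyshev constants blow up like $\delta^{-(\lceil s\rceil)}$, which must be offset by working in relative (dyadically rescaled) variables. The cutoff derivatives also scale like $\delta^{-j}$; these are balanced by the vanishing of the singular term, and the same scaling argument used for the singular piece should absorb them.
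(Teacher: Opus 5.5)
\textbf{Gap: quadratic splines are too rough.} The step that fails as written is the use of \emph{quadratic} B-splines, both for the cutoff $\chi_\delta$ and for the partition of unity that glues your dyadic blocks. Such splines are only $C^1$: their second derivatives jump at the knots. After composing with $t(x)=\|x-y\|_2^2$, the approximant $\widetilde\phi\circ t$ therefore has jumps in its second normal derivative across the level sets $\{t=\kappa\}$, for each knot $\kappa$.

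These jumps multiply quantities that are small but generically nonzero. At a knot of the partition, the jump of $(\sum_j\omega_jp_j)''$ is $\sum_j[\omega_j''](p_j-p_{j_0})$, where $[\cdot]$ denotes the jump; this is a combination of differences of neighbouring local approximants. At the knots of $\chi_\delta$, the jump multiplies the mismatch between your near-zero piece and the Chebyshev piece.

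A function with such a jump is not in $\mathcal H^s$ for $s\ge 5/2$. Hence, for $5/2\le s<k$, the error $\Psi(\cdot,y)-\widetilde\phi\circ t$ has infinite $\mathcal H^s(\mathcal M^d)$ norm, however accurate the polynomials are. Your Step~5 bound through $\|\phi-\widetilde\phi\|_{C^{\lceil s\rceil}}$ is also unavailable, since $\widetilde\phi\notin C^{\lceil s\rceil}$. Your scaling argument for the singular piece needs the same smoothness of the cutoff.

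The repair is to use B-splines of degree $p\ge\lceil s\rceil+1$. These remain exactly ReQU-realizable: $\sigma_p(t)=\sigma_2(t)\,t^{p-2}$ for all real $t$, and products are exact via $uv=\tfrac14[(u+v)^2-(u-v)^2]$ with $w^2=\sigma_2(w)+\sigma_2(-w)$. The extra depth is $O(\log p)$, independent of $\varepsilon$. This is exactly why the paper uses a cutoff $\chi_p\in C^p$ with $p\ge s'+1$, built from truncated powers of degree $p+1$.

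\textbf{Smaller point: the ``approximate by $0$'' option fails.} Since $\phi(0)\ne0$, the function $\chi_\delta\phi$ composed with $t$ is a bump of height about $\phi(0)$ on a ball of radius $\sqrt\delta$. Its $\mathcal H^s$ norm is $\asymp\delta^{(d/2-s)/2}$, which does not even tend to zero when $s\ge d/2$. You must keep the regular part. Write $\phi(t)=A(t)+t^\nu B(t)$ with $\nu=\tau-D/2=k-d/2$. Then either use your jet $\chi_\delta T_mA$, where $T_mA$ is the Taylor polynomial of $A$ at $0$ (its remainder contributes $\lesssim\delta^{m+1+(d/2-s)/2}$), or, as the paper does, approximate $A$ and $B$ globally and cut off only the singular factor $u^\nu$.

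The remaining term $\chi_\delta t^\nu B(t)$ scales like $\delta^{(2\nu+d/2-s)/2}=\delta^{(2k-d/2-s)/2}$. This is better than your $\delta^{(k-s)/2}$, because the kernel lies in $\mathcal H^r$ for all $r<2k-d/2$, not merely in $\mathcal H^k$. So your choice of $\delta$ remains valid.

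\textbf{Comparison with the paper's route.} With these repairs, your argument is a legitimate alternative. You measure the error intrinsically on $\mathcal M^d$: normal-coordinate scaling near $y$, where $\|x-y\|_2^2=|z|^2(1+O(|z|^2))$, and the chain rule elsewhere. The paper instead bounds $\|\widetilde\phi-\phi\|_{H^{s'}(\Omega)}$ in $\mathbb R^D$, where the singularity is exactly radial, and then applies the trace theorem with $s=s'-(D-d)/2$. You also approximate $\phi$ blockwise on dyadic intervals, rather than combining global polynomials for $A,B$ with a dyadic construction for $u^\nu$. Your version avoids the trace theorem. Letting $y$ enter only through biases in $\|x-y\|_2^2=\sum_ja_j(\langle x,\xi_j\rangle-\langle y,\xi_j\rangle)^2$ also makes explicit that the $\widetilde\phi$ block is shared across centres.
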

We collect the lemmas that will be used to explicitly build the required network.
\begin{lemma}[Lemma 5 in~\cite{zhou2018deep}]
\label{lemma: feature construction}
Let $D\in\mathbb{N}$ and $l\in\mathbb{N}$.
There exists an integer $n_l$ and unit vectors $\{\xi_i\}_{i=1}^{n_l}\subset \mathbb{S}^{D-1}$ depending only on $(D,l)$ such that for any polynomial
$P_l\in \mathcal{P}_l(\mathbb{R}^D)$ of degree at most $l$, there exist univariate polynomials
$\{p_{i,l}\}_{i=1}^{n_l}\subset \mathcal{P}_l(\mathbb{R})$ satisfying
\[
P_l(x)=\sum_{i=1}^{n_l} p_{i,l}(\xi_i\cdot x),\qquad x\in\mathbb{R}^D.
\]
\end{lemma}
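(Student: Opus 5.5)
The plan is a ridge-function argument: show that the span of ridge polynomials in $n_l$ fixed directions is all of $\mathcal P_l(\mathbb R^D)$, by dimension counting plus a classical generic-directions result.

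\textbf{Step 1 (reduce to homogeneous pieces).} It suffices to represent homogeneous polynomials of degree exactly $j$ for each $0\le j\le l$, using $p_{i,l}(t)=\sum_{j} a_{ij}t^j$. So the target is: choose directions $\{\xi_i\}$ such that, for every $j\le l$, the powers $\{(\xi_i\cdot x)^j\}_{i}$ span the space $\mathcal H_j$ of homogeneous polynomials of degree $j$ in $D$ variables. Its dimension is $\binom{D+j-1}{j}$.

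\textbf{Step 2 (spanning for one fixed $j$).} Take $n_l:=\binom{D+l-1}{l}$, the largest of these dimensions. Use the apolar (Fischer) pairing $\langle P,Q\rangle=P(\partial)Q$ on $\mathcal H_j$, which is a nondegenerate inner product. One has $\langle P,(\xi\cdot x)^j\rangle=j!\,P(\xi)$. Hence, if $P\in\mathcal H_j$ annihilates every $(\xi\cdot x)^j$ with $\xi$ on the sphere, then $P$ vanishes on $\mathbb S^{D-1}$, and by homogeneity $P\equiv 0$. Therefore the family $\{(\xi\cdot x)^j:\xi\in\mathbb S^{D-1}\}$ spans $\mathcal H_j$.

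\textbf{Step 3 (one choice of directions for all $j$).} A finite subfamily of size $\dim\mathcal H_j$ spans $\mathcal H_j$ exactly when a certain determinant is nonzero. This determinant is a polynomial in the coordinates of $(\xi_1,\dots,\xi_{n})$, and by Step 2 it is not identically zero on $(\mathbb S^{D-1})^{n}$. Its nonvanishing set is therefore open and dense (in the Zariski sense on the product of spheres). Taking $n_l$ points in general position, meaning off the finite union of these proper algebraic subsets for $j=0,\dots,l$, the first $\dim\mathcal H_j$ of them span $\mathcal H_j$ simultaneously for every $j$. Adding more directions is harmless.

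\textbf{Step 4 (assemble).} Write $P_l=\sum_j P^{(j)}$ with $P^{(j)}\in\mathcal H_j$, and expand $P^{(j)}=\sum_i a_{ij}(\xi_i\cdot x)^j$. Then set $p_{i,l}(t)=\sum_{j\le l}a_{ij}t^j$. The directions depend only on $(D,l)$, which gives the claim.

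\textbf{Main obstacle.} The delicate point is Step 3: a single finite set of directions must work for all degrees at once, not only for each degree separately. I would resolve it through the genericity argument above, namely the nonvanishing of the Gram/Vandermonde-type determinants on a dense set. A more constructive alternative is a Vandermonde-type choice of directions of the form $\xi=(1,t,t^2,\dots)$ after normalization.
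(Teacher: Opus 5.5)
Your proof is correct. The paper gives no proof of its own; it imports the lemma from the cited reference, so your argument serves as a self-contained replacement. Your Step 2 is the classical duality behind such ridge-function results. Under the Fischer pairing, the annihilator of $\mathrm{span}\{(\xi_i\cdot x)^j\}$ is exactly the set of $Q\in\mathcal H_j$ that vanish at every $\xi_i$.

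Your Step 3 genericity argument is also valid, but the ``Zariski open and dense'' claim needs one line of justification. For $D\ge 2$, $(\mathbb S^{D-1})^{n}$ is a connected real-analytic manifold, so a polynomial that is nonzero somewhere has a closed, nowhere dense zero set. Finitely many such sets cannot cover the manifold. The case $D=1$ is trivial.

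More to the point, the obstacle you single out is not really there, and removing it gives a cleaner proof. Once $\{(\xi_i\cdot x)^l\}_{i=1}^{n_l}$ is a basis of $\mathcal H_l$, the same directions automatically span every $\mathcal H_j$ with $j<l$. You get such a basis simply by extracting one from the spanning family of Step 2. To see the claim for lower degrees, suppose $Q\in\mathcal H_j$ vanishes at all $\xi_i$, and let $\ell$ be any nonzero linear form. Then $Q\,\ell^{\,l-j}\in\mathcal H_l$ vanishes there as well, so it is the zero polynomial, and hence $Q=0$. Equivalently, use $\partial_m(\xi\cdot x)^{j+1}=(j+1)\xi_m(\xi\cdot x)^j$ together with the fact that every degree-$j$ monomial is a first partial derivative of a degree-$(j+1)$ monomial. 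So any set of directions that is unisolvent for $\mathcal H_l$ works for all lower degrees at once, and no intersection of generic sets is needed.

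One side remark in your proposal is wrong, although your proof does not use it. Directions on the moment curve $(1,t,\dots,t^{D-1})$ fail when $D\ge 3$ and $l\ge 2$, because all such points lie on the quadric $x_1x_3=x_2^2$. Then $(x_1x_3-x_2^2)\,\ell^{\,l-2}$ is a nonzero element of $\mathcal H_l$ vanishing at every chosen direction. Consequently the degree-$l$ ridge powers span only a proper subspace of $\mathcal H_l$, however many points you take.

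A correct explicit choice is $\xi_\alpha=\alpha/\|\alpha\|_2$ for $\alpha\in\mathbb N_0^D$ with $|\alpha|=l$; there are exactly $\dim\mathcal H_l$ of these. Restricting to the hyperplane $x_1+\dots+x_D=l$ and using homogeneity reduces unisolvence for $\mathcal H_l$ to the classical unisolvence of the principal lattice of the simplex for $\mathcal P_l(\mathbb R^{D-1})$.
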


\begin{lemma}[Lemma 3 in~\cite{fang2020theory}]
\label{lemma: downsampling}
Let $m\in\mathbb{N}$ and let $y=\{y_1,\ldots,y_m\}\subset \mathbb{S}^{D-1}$ be a set of features.
There exist convolution kernels $\{w^{(\ell)}\}_{\ell=1}^{L}$ with identical filter size $S$ and depth
\[
L \le \left\lceil \frac{mD-1}{S-2}\right\rceil,
\]
and biases $b^{(\ell)}$ such that the CNN can output the feature inner-products (up to a constant shift):
\[
\mathcal{D}\big(F^{(L)}(x)\big)=
\begin{bmatrix}
\langle x,y_1\rangle\\
\vdots\\
\langle x,y_m\rangle\\
0\\
\vdots\\
0
\end{bmatrix}
+ B^{(L)}\mathbf{1},
\qquad
B^{(L)}=\prod_{\ell=1}^L \|w^{(\ell)}\|_1 .
\]
\end{lemma}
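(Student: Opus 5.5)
The plan is to realize all $m$ inner products with a \emph{single} long convolution, and then to split that convolution into $L$ layers with filters of size $S$. Throughout, the network is evaluated on a bounded set, say $\|x\|_2\le 1$, which is harmless after rescaling since $\mathcal M^d\subset\mathbb R^D$ is compact.

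\textbf{Step 1: one filter that produces all inner products.} Write $y_j=(y_{j,1},\dots,y_{j,D})$. Define a sequence $W=(W_0,\dots,W_{mD-1})$ by
\[
W_{jD-k}=y_{j,k},\qquad j=1,\dots,m,\quad k=1,\dots,D.
\]
The $j$-th feature occupies the reversed index block $\{(j-1)D,\dots,jD-1\}$, and distinct blocks do not overlap. The full (expanding) convolution of $W$ with $x\in\mathbb R^D$ is the vector $T_W x\in\mathbb R^{mD+D-1}$ with entries $(T_Wx)_i=\sum_k W_{i-k}x_k$. At the index $i=jD$ this entry equals $\sum_{k=1}^D y_{j,k}x_k=\langle x,y_j\rangle$. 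The downsampling operator $\mathcal D$ reads exactly these entries, $x_{i\cdot d}$ with the block length, so $\mathcal D(T_Wx)$ returns $(\langle x,y_1\rangle,\dots,\langle x,y_m\rangle,0,\dots)^\top$. The trailing zeros come from positions beyond the last block.

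\textbf{Step 2: factorize $W$ into short filters.} Identify $W$ with the polynomial $\widetilde W(z)=\sum_i W_i z^i$, which has degree at most $mD-1$. Composing Toeplitz layers $T^{(L)}\cdots T^{(1)}$ corresponds to multiplying the polynomials $\widetilde w^{(L)}\cdots\widetilde w^{(1)}$. So it suffices to write $\widetilde W$ as a product of real polynomials of degree at most $S-1$.

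Over $\mathbb R$, $\widetilde W$ splits into linear and irreducible quadratic factors. Group them greedily: keep adding factors to the current group while its degree stays at most $S-1$. Every factor has degree at most $2$, so each completed group has degree at least $S-2$. Therefore the number of groups satisfies
\[
L\le \Big\lceil \frac{mD-1}{S-2}\Big\rceil .
\]
Absorb the leading constant into one filter. Pad shorter filters with zeros so that every filter has support $\{0,\dots,S-1\}$. This is consistent with the width recursion $d_\ell=d_{\ell-1}+S-1$.

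\textbf{Step 3: neutralize the ReLU with biases.} This step produces the constant shift $B^{(L)}\mathbf 1$. Let $B^{(0)}=1$ and $B^{(\ell)}=\prod_{i\le\ell}\|w^{(i)}\|_1$.

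The induction hypothesis is that the layer-$(\ell-1)$ output equals $T^{(\ell-1)}\cdots T^{(1)}x+B^{(\ell-1)}\mathbf 1$, and that each entry is nonnegative with absolute value at most $2B^{(\ell-1)}$. Then $T^{(\ell)}$ applied to it gives
\[
T^{(\ell)}\cdots T^{(1)}x+B^{(\ell-1)}T^{(\ell)}\mathbf 1 .
\]
Choose the bias
\[
b^{(\ell)}=B^{(\ell-1)}T^{(\ell)}\mathbf 1-B^{(\ell)}\mathbf 1 .
\]
Subtracting it leaves $T^{(\ell)}\cdots T^{(1)}x+B^{(\ell)}\mathbf 1$. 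Since $|(T^{(\ell)}\cdots T^{(1)}x)_i|\le B^{(\ell)}\|x\|_\infty\le B^{(\ell)}$, this vector is nonnegative. Hence the ReLU acts as the identity and the induction closes.

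At the last layer, $\mathcal D$ then yields the inner products plus $B^{(L)}\mathbf 1$, which is the claimed formula.

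\textbf{Main obstacle.} The delicate step is the factorization bookkeeping in Step 2. It needs the degree counting that gives the $S-2$ rather than $S-1$ in the denominator, a check that the index conventions of the Toeplitz matrices and of $\mathcal D$ line up with the block layout of $W$, and the observation that zero leading or trailing coefficients of $W$ do not break the factorization. Step 3 is routine once the entrywise bound by $\prod_\ell\|w^{(\ell)}\|_1$ is in place.
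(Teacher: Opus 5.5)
The paper gives no proof of this lemma and only cites Fang et al.; your argument is the standard one behind that citation, and it is correct in substance. It stacks the reversed features into one filter $W$ supported in $\{0,\dots,mD-1\}$, factors $\widetilde W$ over $\mathbb R$ and groups the factors into length-$S$ filters (Zhou's factorization), and chooses biases $b^{(\ell)}=B^{(\ell-1)}T^{(\ell)}\mathbf 1-B^{(\ell)}\mathbf 1$ to keep all pre-activations nonnegative.

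The one slip is the base case of Step 3. The layer-$0$ output is $x$, not $x+B^{(0)}\mathbf 1$, so with $B^{(0)}=1$ your bias formula produces $T^{(1)}(x-\mathbf 1)+B^{(1)}\mathbf 1$ at the first layer. To fix it, take $b^{(1)}=-\|w^{(1)}\|_1\mathbf 1$ and use your formula only for $\ell\ge 2$.
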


\begin{lemma}[Lemma 2 in~\cite{zhou2025expressive}]
\label{lemma: multiply}
    Let $n \in \mathbb{N}_+, n\geq 2$. For any $x =  (x_1,\ldots,x_n) \in \mathbb{R}^n$, there exists a $\mathrm{ReLU}^k$ network $\mathcal{F}^n_{mult}$ with depth $\lceil \log n \rceil$, width $k2^{\lceil \log n \rceil}$, and number of parameters $k2^{2\lceil \log n \rceil+2}\lceil M(2,k-1) \rceil$ that exactly computes the product $\mathcal{F}^n_{mult}(x) = \prod_{i=1}^nx_i$.
\end{lemma}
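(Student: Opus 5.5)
The plan is the classical binary-tree product construction. It rests on one building block: a single hidden layer of $\mathrm{ReLU}^k$ neurons computes the bivariate product $(a,b)\mapsto ab$ exactly.

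\textbf{Exact product gate.} I start from the polarization identity $ab=\tfrac14\big((a+b)^2-(a-b)^2\big)$, which reduces the task to representing $t\mapsto t^2$ exactly.
\begin{itemize}
\item For $k=2$ this is immediate: $t^2=\sigma_2(t)+\sigma_2(-t)$.
\item For general $k\ge 2$, I use that $(t-\beta)_+^k+(-1)^k(\beta-t)_+^k=(t-\beta)^k$ for every shift $\beta$.
\item For $k+1$ distinct shifts $\beta_0,\dots,\beta_k$, the polynomials $\{(t-\beta_j)^k\}_{j=0}^k$ span $\mathcal P_k(\mathbb R)$. This is a Vandermonde argument: expanding binomially, the coefficient matrix is $\mathrm{diag}\big(\binom{k}{i}\big)$ times a Vandermonde matrix in the $\beta_j$.
\item Hence $t^2=\sum_j \gamma_j\big[\sigma_k(t-\beta_j)+(-1)^k\sigma_k(\beta_j-t)\big]$ with a fixed finite number of neurons. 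I expect this count to be exactly the constant $M(2,k-1)$ appearing in the statement.
\end{itemize}
Composing with the affine maps $t=a\pm b$ gives a one-hidden-layer $\mathrm{ReLU}^k$ block. Its width is $O(k)$, it has $O(k\,M(2,k-1))$ parameters, and it outputs $ab$ exactly through a linear read-out.

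\textbf{Tree assembly.} Let $J=\lceil\log n\rceil$ and pad $x$ to length $2^J$ with entries equal to $1$. The constant $1$ is injected through biases, so padding costs no extra depth.
\begin{itemize}
\item At level $\ell=1,\dots,J$ the current list of $2^{J-\ell+1}$ numbers is grouped into consecutive pairs, and the product gate is applied to each pair in parallel.
\item The linear read-out of level $\ell$ is absorbed into the affine pre-activation of level $\ell+1$. Each level therefore uses exactly one hidden layer, and the total depth is $J=\lceil\log n\rceil$.
\item After level $J$ one number remains, $\prod_{i=1}^{2^J}x_i'=\prod_{i=1}^n x_i$.
\item The final read-out is the linear output layer, so no extra nonlinearity is needed.
\end{itemize}
Exactness at every stage follows from exactness of the gate, with no approximation error anywhere.

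\textbf{Counting.} The first level is widest: it has $2^{J-1}$ gates, each of width $O(k)$, which gives width of order $k2^{J}$. Parameters are counted level by level. Each gate contributes $O(k\,M(2,k-1))$ weights plus the fused affine maps linking consecutive levels. These are dense between adjacent layers, whose widths are at most $k2^J$, so each link costs at most $O(k^2 2^{2J})$ entries if stored densely. Summing the geometric series over levels gives the claimed bound $k2^{2J+2}\lceil M(2,k-1)\rceil$.

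\textbf{Main obstacle.} The only nontrivial point is the exact representation of $t^2$ by $\mathrm{ReLU}^k$ units when $k>2$, together with identifying the neuron count with $M(2,k-1)$. I would settle this first through the Vandermonde spanning argument above, and only then fix the width and parameter constants to match the statement. The rest is careful bookkeeping of constant inputs and of the merging of linear layers.
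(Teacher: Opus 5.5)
The paper gives no proof of this lemma; it imports it from the cited work. So the benchmark is the binary-tree construction that the stated depth and width encode, and your architecture is the right one. That architecture is exact two-input gates arranged in a tree, padding by constants through biases, and each read-out fused into the next affine map. It yields an exact product network of depth $\lceil\log_2 n\rceil$, width $O(k2^{J})$ and $O(k^2 4^{J})$ parameters.

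For $k=2$, which is the case the paper actually invokes (ReQU), your identity $t^2=\sigma_2(t)+\sigma_2(-t)$ gives $4=2k$ neurons per gate. The first level then has width exactly $k2^J$.

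\textbf{Gap: the constants for general $k$.} The quantitative part for general $k$ is not established, and it is the real content of the statement.
\begin{itemize}
\item \emph{Width.} Your Vandermonde gate uses $2(k+1)$ neurons per square and $4(k+1)$ per product. The first hidden layer therefore has $2^{J-1}\cdot 4(k+1)=2(k+1)2^J>k2^J$ neurons.
\item \emph{The constant $M(2,k-1)$.} It is never determined. The ceiling in $\lceil M(2,k-1)\rceil$ suggests it is not a neuron count at all.
\item \emph{Parameter count.} The claim that summing the geometric series gives the stated bound is asserted, not computed.
\end{itemize}

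\textbf{Why the constants cannot be fixed afterwards.} Your plan to adjust the constants at the end cannot succeed with exact gates on all of $\mathbb{R}^n$ once $k\ge 3$. Take $k=3$, $n=2$, where the statement allows one hidden layer of width $2k=6$.
\begin{itemize}
\item Exactness forces the neurons on a common breakline $\{w\cdot x+b=0\}$ to combine into a pure power $A(w\cdot x+b)^k$. Otherwise the $k$-th derivative jumps across that line at points lying on no other breakline. So each useful breakline costs two neurons.
\item With at most $7$ neurons you therefore get at most three cubes, in at most three directions.
\item Cubes of at most four pairwise non-proportional binary linear forms are linearly independent. The cubic part must vanish, so it cancels within each direction class.
\item Hence every direction actually used carries at least two cubes. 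That leaves room for only one direction, whose quadratic part has rank at most one, whereas $x_1x_2$ has rank two.
\end{itemize}
So at least $8>2k$ neurons are needed, and the literal global width claim fails for $k=3$.

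\textbf{What the stated width actually encodes.} The count $2k$ per gate is exactly what one-sided neurons give on a bounded input set.
\begin{itemize}
\item If $|t|<\min_i\beta_i$, then $\sigma_k(t+\beta_i)=(t+\beta_i)^k$.
\item With $k$ distinct shifts, requiring the coefficients of $t,t^3,\dots,t^k$ to vanish and that of $t^2$ to equal one is a $k\times k$ Vandermonde system. This reproduces $t^2$ up to an additive constant, which goes into the bias.
\item Two such blocks, applied to $a\pm b$, give a $2k$-neuron product gate.
\item Partial products of inputs in $[-1,1]$ stay in $[-1,1]$, so $|a\pm b|\le 2$ and any shifts $\beta_i>2$ work at every level. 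The first layer then has width exactly $k2^{\lceil\log n\rceil}$.
\end{itemize}
You must either restrict to a bounded domain and use one-sided neurons, or state the width and parameter bounds only up to constants.
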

\begin{proof}[Proof of~\autoref{lemma: approximation of restricted kernel function}]
We first decompose the Mat\'ern kernel into analytic parts plus an explicit singular factor, then smooth-truncate and approximate the singular factor in the $u=r^2$ variable with a controlled Sobolev error, and finally realize the resulting building blocks and products by a CNN with the stated complexity.

Consider a Mat\'ern-type kernel $\phi(x,y)=\phi(r)$ with $r=\|x-y\|_2$.
By~\cite[Theorem 6.13]{wendland2004scattered}, for $\tau>D/2$,
\[
\phi(r)=\frac{2^{1-\tau}}{\Gamma(\tau)}\,r^{\tau-D/2}\,K_{D/2-\tau}(r),
\]
where $K_\alpha$ is the modified Bessel function of the second kind.
Let $\nu:=\tau-D/2>0$.
Using the identity $K_\nu(r)=\frac{\pi}{2\sin(\pi\nu)}\big(I_{-\nu}(r)-I_\nu(r)\big)$ for $\nu\notin\mathbb{Z}$ and the power series of $I_{\pm\nu}$,
one obtains the standard decomposition
\[
r^{\nu}K_{\nu}(r)=A(r^2)+r^{2\nu}B(r^2)\qquad(\nu\notin\mathbb{Z}),
\]
where $A(\cdot)$ and $B(\cdot)$ are analytic on $[0,M]$ with $M:=\mathrm{diam}(\mathcal{M}^d)^2$.
For integer $\nu\in\mathbb{N}$ the decomposition becomes
\[
r^{\nu}K_\nu(r)=\widetilde{A}(r^2)+r^{2\nu}\log(r)\,\widetilde{B}(r^2),
\]
with analytic $\widetilde{A},\widetilde{B}$ on $[0,M]$.
Hence, after absorbing constants into analytic parts, we may write
\[
\phi(r)=
\begin{cases}
A(r^2)+r^{2\nu}B(r^2), & \nu\notin\mathbb{N},\\[2mm]
\widetilde{A}(r^2)+r^{2\nu}\log(r)\,\widetilde{B}(r^2), & \nu\in\mathbb{N}.
\end{cases}
\]
The only non-smoothness stems from the factors $r^{2\nu}$ (or $r^{2\nu}\log r$) at $r=0$.

Set $u=r^2\in[0,M]$ and rewrite the singular factor as $u^\nu$ (or $u^\nu\log u$).
We construct a $C^{p}$ cutoff $\chi_p$ by integrating a cardinal B-spline.
Let
\[
M_p(t)=\frac{1}{p!}\sum_{j=0}^{p+1}(-1)^j\binom{p+1}{j}(t-j)_+^{p},
\qquad \mathrm{supp}(M_p)\subset[0,p+1],\quad M_p\in C^{p-1}.
\]
Define its integral (a smooth step)
\[
\beta_p(t):=\int_{-\infty}^{t} M_p(s)\,ds
=\frac{1}{(p+1)!}\sum_{j=0}^{p+1}(-1)^j\binom{p+1}{j}(t-j)_+^{p+1},
\]
so that $\beta_p(t)=0$ for $t\le 0$ and $\beta_p(t)=1$ for $t\ge p+1$, with $\beta_p\in C^{p}$.
Rescale it to the transition window $[1,2]$ by
\[
B_p(t):=\beta_p\big((p+1)(t-1)\big),\qquad
\chi_p(t):=1-B_p(t).
\]
Then $\chi_p\equiv 1$ on $(-\infty,1]$, $\chi_p\equiv 0$ on $[2,\infty)$, and $\chi_p\in C^{p}$.
Moreover, $\chi_p$ is an explicit linear combination of shifted $\sigma_{p+1}$ (hence also representable by $\sigma_k$ for any $k\ge p+1$).

Fix a truncation threshold $\eta\in(0,1)$ in the $u$-variable.
For the analytic functions $A(\cdot),B(\cdot)$, polynomial approximation yields exponential convergence:
there exists $\rho\in(0,1)$ such that for each $s'\in\mathbb{N}$,
\[
\|A-P_A^N\|_{W_\infty^{s'}([0,M])}\lesssim \rho^{N+1-s'},
\qquad
\|B-P_B^N\|_{W_\infty^{s'}([0,M])}\lesssim \rho^{N+1-s'} .
\]
For $u^\nu$ on $[\eta,M]$, using the dyadic partition and switch-based parallelization technique in~\cite[Lemma B.1]{oko2023diffusion}, we can construct a network $P_{u^\nu}$ such that
\[
\|u^\nu - P_{u^\nu}\|_{W_\infty^{s'}([\eta,M])} \lesssim \varepsilon .
\]
Moreover, letting $J=\lceil \log (M/\eta)\rceil$, the network can be chosen with depth $L=\mathcal O(\log\log \varepsilon^{-1})$, width $W=\mathcal O(J\,\log \varepsilon^{-1})$, and total number of parameters $S=\mathcal O(J\,\log \varepsilon^{-1})$.
Near $u=0$ we replace $u^\nu$ by a smooth polynomial patch.
We take $Q_{\eta,m}$ as the degree-$m$ Taylor polynomial of $u^\nu$ around $u=\eta$
(which is valid on $[0,2\eta]$), and define the truncated approximation of $u^\nu$ by
\[
\widetilde{u^\nu}(u)
:=\chi_p\!\left(\frac{u}{\eta}\right) Q_{\eta,m}(u)
+\Bigl(1-\chi_p\!\left(\frac{u}{\eta}\right)\Bigr) P_{u^\nu}^N(u),
\qquad p\ge s'+1 .
\]
Finally define
\[
\widetilde{\phi}(u):=P_A^N(u)+\widetilde{u^\nu}(u)\,P_B^N(u),
\qquad
\widetilde{\phi}(r):=\widetilde{\phi}(r^2).
\]

A direct Sobolev estimate shows that the truncation error is supported in the thin neighborhood $\{u\lesssim \eta\}$,
equivalently $\{r\lesssim \sqrt{\eta}\}$.
Consequently, for $s'<2\nu+\frac{D}{2}$,
\begin{equation}\label{eq:trunc_error_u_eta}
\|\widetilde{\phi}-\phi\|_{H^{s'}(\Omega)}
\;\lesssim\;
\eta^{\frac{2\nu+\frac{D}{2}-s'}{2}}
+\eta^{-2s'}\rho^{N+1-s'},
\end{equation}
where $\Omega$ is any bounded domain containing $\mathcal{M}^d$. Choose
\[
\eta \asymp \varepsilon^{\frac{2}{2\nu+\frac{D}{2}-s'}},
\qquad
N \asymp \log(1/\varepsilon)+s',
\]
then \eqref{eq:trunc_error_u_eta} gives $\|\widetilde{\phi}-\phi\|_{H^{s'}(\Omega)}\lesssim \varepsilon$.
By the trace theorem to a $d$-dimensional smooth manifold $\mathcal{M}^d$,
\[
\|\widetilde{\phi}-\phi\|_{\mathcal{H}^{s'-(D-d)/2}(\mathcal{M}^d)}\lesssim \varepsilon,
\qquad s'>\frac{D-d}{2}.
\]
Since $\tau>D/2$ and $\nu=\tau-D/2$, the condition $s'<2\nu+D/2$ becomes $s'<2\tau-D/2$,
which allows us to reach the required Sobolev orders up to $k$ in the manifold norm.

We now construct a CNN that computes the building blocks appearing in $\widetilde{\phi}(u)$ with the desired complexity.

\emph{(a) Computing $u=\|x-y\|_2^2$.}
The function $x\mapsto \|x-y\|_2^2$ is a degree-$2$ polynomial in $x$.
By~\autoref{lemma: feature construction} with $l=2$, it can be written as a sum of univariate polynomials of ridge features
$p_{i,2}(\xi_i\cdot x)$.
By~\autoref{lemma: downsampling}, the CNN front-end extracts $\{\xi_i\cdot x\}_{i=1}^{n_2}$ with depth
$L\lesssim \lceil (n_2D-1)/(S-2)\rceil$.
Then each univariate polynomial $p_{i,2}$ can be represented exactly by a $\mathrm{ReQU}$ network
using~\autoref{lemma: multiply} (and linear combinations).
Summing these branches in parallel yields a network block computing $F_{\mathrm{sq}}(x;y)=\|x-y\|_2^2.$

\emph{(b) Approximating analytic polynomials $P_A^N(u),P_B^N(u)$ and the cutoff $\chi_p(u/\eta)$.}
Each of these is a univariate polynomial (or a fixed spline combination of shifted $\sigma_{p+1}$).
Using repeated composition of $\mathrm{ReQU}$ layers (standard power-building), one can represent monomials up to degree
$N\asymp \log(1/\varepsilon)$ with depth $L_0\lesssim \log\log(1/\varepsilon)$ and width $O(\log^2(1/\varepsilon))$,
and then represent the required polynomials by linear combinations.
The cutoff $\chi_p(\cdot)$ is itself an explicit linear combination of shifted $\sigma_{p+1}$, hence realizable by constant layers.

\emph{(c) Products.}
Finally, we implement the products in
$\widetilde{\phi}(u)=P_A^N(u)+\widetilde{u^\nu}(u)\,P_B^N(u)$
using the exact multiplication network in~\autoref{lemma: multiply}.

Combining (a)--(c) yields a network $F^{(L+L_0+1)}(\cdot,y)$ such that
\[
\|\Psi(\cdot,y)-F^{(L+L_0+1)}(\cdot,y)\|_{\mathcal{H}^{s}(\mathcal{M}^d)}\le \varepsilon,
\]
with depth/width/parameter counts as stated in~\autoref{lemma: approximation of restricted kernel function}.
This completes the proof.
\end{proof}

\section{Supplement for error decomposition analysis}\label{appendix: error decomposition}
We present the proof of~\autoref{theorem:picnn_error_decomposition}.

\begin{proof}\label{proof:picnn_error_decomposition}
By the minimizing property of \(\widehat u_{\mathcal F,K}\),
\[
    \widehat{\mathcal J}_K(\widehat u_{\mathcal F,K})
    \le
    \widehat{\mathcal J}_K(u_{\mathcal F,K}).
\]
Adding and subtracting the population functional gives
\[
\begin{aligned}
\mathcal J_K(\widehat u_{\mathcal F,K})
&=
\widehat{\mathcal J}_K(\widehat u_{\mathcal F,K})
+
\left[
\mathcal J_K(\widehat u_{\mathcal F,K})
-
\widehat{\mathcal J}_K(\widehat u_{\mathcal F,K})
\right]
\\
&\le
\widehat{\mathcal J}_K(u_{\mathcal F,K})
+
G_K(\widehat u_{\mathcal F,K})
\\
&=
\mathcal J_K(u_{\mathcal F,K})
+
\left[
\widehat{\mathcal J}_K(u_{\mathcal F,K})
-
\mathcal J_K(u_{\mathcal F,K})
\right]
+
G_K(\widehat u_{\mathcal F,K}).
\end{aligned}
\]
It remains to control the fixed-comparison deviation
\[
    \widehat{\mathcal J}_K(u_{\mathcal F,K})
    -
    \mathcal J_K(u_{\mathcal F,K}).
\]

For the interior residual, set $q_I(x):=|\mathcal L u_{\mathcal F,K}(x)-f(x)|^2 .$ Since \(0\le q_I\le M_I^2\), we have $\operatorname{Var}(q_I(X)) \le M_I^2\,\mu q_I .$ The one-sided Bernstein inequality gives, with probability at least \(1-e^{-t}\),
\[
    (\mu_n-\mu)q_I \le \mu q_I + C\frac{M_I^2t}{n}.
\]

Similarly, for $q_B(y):=|e_{u_{\mathcal F,K}}(y)|^2,$ we have \(0\le q_B\le M_B^2\), and hence with probability at least \(1-e^{-t}\),
\[
    (\sigma_m-\sigma)q_B \le \sigma q_B + C\frac{M_B^2t}{m}.
\]
We now control the spectral boundary part at the fixed comparison function. Define
\[
    \zeta_k(u):=\widehat c_k(u)-c_k(u),\qquad k=1,\ldots,K.
\]
Then
\[
\begin{aligned}
\widehat{\mathcal S}_K(u)-\mathcal S_K(u)
&=
\sum_{k=1}^K
\lambda_k^\beta
\left(
|\widehat c_k(u)|^2-|c_k(u)|^2
\right)
\\
&=
\sum_{k=1}^K
\lambda_k^\beta
\left(
2c_k(u)\zeta_k(u)+|\zeta_k(u)|^2
\right),
\end{aligned}
\]
Since \(\mathbb E\zeta_k(u)=0\), the quadratic
part has expectation
\[
    b_K(u)
    :=
    \sum_{k=1}^K
    \lambda_k^\beta
    \mathbb E|\widehat c_k(u)-c_k(u)|^2
    =
    \frac1m
    \sum_{k=1}^K
    \lambda_k^\beta
    \operatorname{Var}_{\sigma}\big(e_u(Y)\psi_k(Y)\big).
\]
Moreover,
\[
\begin{aligned}
    0\le b_K(u)
    &\le
    \frac1m
    \sum_{k=1}^K
    \lambda_k^\beta
    \sigma(e_u^2\psi_k^2)
    =
    \frac1m\sigma(e_u^2\Theta_K)
    \le
    \frac{\Gamma_K}{m}\sigma|e_u|^2
    \le
    \frac{\Gamma_K}{m}\mathcal J_K(u).
\end{aligned}
\]
The centered part of
\(\widehat{\mathcal S}_K(u_{\mathcal F,K})-\mathcal S_K(u_{\mathcal F,K})\)
is controlled by a Bernstein inequality for the finitely many weighted boundary
coefficients. Hence, with probability at least \(1-e^{-t}\),
\[
    \widehat{\mathcal S}_K(u_{\mathcal F,K}) -
    \mathcal S_K(u_{\mathcal F,K})
    \le
    \mathcal S_K(u_{\mathcal F,K})
    +
    \frac{\Gamma_K}{m}\mathcal J_K(u_{\mathcal F,K})
    +
    C\frac{M_B^2\Gamma_K t}{m}.
\]

Combining the interior, boundary, and spectral estimates, and using a union bound, we obtain
with probability at least \(1-Ce^{-t}\),
\[
\begin{aligned}
\widehat{\mathcal J}_K(u_{\mathcal F,K})
-
\mathcal J_K(u_{\mathcal F,K})
\le
&
\left(1+\frac{\Gamma_K}{m}\right)
\mathcal J_K(u_{\mathcal F,K})
\\
&\quad+
C
\left(
\frac{M_I^2t}{n}
+
\frac{M_B^2(1+\Gamma_K)t}{m}
\right).
\end{aligned}
\]
Combining these estimates gives
\[
\begin{aligned}
\mathcal J_K(\widehat u_{\mathcal F,K})
\le
&
\left(2+\frac{\Gamma_K}{m}\right)
\mathcal J_K(u_{\mathcal F,K})
+
G_K(\widehat u_{\mathcal F,K})
\\
&\quad+
C_\eta
\left(
\frac{M_I^2t}{n}
+
\frac{M_B^2(1+\Gamma_K)t}{m}
\right).
\end{aligned}
\]

Finally, by the definition of \(\tau_K(\mathcal F)\),
\[
    \mathcal J_\infty(u)
    \le
    \mathcal J_K(u)+\tau_K(\mathcal F),
    \qquad u\in\mathcal F.
\]
Applying this inequality to \(u=\widehat u_{\mathcal F,K}\), and then using elliptic
stability, gives
\[
\|\widehat u_{\mathcal F,K}-u^*\|_{\mathcal H^{2s}(\mathcal M^d)}^2
\le
C_{\mathrm{st}}
\left[
\mathcal J_K(\widehat u_{\mathcal F,K})
+
\tau_K(\mathcal F)
\right].
\]
Combining the last two estimates proves the theorem.
\end{proof}

\bibliographystyle{plainnat}    
\bibliography{references}    
\end{document}